\theoremstyle{plain}
\newtheorem{theorem}{Theorem}[section]
\newtheorem{proposition}[theorem]{Proposition}
\newtheorem{lemma}[theorem]{Lemma}
\theoremstyle{definition}
\newtheorem{definition}[theorem]{Definition}
\theoremstyle{remark}
\icmltitlerunning{Contextual Linear Bandits with Delay as Payoff}
\newif\ifspacehack
\renewcommand{\tilde}{\widetilde}
\renewcommand{\hat}{\widehat}
\def \R {\mathbb{R}}
\newcommand{\calA}{{\mathcal{A}}}
\newcommand{\calX}{{\mathcal{X}}}
\newcommand{\calS}{{\mathcal{S}}}
\newcommand{\calI}{{\mathcal{I}}}
\newcommand{\calU}{{\mathcal{U}}}
\newcommand{\calT}{{\mathcal{T}}}
\newcommand{\calP}{{\mathcal{P}}}
\newcommand{\Reg}{{\mathrm{Reg}}}
\newcommand{\Alg}{{\mathsf{Alg}}}
\newcommand{\nctx}{\text{n-ctx}}
\newcommand{\E}{{\mathbb{E}}}
\newcommand{\gup}[1]{g^{(#1)}}
\newcommand{\inner}[1]{ \left\langle {#1} \right\rangle }
\newcommand{\order}{\mathcal{O}}
\newcommand{\term}[1]{\texttt{Term} ~(#1)\xspace}
\newcommand{\Err}[1]{\textsc{Err-Term}(#1)\xspace}
\newcommand{\regnctx}{\textsc{Reg-NCTX}\xspace}
\DeclareMathOperator*{\argmin}{argmin}
\DeclareMathOperator*{\argmax}{argmax}
\newcommand{\UCB}{{\operatorname{UCB}}}
\newcommand{\LCB}{{\operatorname{LCB}}}
\newcommand{\sgn}{\mbox{\sc sgn}}
\newcommand{\wh}{\widehat}
\newtheorem{event}{Event}
\newcommand{\otil}{\ensuremath{\tilde{\mathcal{O}}}}
\newcommand{\dist}{\calP}
\renewcommand{\tilde}{\widetilde}
\renewcommand{\hat}{\widehat}
\newcommand{\obs}{O}
\newcommand{\unobs}{E}
\newcommand{\unbiasSize}{c}
\newcommand{\unbias}{C}
\newcommand{\cnt}{k}
\def \E {\mathbb{E}}
\def \R {\mathbb{R}}
\newtcolorbox{promptbox}[1][]{
  colback=blue!5!white, colframe=blue!75!black,
  fonttitle=\bfseries, title=Prompt,
  left=2mm, right=2mm, top=2mm, bottom=2mm,
  boxrule=0.5mm,  % Thickness of the frame
  coltitle=black, % Color of the title text
  colbacktitle=blue!15!white, % Background color of the title
  breakable,      % Allows the box to break across pages
  #1
}
\definecolor{wine_red}{RGB}{228,48,64}
\definecolor{DSgray}{cmyk}{0,1,0,0}
\newcommand{\pref}[1]{\prettyref{#1}}
\newcommand{\savehyperref}[2]{\texorpdfstring{\hyperref[#1]{#2}}{#2}}
\def \epsilon {\varepsilon}
\newcommand{\rad}{\mathsf{rad}}
\begin{document}

\twocolumn[
\icmltitle{Contextual Linear Bandits with Delay as Payoff}
\icmlsetsymbol{equal}{*}

\begin{icmlauthorlist}
\icmlauthor{Mengxiao Zhang}{a}
\icmlauthor{Yingfei Wang}{b}
\icmlauthor{Haipeng Luo}{c}
\end{icmlauthorlist}

\icmlaffiliation{a}{University of Iowa}
\icmlaffiliation{b}{University of Washington}
\icmlaffiliation{c}{University of Southern California}
%\icmlaffiliation{comp}{Company Name, Location, Country}
%\icmlaffiliation{sch}{School of ZZZ, Institute of WWW, Location, Country}

%\icmlcorrespondingauthor{Firstname2 Lastname2}{first2.last2@www.uk}

% You may provide any keywords that you
% find helpful for describing your paper; these are used to populate
% the "keywords" metadata in the PDF but will not be shown in the document
\icmlkeywords{Machine Learning, ICML}

\vskip 0.3in
]
\printAffiliationsAndNotice{}

\begin{abstract}
A recent work by \citet{schlisselberg2024delay} studies a delay-as-payoff model for stochastic multi-armed bandits, where the payoff (either loss or reward) is delayed for a period that is proportional to the payoff itself.
While this captures many real-world applications, the simple multi-armed bandit setting limits the practicality of their results.
In this paper, we address this limitation by studying the delay-as-payoff model for contextual linear bandits.
Specifically, we start from the case with a fixed action set and propose an efficient algorithm whose regret overhead compared to the standard no-delay case is at most
$D\Delta_{\max}\log T$, where $T$ is the total horizon, $D$ is the maximum delay, and $\Delta_{\max}$ is the maximum suboptimality gap. 
When payoff is loss, we also show further improvement of the bound, demonstrating a separation between reward and loss similar to \citet{schlisselberg2024delay}.
Contrary to standard linear bandit algorithms that construct least squares estimator and confidence ellipsoid, the main novelty of our algorithm is to apply a phased arm elimination procedure by only picking actions in a \emph{volumetric spanner} of the action set, which addresses challenges arising from both payoff-dependent delays and large action sets.
We further extend our results to the case with varying action sets by adopting the reduction from~\citet{hanna2023contexts}.  
Finally, we implement our algorithm and showcase its effectiveness and superior performance in experiments.
\end{abstract}

\section{Introduction}\label{sec: intro}
Stochastic multi-armed bandit (MAB)  is a well-studied theoretical framework for sequential decision making. In recent years, considerable investigation has been given to the realistic situations where the agent observes the payoff (either reward or loss) of an arm
only after a certain delayed period of time. However, most work assumes that the delays are {\it payoff-independent}. Namely, while the delay may depend on the chosen arm, it is sampled independently from the stochastic payoff of the chosen arm.

\citet{lancewicki2021stochastic} address this limitation by studying a setting where the delay and the reward are drawn together from a joint distribution.  
Later, \citet{tang2024stochastic} consider a special case where the delay is exactly the reward. Their motivation stems from response-adaptive clinical trials that aim at maximizing survival outcomes. For example, progression-free survival (PFS)—defined as the number of days after treatment until disease progression or death—is widely used to evaluate the effectiveness of a treatment. Notably, in this context, the ``delays'' in observing the PFS are the PFS itself. \citet{schlisselberg2024delay}  builds on and refines this investigation,  extending the study to the case where delay is the loss itself. Taken together, this delay-as-payoff framework effectively captures many real-world scenarios involving time-to-event data across many domains. For example, postoperative length of stay (PLOS) is one example of time-to-event data that  specifies the length of stay after surgery. Potential surgical procedures and postoperative care can be modeled as arms.  The delay—defined as the time until the patient is discharged—can be interpreted as the loss that we aim to minimize. 
As another example, in advertising, common metrics, including Average Time on Page (ATP) and Time to Re-engagement (that
tracks the time elapsed between a user’s initial interaction with an ad and subsequent engagement such as returning to the website), can be modeled as reward or loss inherently delayed by the same duration.

Despite such recent progress, the current consideration of {\it payoff-dependent} delay remains limited to the simple multi-armed bandit (MAB) setting. While MAB frameworks are foundational in decision-making problems, they have notable practical limitations. Specifically, they fail to account for the influence of covariates that drive heterogeneous responses across different actions. This makes them less suitable for scenarios involving a large number of  (potentially dynamically changing) actions and/or situations where context is crucial in shaping outcomes.

\paragraph{Contributions.} 
Motivated by this limitation, in this work, we extend the delay-as-payoff model from MAB to contextual linear bandits, a practical framework that is widely used in real-world applications.  Specifically, our contributions are as follows.
\begin{itemize}[leftmargin=*]
    \item As a first step, in \pref{sec: linear}, we study stochastic linear bandits with a fixed action set (known as the non-contextual setting).
    We point out the difficulty of directly combining the standard \texttt{LinUCB} algorithm with the idea of~\citet{schlisselberg2024delay},
    and propose a novel phased arm elimination algorithm that  only selects actions from a \emph{volumetric spanner} of the action set.
    In the delay-as-loss case, we prove that, compared to the standard regret in the delay-free setting, the overhead caused by the payoff-dependent delay for our algorithm is 
    $\order(\min\{nd^\star
    \log(T/n)+D\Delta_{\max}, D\Delta_{\max}\log(T/n)\})$, where $n$ is the dimension of the action set, $T$ is the total horizon, $\Delta_{\max}$ is the maximum suboptimality gap,
    $d^\star$ is the expected delay of the optimal action,
    and $D$ is the maximum possible delay
    (formal definitions are deferred to \pref{sec: pre}). 
    This instance-dependent bound is in the same spirit as the one of~\citet{schlisselberg2024delay} and is small whenever the optimal action has a small loss.
    In the delay-as-reward case, a slightly worse bound is provided in \pref{app: reward}; such a separation between loss and reward is similar to the results of~\citet{schlisselberg2024delay}.
    
    \item Next, in \pref{sec: contextual}, we extend our results to the contextual case where the action set is varying and drawn i.i.d. from an unknown distribution. 
    Using a variant of our non-contextual algorithm (that can handle misspecification) as a subroutine,    
    we apply the contextual to non-contextual reduction recently proposed by \citep{hanna2023contexts} and show that the resulting algorithm enjoys a similar regret guarantee despite having varying action sets, establishing the first regret guarantee for contextual linear bandits with delay-as-payoff.

    \item In \pref{sec: experiment}, we implement our algorithm and test it on synthetic linear bandits instances, demonstrating its superior performance against a baseline that runs \texttt{LinUCB} with only the currently available feedback.
\end{itemize}

\paragraph{Related works.} 
Recent research has investigated different problems of learning under bandit feedback with delayed payoff, addressing various new challenges caused by the combination of delay and bandit feedback. As mentioned, most studies assume {\it payoff-independent} delays. Among this line of research, \citet{Dudk2011EfficientOL} is among the first to consider delays in stochastic MAB with a constant delay. \citet{mandel2015queue} and \citet{joulani2013online} extend the consideration to stochastic delay, with the assumption that the delay is bounded.

Subsequent studies on  i.i.d. stochastic delays differentiate between {\it arm-independent} and {\it arm-dependent} delays. For {\it arm-independent} delays, \citet{zhou2019learning,vernade2020linear,blanchet2024delay-contextual-linear} shows regret characterizations for (generalized) linear stochastic contextual bandits.
\citet{pike2018bandits} considers  aggregated anonymous feedback, under the assumption that the expected delay is bounded and known to the learner. {\it Arm-dependent} stochastic delays have been explored in various settings, including \citet{gael2020stochastic,arya2020randomized,lancewicki2021stochastic}.

Far less attention has been given to {\it payoff-dependent} stochastic delays. The setting in
\citet{vernade:hal-01545667} implies a dependency between the reward and the delay, as a current non-conversion could be the result of a delayed reward of 1.  \citet{lancewicki2021stochastic} considers the case where the stochastic delay in each round and the reward are drawn from a general joint distribution. 
\citet{tang2024stochastic} investigates strongly reward-dependent delays, specifically motivated by medical settings where the delay is equal to the reward. \citet{ schlisselberg2024delay} follows this investigation and extends the discussion to  delay as loss, and provides a tighter regret bound. Although with a slightly different focus, \citet{thune2019nonstochastic, zimmert2020optimal,gyorgy2021adapting, van2022nonstochastic,van2023unified} and several other works study non-stochastic bandits, where  both the delay and rewards are adversarial. 

Nevertheless, the {\it payoff-dependent} (either loss or reward) delays are only studied under stochastic multi-armed bandits (MAB).  In this work, we extend the study to contextual linear bandits, significantly broadening its practicality.

\section{Preliminary}\label{sec: pre}
Throughout this paper, we use $[N]$ to denote $\{1,2,\dots,N\}$ for some positive integer $N$. Let $\R^n_+$ be the $n$-dimensional Euclidean space in the positive orthant and $\mathbb{B}_2^n(1)=\{v \in \R^n:~\|v\|_2\leq 1\}$ be the $n$-dimensional unit ball with respect to $\ell_2$ norm. Define $\calU[a,b]$ to be the uniform distribution over $[a,b]$. For a real number $a$, define $\sgn(a)$ as the sign of $a$. For two real numbers $a$ and $b$, define $a\vee b\triangleq \max\{a,b\}$. For a finite set $\calS$, denote $|\calS|$ as the cardinality of $\calS$. The notation $\otil(\cdot)$ hides all logarithmic dependencies.

In this paper, we consider the delay-as-payoff model proposed by \citet{schlisselberg2024delay}, in which the delay of the payoff is proportional to the payoff itself. Specifically, we study stochastic linear bandits in this model, and we start with a fixed action set as the first step (referred to as the non-contextual case) and then move on to the case with a time-varying action set (referred to as the contextual case). For conciseness, we mainly discuss the payoff-as-loss case, but our algorithm and results can be directly extended to the payoff-as-reward case (see \pref{app: reward}).

\textbf{Non-contextual stochastic linear bandits.} In this problem, the learner is first given a \emph{fixed} finite set of actions $\calA\subset \R_+^n\cap\mathbb{B}_2^n(1)$ with cardinality $|\calA|=K$. Let $D>0$ be the maximum possible delay. At each round $t\in [T]$, the learner selects an action $a_t\in \calA$ and incurs a loss $u_t=\mu_{a_t}+\eta_t\in [0,1]$ where $\eta_t$ is zero-mean random noise, $\mu_{a}=\inner{a,\theta}$ is the expected payoff of action $a$, and $\theta\in \R_+^n\cap\mathbb{B}_2^n(1)$ is the model parameter that is unknown to the learner.\footnote{We enforce both $\calA\subset \R_+^n\cap\mathbb{B}_2^n(1)$ and $\theta\in \R_+^n\cap\mathbb{B}_2^n(1)$ to make sure that the payoff (and hence the delay) is non-negative.}  Then, the loss is received by the learner at the end of round $\lceil t+d_t \rceil$ where the delay $d_t=D\cdot u_t$ (that is, proportional to the loss).  
The goal of the learner is to minimize the (expected) pseudo regret defined as follows:
\begin{align}\label{eqn:reg_linear_reward}
    \Reg \triangleq \E\left[\sum_{t=1}^T\inner{a_t,\theta}\right] - T\cdot \min_{a\in \calA}\inner{a,\theta}.
\end{align}
Let $a^\star \in \argmin_{a\in \calA}\inner{a,\theta}$ be an optimal action, $\mu^\star = \mu_{a^\star}$ be its expected loss,
and $d^\star = D\mu^\star$ be its expected delay.
For an action $a$, define $\Delta_a = \inner{a-a^\star,\theta}$ as its sub-optimality gap. Further define $\Delta_{\min} = \min_{a\in \calA, \Delta_a>0}\Delta_a$ and $\Delta_{\max} = \max_{a\in \calA}\Delta_a$ to be the minimum and maximum non-zero sub-optimality gap respectively. 

We point out that the standard multi-armed bandit (MAB) setting considered in~\citet{schlisselberg2024delay} is a special case of our setting with $\calA$ being the set of all standard basis vectors in $\R^n$.

\textbf{Contextual stochastic linear bandits.} In the contextual case, the main difference is that the action set is not fixed but \emph{changing over rounds}. Specifically, at each round $t$, the learner first receives an action set $\calA_t\subset \R_+^n\cap \mathbb{B}_2^n(1)$ (which can be seen as a context), where we assume that $\calA_t$ is drawn i.i.d. from an unknown distribution $\dist$. The rest of the protocol remains the same, and the goal of the learner is still to minimize the (expected) pseudo regret, defined as:
\begin{align*}
    \Reg\triangleq \E
    \left[\sum_{t=1}^T\inner{a_t,\theta} - \sum_{t=1}^T\min_{a_t^\star\in\calA_t}\inner{a_t^\star,\theta}\right],
\end{align*}
where the expectation is taken over both the internal randomness of the algorithm and the external randomness in the action sets and loss noises.

\section{First Step: Non-Contextual Linear Bandits}\label{sec: linear}

In this section, we focus on the non-contextual case, which serves as a building block for eventually solving the contextual case. Before introducing our algorithm, we first briefly introduce the successive arm elimination algorithm for the simpler MAB setting proposed by \citet{schlisselberg2024delay} and their ideas of handling payoff-dependent delay. Specifically, their algorithm starts with a guess $B=1/D$ on the optimal action's loss, and maintains an active set of arms. The algorithm pulls each arm in the active set once, and constructs two LCB's (lower confidence bound) and one UCB (upper confidence bound) for each action in the active set as follows (supposing the current round being $t$):
\begin{align}
        \LCB_{t,1}(a) &= \frac{1}{\cnt_t(a)}\sum_{\tau\in\obs_t(a)}u_\tau - \sqrt{\frac{2\log T}{\cnt_t(a)}}, \label{eqn:lcb-1-mab}\\
        \LCB_{t,2}(a) &= \frac{1}{\unbiasSize_t(a)}\sum_{\tau\in\unbias_t(a)}u_{\tau} - \sqrt{\frac{2\log T}{\unbiasSize_t(a)\vee 1}}, \label{eqn:lcb-2-mab}\\
        \UCB_{t}(a) &= \frac{1}{\unbiasSize_t(a)}\sum_{\tau\in\unbias_t(a)}u_{\tau} + \sqrt{\frac{2\log T}{\unbiasSize_t(a)\vee 1}},\label{eqn:ucb-mab}
\end{align}
where $\cnt_t(a) = \sum_{\tau=1}^t\mathbbm{1}\{a_t=a\}$ is the total number of pulls of action $a$ till round $t$, $\obs_t(a) = \{\tau: \tau+d_{\tau}\leq t \text{~and~} a_{\tau}=a\}$ is the set of rounds where action $a$ is chosen and its loss has been received by the end of round $t$, $\unbias_t(a) = \{\tau \leq t-D: a_\tau = a\}$ is the set of rounds up to $t-D$ where action $a$ is chosen (so its loss has for sure been received by the end of round $t$), 
and $\unbiasSize_t(a)=|\unbias_t(a)|$. Specifically, \pref{eqn:lcb-1-mab} constructs an LCB of action $a$ assuming all the action's unobserved loss to be $0$ (the smallest possible), while \pref{eqn:lcb-2-mab} and \pref{eqn:ucb-mab} construct an LCB and a UCB using only the losses no later than round $t-D$ (which must have been received by round $t$), making the empirical average a better estimate of the expected loss. With $\UCB_t(a)$ and $\LCB_t(a) = \max\{\LCB_{t,1}(a), \LCB_{t,2}(a)\}$ constructed, the algorithm eliminates an action $a$ if its $\LCB_t(a)$ is larger than $\min\{\UCB_t(a'),B\}$ for some $a'$ in the active set. If all the actions are eliminated, this means that the guess $B$ on the optimal loss is too small, and the algorithm starts a new epoch with $B$ doubled.\footnote{In fact, \citet{schlisselberg2024delay} construct yet another LCB based on the number of unobserved losses. We omit this detail since we are not able to use this to further improve our bounds for linear bandits.}

\paragraph{Challenges} However, this approach cannot be directly applied to linear bandits. Specifically, standard algorithms for stochastic linear bandits without delay (e.g., \citet{li2010contextual,abbasi2011improved}) all construct  UCB/LCB for each action by constructing an ellipsoidal confidence set for $\theta$. In the delay-as-payoff model, while it is still viable to construct UCB/LCB similar to \pref{eqn:lcb-2-mab} and \pref{eqn:ucb-mab} via a standard confidence set of $\theta$, it is difficult to construct an LCB counterpart similar to \pref{eqn:lcb-1-mab}.
This is because one action's loss is estimated using observations of all other actions in linear bandits, and naively treating the unobserved loss of one action as zero might not necessarily lead to an underestimation of another action. 

\paragraph{Our ideas} To bypass this barrier, we give up on estimating $\theta$ itself and propose to construct UCB/LCB for each action using the observed losses of the \emph{volumatric spanner} of the action set. A volumetric spanner of an action set $\calA$ is defined such that every action in $\calA$ can be expressed as a linear combination of the spanner. 

\begin{definition}[Volumetric Spanner~\citep{hazan2016volumetric}]\label{def:volume}
Suppose that $\calA = \{a_1, a_2, \dots , a_N\}$ is a set of vectors in $\R^n$. We say $\calS\subseteq \calA$ is a \emph{volumetric spanner} of $\calA$ if for any $a\in \calA$, we can write it as $a=\sum_{b\in \calS}\lambda_b\cdot b$ for some $\lambda\in \R^{|\calS|}$ with $\|\lambda\|_2\leq 1$. 
\end{definition}

Due to the linear structure, it is clear that the loss $\mu_a$ of action $a$ can be decomposed in a similar way as $\sum_{b\in \calS}\lambda_b \mu_b$,
making it possible to estimate every action's loss by only estimating the loss of the spanner.
Moreover, such a spanner can be efficiently computed:
\begin{proposition}[\citet{bhaskara2023tight}]\label{prop:volume}
Given a finite set $\calA$ of size $K$, there exists an efficient algorithm finding a volumetric spanner $\calS$ of $\calA$ with $|\calS|=3n$ within $\order(Kn^3\log n)$ runtime.
\end{proposition}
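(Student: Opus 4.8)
Since the statement is quoted from \citet{bhaskara2023tight}, I would aim to reconstruct its argument rather than invent a new route. The plan is to first translate the combinatorial spanner condition into a purely spectral one, and then settle existence through an exchange (local-search) argument from which the efficient algorithm falls out. For the spectral reformulation, fix a candidate $\calS\subseteq\calA$ and set $V=\sum_{b\in\calS}bb^\top$. Assuming $\calA$ spans $\R^n$ (otherwise I restrict to its span and replace $n$ by the rank), the minimum-$\ell_2$-norm representation of any $a$ as a combination of $\calS$ is $\lambda^\star=B^\top(BB^\top)^{-1}a$, where $B$ has the elements of $\calS$ as columns, and $\|\lambda^\star\|_2^2=a^\top V^{-1}a$. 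Hence $\calS$ is a volumetric spanner if and only if $a^\top V^{-1}a\le 1$ for every $a\in\calA$; that is, every action has leverage score at most one with respect to $V$. This reduces the task to selecting few vectors so that all leverage scores are bounded by $1$.

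For existence I would take $\calS$ of size $3n$ maximizing $\det\!\left(\sum_{b\in\calS}bb^\top\right)$, the ``maximum-volume'' subset that motivates the name, and argue it is a volumetric spanner. Suppose not, so some $a$ has $\alpha:=a^\top V^{-1}a>1$. Since $\sum_{b\in\calS}b^\top V^{-1}b=\mathrm{tr}(V^{-1}V)=n$ while $|\calS|=3n$, some $b^\star\in\calS$ satisfies $\beta:=(b^\star)^\top V^{-1}b^\star\le 1/3$. Swapping $b^\star$ out and $a$ in yields $V'=V-b^\star(b^\star)^\top+aa^\top$, and the rank-two determinant identity gives $\det(V')=\det(V)\left[(1+\alpha)(1-\beta)+\gamma^2\right]$ with $\gamma=a^\top V^{-1}b^\star$. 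Because $(1+\alpha)(1-\beta)\ge\tfrac{2}{3}(1+\alpha)>\tfrac{4}{3}>1$ and $\gamma^2\ge0$, the swap strictly increases the determinant, contradicting maximality. (If $K\le 3n$ I simply take $\calS=\calA$.) This pins down the exact constant $3$ and establishes existence of a spanner of size $3n$.

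For the algorithm, exact determinant maximization is intractable, so instead of computing the global maximizer I would run the exchange step above as a local search: initialize $\calS$ by a greedy maximum-volume basis (repeatedly pick the vector with the largest residual orthogonal component) padded to size $3n$, then perform an improving swap whenever some action has leverage $>1$. The argument above guarantees such a swap exists and multiplies $\det(V)$ by a factor exceeding $4/3$. Since $\mathrm{tr}(V)\le 3n$ forces $\det(V)\le 3^n$, while the greedy initialization keeps $\log(\det(V_0)^{-1})$ controlled, the number of swaps is $\order(n\log n)$; each iteration recomputes all $K$ leverage scores in $\order(Kn^2)$ time by maintaining $V^{-1}$ through rank-one Sherman--Morrison updates, giving total runtime $\order(Kn^3\log n)$.

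The clean and essential part is the exchange inequality, which is a short and self-contained calculation. The delicate part, and where I would most lean on \citet{bhaskara2023tight}, is the efficiency analysis: showing that a good initialization bounds the number of local-search iterations by $\order(n\log n)$ rather than something that degrades with the conditioning of $\calA$, and confirming that maintaining $V^{-1}$ together with all leverage scores stays within the claimed $\order(Kn^3\log n)$ budget. A secondary subtlety is the bookkeeping for rank-deficient or small action sets, which only affects lower-order terms.
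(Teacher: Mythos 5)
The paper offers no proof of this proposition---it is imported verbatim from \citet{bhaskara2023tight}---so there is no internal argument to compare against; judged on its own, your reconstruction is essentially the local-search argument of the cited work and is sound. The two pillars check out: the minimum-norm computation correctly reduces the spanner property to the leverage condition $a^\top V^{-1}a\le 1$ with $V=\sum_{b\in\calS}bb^\top$, and the exchange step is right, since $\sum_{b\in\calS}b^\top V^{-1}b=\mathrm{tr}(I_n)=n$ guarantees some $\beta\le 1/3$, and the rank-two identity $\det(V')=\det(V)\left[(1+\alpha)(1-\beta)+\gamma^2\right]$ then yields a factor $>4/3$. Two small points are worth making explicit: first, any violator $a$ with $a^\top V^{-1}a>1$ necessarily lies outside $\calS$ (since $V\succeq bb^\top$ forces leverage at most $1$ for every $b\in\calS$), so the swap genuinely preserves $|\calS|=3n$; second, your absolute bound $\det(V)\le 3^n$ rests on $\mathrm{tr}(V)\le 3n$, i.e.\ unit-norm actions, which holds in this paper's setting ($\calA\subset\mathbb{B}_2^n(1)$) but not for a generic finite set---the scale-robust route is to bound the ratio $\det_{\max}/\det(V_0)$, e.g.\ via Cauchy--Binet (giving $\det_{\max}\le\binom{3n}{n}\cdot\max_{|T|=n}\det$) combined with the $n!$-type approximation guarantee of the greedy initialization, which is precisely what yields the $\order(n\log n)$ iteration count and hence the $\order(Kn^3\log n)$ runtime. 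You flagged that iteration-count analysis as the part you lean on the citation for, which is fair; nothing in the reconstruction is wrong.
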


\setcounter{AlgoLine}{0}
\begin{algorithm}
\caption{Phased Elimination via Volumetric Spanner for Linear Bandits with Delay-as-Loss}\label{alg:lossLB}

\nl Input: maximum possible delay $D$, action set $\calA$, $\beta>0$. 

\nl Initialization: optimal loss guess $B=1/D$.

\nl Initialization: active action set $\calA_1=\calA$. \label{line: restart}

 \For{$m=1,2,\dots,$}{
    \nl Find $\calS_m=\{a_{m,1},\dots,a_{m,|\calS_m|}\}$, a volumetric spanner of $\calA_m$ with $|\calS_m|= 3n$. \label{line:volume}
    
    \nl Pick each $a\in \calS_m$ $2^m$ times in a round-robin way. \label{line:round-robin}

    \nl Let $\calI_m$ contain all the rounds in this epoch.
    
    \nl For each $a\in \calS_m$, calculate the following quantities: \label{line:spanner-ucb-lcb}
    {\small
    \begin{align}
        &\hat{\mu}_{m}^+(a)=\frac{1}{2^m}\Big(\sum_{\tau\in \obs_m(a)}u_{\tau} + \sum_{\tau\in \unobs_m(a)}1\Big), \label{eqn:mean-up}\\
        &\hat{\mu}_{m}^-(a)=\frac{1}{2^m}\sum_{\tau\in \obs_m(a)}u_{\tau}, \label{eqn:mean-low}\\
        &\hat{\mu}_{m,1}^{+}(a)=\hat{\mu}^{+}_{m}(a)+\frac{\beta}{2^{m/2}}\|a\|_2, \label{eqn:loss-ucb-linear-1}\\
        &\hat{\mu}_{m,1}^{-}(a)=\hat{\mu}^{-}_{m}(a)-\frac{\beta}{2^{m/2}}\|a\|_2,\label{eqn:loss-lcb-linear-1}\\
        &\hat{\mu}_m^{F}(a)=\frac{1}{\unbiasSize_m(a)}\sum_{\tau\in \unbias_m(a)}u_{\tau}, \label{eqn:mean_unbiased}\\
        &\hat{\mu}_{m,2}^{+}(a)=\hat{\mu}_m^F(a)+\frac{\beta}{\sqrt{\unbiasSize_m(a)}}\|a\|_2, \label{eqn:loss-ucb-linear-2}\\
        &\hat{\mu}_{m,2}^{-}(a)=\hat{\mu}_m^F(a)-\frac{\beta}{\sqrt{\unbiasSize_m(a)}}\|a\|_2, \label{eqn:loss-lcb-linear-2}
    \end{align}
    }
    where $\unbiasSize_m(a) = |\unbias_m(a)|$, $\unbias_m(a) = \{\tau\in \calI_m: \tau+D\in\calI_m, a_{\tau}=a\}$, $\obs_m(a) = \{\tau\in \calI_m: \tau+d_{\tau}\in\calI_m, a_{\tau}=a\}$, and
    $\unobs_m(a)= \{\tau\in \calI_m: a_{\tau}=a\}\setminus\obs_m(a)$.

    \For{each $a\in \calA_m$}{
        \nl \label{line: decompose}
        Decompose $a$ as $a=\sum_{i=1}^{|S_m|}\lambda_{m,i}^{(a)}a_{m,i}$ with $\|\lambda_{m}^{(a)}\|_2\leq 1$ and calculate 
        {\small
        \begin{align}
            &\UCB_{m}(a)=\sum_{i=1}^{|\calS_m|}\lambda_{m,i}^{(a)}\cdot\hat{\mu}_{m,2}^{\sgn(\lambda_{m,i}^{(a)})}(a_{m,i}), \label{eqn:loss-ucb-f-all-action} \\
            &\LCB_m(a) = \max_{j\in \{1,2\}}\{\LCB_{m,j}(a)\} \;\;\text{where} \nonumber  \\
            & \LCB_{m,j}(a)=\sum_{i=1}^{|\calS_m|}\lambda_{m,i}^{(a)}\cdot\hat{\mu}_{m,j}^{\sgn(-\lambda_{m,i}^{(a)})}(a_{m,i}),\label{eqn:loss-lcb-all-action}
        \end{align}
        }
    }
    
    \nl Set $\calA_{m+1} = \calA_m$.
    
    \For{$a\in \calA_m$}{
        \nl \label{line:eliminate}  
        \If{$\exists a'\in \calA_m$, s.t. $\LCB_m(a) \geq \min\{\UCB_m(a'),B\} $}
        {
          Eliminate $a$ from $\calA_{m+1}$.
        }
    }
    \nl \If{$\calA_{m+1}=\emptyset$}{
        Set $B\leftarrow 2B$ and go to \pref{line: restart}.
    }
}
\end{algorithm}

Equipped with the concept of volumetric spanner, we are now ready to introduce our algorithm (see \pref{alg:lossLB}). 
Specifically, our algorithm also makes a guess $B$ on the loss of the optimal action. 
With this guess, it proceeds to multiple epochs of arm elimination procedures, with the active action set initialized as $\calA_1 = \calA$.
In each epoch $m$, instead of picking every action in the active set $\calA_m$, we first compute a volumetric spanner $\calS_m$ of $\calA_m$ with $|\calS_m|=3n$ (\pref{line:volume}), which can be done efficiently according to \pref{prop:volume}, 
and then pick each action in the spanner set $\calS_m$ for $2^m$ rounds in a round-robin way (\pref{line:round-robin}).

After that, we calculate two UCBs and two LCBs for actions in the spanner, in a way similar to the simpler MAB setting discussed earlier (\pref{line:spanner-ucb-lcb}).
Specifically, 
the first one is in the same spirit of \pref{eqn:lcb-1-mab}:
we calculate $\hat{\mu}_m^+(a)$ ( $\hat{\mu}_m^-(a)$) as an overestimation (underestimation) of the expected loss of action $a$ by averaging over all observed losses from the rounds in $\obs_m(a)$ as well as the maximum (minimum) possible value of unobserved losses from the rounds in $\unobs_m(a)$; see \pref{eqn:mean-up} and \pref{eqn:mean-low}.
The first UCB (LCB) $\hat{\mu}_{m,1}^+(a)$ ($\hat{\mu}_{m,1}^-(a)$) is then computed based on $\hat{\mu}_m^+(a)$ ($\hat{\mu}_m^-(a)$) by incorporating a standard confidence width $\frac{\beta}{\sqrt{2^m}}\|a\|_2$ for some coefficient $\beta$; see \pref{eqn:loss-ucb-linear-1} and \pref{eqn:loss-lcb-linear-1}.
Then, to compute the second UCB/LCB, which is in the same spirit as \pref{eqn:lcb-2-mab} and \pref{eqn:ucb-mab}, we calculate an unbiased estimation $\hat{\mu}_m^F(a)$ of the expected loss of $a$ by averaging losses from the rounds in $\unbias_m(a)$, that is, all the rounds where the observation must have been revealed; see \pref{eqn:mean_unbiased}.
Note that the number of such rounds, $\unbiasSize_m(a) = |\unbias_m(a)|$, is a fixed number, and thus $\hat{\mu}_m^F(a)$ is indeed unbiased.
Similarly, we incorporate a standard confidence width $\frac{\beta}{\sqrt{c_m(a)}}\|a\|_2$ to arrive at the second UCB $\hat{\mu}_{m,2}^+(a)$ and LCB $\hat{\mu}_{m,2}^-(a)$; see \pref{eqn:loss-ucb-linear-2} and \pref{eqn:loss-lcb-linear-2}.

The next step of our algorithm is to use these UCBs/LCBs for the spanner to compute corresponding UCBs/LCBs for every active action in $\calA_m$ (\pref{line: decompose}). Specifically, for each action $a\in \calA_m$, according to the definition of a volumetric spanner (\pref{def:volume}), we can write $a$ as a linear combination of actions in $\calS_m$: $\sum_{i=1}^{|S_m|}\lambda_{m,i}^{(a)}a_{m,i}$. As mentioned, due to the linear structure of losses, we also have $\mu_a = \sum_{i=1}^{|S_m|}\lambda_{m,i}^{(a)}\mu_{a_{m,i}}$.
Thus, when constructing a UCB (or similarly LCB) for $a$, based on whether $\lambda_{m,i}^{(a)}$ is positive or not, we decide whether to use the UCB or LCB of $a_{m,i}$; see \pref{eqn:loss-ucb-f-all-action}, a counterpart of \pref{eqn:ucb-mab}, and \pref{eqn:loss-lcb-all-action}, a counterpart of \pref{eqn:lcb-1-mab} and \pref{eqn:lcb-2-mab}.\footnote{This also explains why we need $\hat{\mu}_m^+(a)$, a quantity not used in~\citet{schlisselberg2024delay}.}

At the end of an epoch, we eliminate all actions from the active action set if their LCB is either larger than the UCB of certain action or the guess $B$ on the optimal loss  (\pref{line:eliminate}). 
If the active set becomes empty, this means that the guess $B$ is too small, and the algorithm restarts with the guess doubled; 
otherwise, we continue to the next epoch.

\paragraph{Theoretical performance}
We prove the following regret bound for our algorithm. 
\begin{restatable}{theorem}{lossLB}
\label{thm:main-non-contextual}
    \pref{alg:lossLB} with $\beta=\sqrt{2\log(KT^3)}$ guarantees: 
\begin{align*}
        \Reg &\leq \order\left(\min\left\{V_1,V_2\right\}\right) + \log(d^\star)\cdot \order\left( \min\left\{W_1,W_2\right\}\right),
    \end{align*}
    where $V_1=\frac{n^2\log(KT)\log(T/n)\log(d^\star)}{\Delta_{\min}}$, $V_2=n\sqrt{T\log(d^\star)\log(KT)}$, $W_1=nd^\star\log (T/n)+D\Delta_{\max}$, and $W_2=D\Delta_{\max}\log (T/n)$. 
\end{restatable}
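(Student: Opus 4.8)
The plan is to establish a high-probability good event $\calE$ on which all four estimates sandwich the true mean, then analyze the phased elimination epoch by epoch within a fixed guess $B$, and finally sum over the $O(\log d^\star)$ values of $B$. To build $\calE$, I would apply Azuma/Freedman concentration to each spanner element $a\in\calS_m$. The unbiased estimate $\hat{\mu}_m^F$ averages over the \emph{deterministic} set $\unbias_m(a)$ (rounds whose feedback is guaranteed revealed by the epoch's end), so $\hat{\mu}_{m,2}^{\pm}$ sandwich $\mu_a$ up to $\beta\|a\|_2/\sqrt{\unbiasSize_m(a)}$ by a standard argument. For the first-type estimates I would sidestep the selection bias by monotonicity: since $\hat{\mu}_m^-$ drops the non-negative unobserved losses, it never exceeds $\frac{1}{2^m}\sum_{\text{all}}u_\tau$, while $\hat{\mu}_m^+$ overcounts each by at most $1$; hence, on the event that $\frac{1}{2^m}\sum_{\text{all}}u_\tau$ concentrates around $\mu_a$, both $\hat{\mu}_{m,1}^{\pm}$ remain valid. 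The key structural step then propagates these per-spanner guarantees to every $a\in\calA_m$: writing $a=\sum_i\lambda_{m,i}^{(a)}a_{m,i}$ and selecting the UCB/LCB of each $a_{m,i}$ by $\sgn(\lambda_{m,i}^{(a)})$, the error telescopes to $\sum_i|\lambda_{m,i}^{(a)}|$ times the width of $a_{m,i}$, and the volumetric-spanner property $\|\lambda_{m,i}^{(a)}\|_2\le 1$ with $|\calS_m|=3n$ and Cauchy--Schwarz bounds this by $\sqrt{3n}$ times the per-spanner width.

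With $\calE$ in hand I would prove two elimination facts. First, whenever $B\ge\mu^\star$ the optimal arm is never removed, since $\LCB_m(a^\star)\le\mu^\star\le\min\{\UCB_m(a'),B\}$. Second, any surviving $a\in\calA_m$ has gap at most twice the epoch-$(m-1)$ confidence width, because otherwise its $\LCB$ would exceed $\UCB_m(a^\star)$. Because $B$ starts at $1/D$ and doubles, and any guess $B<\mu^\star$ eventually eliminates even $a^\star$ (its $\LCB\to\mu^\star>B$), there are only $O(\log(D\mu^\star))=O(\log d^\star)$ phases; this is the source of the $\log d^\star$ prefactor, and I would bound each phase by the same per-phase estimate and combine.

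Within one phase the per-epoch regret is $\sum_{a\in\calS_m}2^m\Delta_a\le 3n\,2^m\cdot 2W_{m-1}$. Once the epoch length $3n2^m$ exceeds $D$, the unbiased width dominates, $W_{m-1}\asymp\beta\sqrt{n}/\sqrt{2^{m-1}}$, and the geometric sum gives the minimax regret; applying Cauchy--Schwarz across the $O(\log d^\star)$ phases (of lengths $T_k$ with $\sum_k T_k=T$) produces the $\sqrt{\log d^\star}$ factor in $V_2$, while truncating the epoch index at $2^m\asymp\beta^2 n/\Delta_{\min}^2$ and bounding the epoch count by $O(\log(T/n))$ yields $V_1$. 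The delay overhead comes entirely from feedback missing near each epoch boundary. The crude accounting notes that at most $D$ rounds per epoch are unobserved, each costing $\le\Delta_{\max}$, over $\log(T/n)$ epochs and $\log d^\star$ phases, giving $\log d^\star\cdot W_2$. The refined accounting invokes delay-as-payoff: every spanner element has gap $\le 2W_{m-1}$, so its pulls have delay $\approx D\mu\approx d^\star$, and only the $\approx d^\star/(3n)$ pulls per element inside the last $d^\star$ rounds are unobserved; summing the first-type bias $|\unobs_m(a)|/2^m$ through the spanner decomposition and over epochs produces $nd^\star\log(T/n)$, with a one-time $D\Delta_{\max}$ transient for the initial epochs shorter than $D$ where no unbiased estimate exists. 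Taking the better of the two regimes gives $\log d^\star\cdot\min\{W_1,W_2\}$.

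The hardest part will be this delay/bias control. Because $d_\tau=Du_\tau$ is a function of the realized loss, the unobserved set $\unobs_m(a)$ is statistically coupled to the very losses being averaged, so I cannot treat $|\unobs_m(a)|$ as independent of the estimate; I would rely on the one-sided monotonicity above to keep the first-type bounds valid despite this coupling, and then convert the qualitative claim ``small loss $\Rightarrow$ small delay $\Rightarrow$ few unobserved pulls'' into a quantitative high-probability bound on $\sum_{a\in\calS_m}|\unobs_m(a)|$ scaling with $d^\star$ rather than $D$, being careful that before elimination localizes the active set the spanner elements may still carry loss and delay up to $D$ (the origin of the additive $D\Delta_{\max}$). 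Reconciling the first-type interval (biased, using all $2^m$ pulls) with the second-type interval (unbiased, using $\unbiasSize_m(a)=2^m-O(D/n)$ pulls) so that the sharper one is available in each regime, and verifying the crossover at epoch length $\approx D$, is the main technical burden.
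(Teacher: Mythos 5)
Most of your architecture coincides with the paper's proof: the good event built from Azuma plus the monotone sandwich $\hat{\mu}_m^-(a)\leq\hat{\mu}_m(a)\leq\hat{\mu}_m^+(a)$ to sidestep the loss--delay coupling in the biased estimates, the sign-matched propagation through the spanner decomposition with the Cauchy--Schwarz factor $\sqrt{3n}$, the survival of $a^\star$ whenever $B\geq\mu^\star$ and the $O(\log d^\star)$ phase count, the sample-deficit bound $\unbiasSize_m(a)\geq 2^m-\frac{D}{3n}-1$ that yields $W_2$, and the two summations (truncation at $\Delta_a\lesssim\beta\sqrt{n/2^m}$, plus Cauchy--Schwarz across phases) that yield $V_1$ and $V_2$ --- all of this is exactly how the paper proceeds (\pref{lem:high-prob-event}, \pref{lem:end-of-B}, \pref{lem:delta_1_loss_miss}).

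The genuine gap is in your refined accounting for $W_1$. First, the quantitative ingredient you defer must scale with each arm's \emph{own} expected delay: the paper uses $|\unobs_m(a)|\leq\frac{2D\mu_a}{|\calS_m|}+16\log KT+2$ (\pref{eqn:concentr-3}, imported from Lemma C.2 of \citet{schlisselberg2024delay}); your window of ``the last $d^\star$ rounds'' is wrong for spanner elements whose mean exceeds $\mu^\star$, which is the typical situation before elimination localizes the active set. Second, and more fundamentally, your substitute argument ``surviving spanner elements have gap at most twice the previous width, hence delay $\approx d^\star$'' leaks factors when made quantitative: carrying $\mu_{a_{m-1,i}}\leq\mu^\star+O(W_{m-2})$ through the decomposition gives a per-phase overhead of roughly $\sqrt{n}\,d^\star\log(T/n)+D\sqrt{n}\sum_m\min\{\Delta_{\max},\beta\sqrt{n}\,2^{-m/2}\}$, and the second sum is of order $D\sqrt{n}\,\Delta_{\max}\log(n\beta^2/\Delta_{\max}^2)$ --- not the clean additive $D\Delta_{\max}$ in $W_1$ (take $\mu^\star\approx 0$, so $d^\star\approx 0$: the theorem's overhead is $O(D\Delta_{\max})$, while your bound retains extra $\sqrt{n}$ and logarithmic factors, and is not dominated by $W_2$ either). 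The paper avoids this with a self-bounding recursion absent from your sketch (\pref{lem:bound_2_mis} and the proof of \pref{thm:lossLBmis}): the epoch-$m$ delay overhead is bounded by $\frac{36D}{2^{m-1}}L_{\Alg}^{m-1}$, where $L_{\Alg}^{m-1}$ is the algorithm's total expected loss in epoch $m-1$; splitting $L_{\Alg}^{m-1}=(L_{\Alg}^{m-1}-L_{\star}^{m-1})+L_{\star}^{m-1}$, once $2^{m-1}\geq 72D$ the first part is at most half the previous epoch's regret and is \emph{absorbed} by moving it to the left-hand side after summing over epochs, while the second contributes only $O(nd^\star)$ per epoch; the few epochs with $2^m\lesssim D$ are bounded trivially by $D\Delta_{\max}$, and the elimination-by-$B$ criterion contributes the $DB\log(T/n)$ term (which you omit), summing to $O(d^\star\log(T/n))$ over the geometrically growing guesses. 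Without this bootstrap your route does not establish $W_1$ as stated.
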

The first term in the regret bound $\order\left(\min\left\{V_1,V_2\right\}\right)$ is of order $\otil(\min\{\frac{n^2}{\Delta_{\min}}, n\sqrt{T}\})$, which matches the standard regret bound of LinUCB in the case without delay~\citep{abbasi2011improved}.
The second term is the overhead caused by delay and is in the same spirit as the result of~\citet{schlisselberg2024delay}:
focusing only on the part that grows in $T$, 
we see that $W_1$ only depends on $d^\star$, the expected delay of the optimal action (and hence the smallest delay among all actions),
while $W_2$ depends on the maximum possible delay $D$ but scaled by $\Delta_{\max}$, the largest sub-optimality gap.
Therefore, the delay overhead of our algorithm is small when either the shortest delay is small or all actions have similar losses.
We remark again that in the delay-as-reward setting, we obtain similar results; see \pref{app: reward} for details.

\subsection{Analysis}\label{sec: alg}
In this section, we provide a proof sketch of \pref{thm:main-non-contextual}. Detailed proofs are deferred to \pref{app:loss}.

The proof starts by proving that $\UCB_m(a)$ and $\LCB_m(a)$ are indeed valid UCB and LCB respectively for all actions in $\calA_m$. 
This follows from first using standard concentration inequalities to show that $\hat{\mu}_{m,1}^+(a)$ and $\hat{\mu}_{m,2}^+(a)$ ($\hat{\mu}_{m,1}^-(a)$ and $\hat{\mu}_{m,2}^-(a)$) are valid UCBs (LCBs) for each action in the spanner, 
and then generalizing it to every action $a \in \calA_m$ according to its decomposition over the actions in the spanner.

With this property, our analysis then proceeds to control the regret of \pref{alg:lossLB} for each guess of $B$ separately. Let $\calT_B$ be the set of rounds when \pref{alg:lossLB} runs with guess $B$. 
In \textbf{Step 1}, we first show that the use of $\LCB_{m,2}(a)$ and $\UCB_m(a)$ ensures a regret bound of $\order\left(\min\{R_1,R_2\}+D\Delta_{\max}\log(T/n)\right)$ where $R_1=\frac{n^2\log(KT)\log(T/n)}{\Delta_{\min}}$ and $R_2=n\sqrt{|\calT_B|\log(KT)}$,
and then in \textbf{Step 2}, we show that the use of $\LCB_{m,1}(a)$ and $\UCB_m(a)$ ensures a regret bound of
$\order(\min\{R_1,R_2\}+(nd^\star+DB)\log(T/n)+D\Delta_{\max})$.

\paragraph{Step 1}
For notational convenience, we define 
\begin{align*}
    \rad_{m,a}^F=\beta\sum_{i=1}^{|\calS_m|}|\lambda_{m,i}^{(a)}|\cdot\frac{\|a\|_2}{\sqrt{\unbiasSize_m(a_{m,i})}}
\end{align*}
to be the total confidence radius of action $a$ coming from the definition of $\LCB_{m,2}(a)$ and $\UCB_m(a)$. 
Via a standard analysis of arm elimination, 
we show that that if an action $a$ is not eliminated at the end of epoch $m$, we have
\begin{align*}
    \Delta_a \leq 4\max_{a\in\calA_m}\rad_{m,a}^F \leq \frac{4\sqrt{3n}\beta}{\min_{a_m\in\calS_m}\sqrt{\unbiasSize_m(a_m)}},
\end{align*}
where the second inequality uses Cauchy-Schwarz inequality and the properties of volumetric spanners, specifically that $\|\lambda_{m}^{(a)}\|_2\leq 1$ and $|\calS_m|=3n$. To provide a lower bound on $c_m(a')$ for any $a'\in\calS_m$, note that we pick each action $a'\in \calS_m$ $2^m$ times in a round-robin manner, and thus
\begin{align*}
    c_m(a') \geq 2^m - \frac{D}{|\calS_m|}-1 = 2^m - \frac{D}{3n}-1.
\end{align*}
Rearranging the terms, we then obtain
\begin{align}\label{eqn:epoch_bound_1}
    2^m\Delta_a \leq \frac{48n\beta^2}{\Delta_a} + \frac{D\Delta_a}{3n} + \Delta_a.
\end{align}
Taking summation over all $a\in\calS_m$ and $m$, and noticing that the total number of epochs is bounded by $M=\lceil\log_2(|\calT_B|/3n)\rceil$, we arrive at the following $\order(R_1+D\Delta_{\max}\log(T/n))$ regret guarantee:
\begin{align}
&\sum_{m=1}^{M}\sum_{a\in\calS_m}2^m\Delta_a \nonumber\\
    &\leq \sum_{m=1}^{M}\sum_{a\in\calS_m,\Delta_a>0}2\cdot\left(\frac{48n\beta^2}{\Delta_a}+\frac{D\Delta_a}{3n}+\Delta_a\right) \nonumber\\
    &\leq \sum_{m=1}^{M}\sum_{a\in\calS_m,\Delta_a>0}\order\left(\frac{n\log (KT)}{\Delta_a}\right) + \order\left(D\Delta_{\max}\log(T/n)\right),\nonumber \\
    &\leq \order\left(\frac{n^2\log(T/n)\log (KT)}{\Delta_{\min}}\right) + \order\left(D\Delta_{\max}\log(T/n)\right),\nonumber
\end{align}
where the first inequality is because $a\in\calS_m$ is not eliminated in epoch $m-1$ and the last inequality is by lower bounding $\Delta_a$ by $\Delta_{\min}$.

To obtain the other instance-independent regret bound $\order(R_2+D\Delta_{\max}\log(T/n))$, we bound the regret differently by considering $\Delta_a\geq \beta\sqrt{n/2^m}$ and $\Delta_a\leq \beta\sqrt{n/2^m}$ separately:
\begin{align}
    &\sum_{m=1}^{M}\sum_{a\in\calS_m}2^m\Delta_a \nonumber \\
    &\leq \sum_{m=1}^{M}\sum_{a\in\calS_m,\Delta_a\geq\beta\sqrt{n/2^m}}\left(\frac{512n\beta^2}{\Delta_a}+\frac{2D\Delta_a}{3n}+2\Delta_a\right) \nonumber\\
    &\qquad +\sum_{m=1}^{M}\sum_{a\in\calS_m,\Delta_a\leq\beta\sqrt{n/2^m}}\left(2^m\Delta_a\right) \nonumber \\
    &\leq \order(n\sqrt{|\calT_B|\log(KT)} + D\Delta_{\max}\log(T/n))\nonumber.
\end{align}

\paragraph{Step 2}
To obtain the other regret bound $\order(\min\{R_1,R_2\}+(nd^\star+DB)\log(T/n)+D\Delta_{\max})$ with a different delay overhead, we similarly define
\begin{align*}
    \rad_{m,a}^{N} &= \beta\sum_{i=1}^{|\calS_m|}|\lambda_{m,i}^{(a)}|\cdot \frac{\|a\|_2}{\sqrt{2^m}}
\end{align*}
as the total confidence radius of action $a$ coming from the definition of $\LCB_{m,1}(a)$. 
Further let $\wh{\mu}_m(a) = \frac{1}{2^m}\left(\sum_{\tau\in \obs_m(a)\cup\unobs_m(a)}u_{\tau}\right)$ be the empirical mean of action $a$'s loss within epoch $m$ (which is generally not available to the algorithm due to delay). According to the construction of $\wh{\mu}_m^{+}(a)$ and $\wh{\mu}_m^{-}(a)$, we know that for all $a\in\calS_m$,
\begin{align*}
    \wh{\mu}_m^{+}(a)\leq \wh{\mu}_m(a) + \frac{|\unobs_m(a)|}{2^m},~~\wh{\mu}_m^{-}(a)\geq \wh{\mu}_m(a) - \frac{|\unobs_m(a)|}{2^m}.
\end{align*}
Then, for any action $a\in\calA_m$ that is not eliminated at the end of epoch $m$, using the fact that $a=\sum_{i=1}^{|\calS_m|}\lambda_{m,i}^{(a)}a_{m,i}$, we obtain with high probability:
\begin{align}
    \mu_a &\leq \sum_{i=1}^{|\calS_m|}\lambda_{m,i}^{(a)}\cdot \hat{\mu}_{m}(a_{m,i}) + \rad_{m,a}^{N} \nonumber\\
    &\leq \LCB_{m,1}(a) + \rad_{m,a}^{N}+\sum_{i=1}^{|\calS_m|}|\lambda_{m,i}^{(a)}|\cdot \frac{|\unobs_m(a_{m,i})|}{2^m} \nonumber\\
    &\leq \LCB_{m,1}(a) +\rad_{m,a}^{N} \nonumber\\
    &\qquad + \sum_{i=1}^{|\calS_m|}|\lambda_{m,i}^{(a)}|\cdot\left(\frac{2D\mu_{a_{m,i}}}{2^m|\calS_m|}+\frac{16\log KT +2}{2^m}\right) \\
    &\leq B +\rad_{m,a}^{N} \nonumber\\
    &\qquad + \sum_{i=1}^{|\calS_m|}|\lambda_{m,i}^{(a)}|\cdot\left(\frac{2D\mu_{a_{m,i}}}{2^m|\calS_m|}+\frac{16\log KT +2}{2^m}\right),\label{eqn:small-loss}
\end{align}
where the first inequality is by standard Azuma-Hoeffding's inequality, the third inequality is by Lemma C.2 of \citet{schlisselberg2024delay} (included as \pref{lem:high-prob-event} in the appendix for completeness), and the last inequality is because $a$ is not eliminated at the end of epoch $m$.

\setcounter{AlgoLine}{0}
\begin{algorithm*}[htbp]
\caption{Reduction from Contextual Linear Bandits to Non-Contextual Linear Bandits~\citep{hanna2023contexts}}\label{alg:reduction}
Input: confidence level $\delta$, an instance $\Alg_{\nctx}$ of \pref{alg:lossLBmis} with $\beta=\sqrt{2\log(KT^3)}$. 

Let $\Theta'$ be a $\frac{1}{T}$-cover of $\Theta$ with size $\order(T^n)$.

\For{$m=1,2,\dots$}{
    \nl Construct action set $\calX_{m}=\{\gup{m}(\theta)~\vert~\theta\in \Theta'\}$ where   $\gup{m}(\theta)=\frac{1}{2^{m-1}}\sum_{\tau=1}^{2^{m-1}}\argmin_{a\in \calA_\tau}\inner{a,\theta}$.

    \nl Initiate $\Alg_{\nctx}$ with action set $\calX_m$ and misspecification level $\epsilon_m=\min\{1,2\sqrt{\log(T|\Theta'|/\delta)/2^m}\}$. \label{line:misspecific_level}
    
    \nl \For{$t=2^{m-1}+1,\dots,2^m$}{
        \nl $\Alg_{\nctx}$ outputs action $\gup{m}(\theta_t)$.

        \nl Observe $\calA_t$ and select $a_t=\argmin_{a\in \calA_t}\inner{a,\theta_t}$.

        \nl Observe the loss $u_\tau$ for all $\tau$ such that $\tau+d_{\tau}\in (t-1,t]$ and send them to $\Alg_{\nctx}$.
    }
    
}
\end{algorithm*}

Now consider two cases. When $B\geq \frac{\mu_a}{2}$, we know that $\Delta_a\leq \mu_a - \mu^\star\leq 2B$. Using the previous \pref{eqn:epoch_bound_1}, we know that
\begin{align}\label{eqn:small-loss-1}
    2^m\Delta_a\leq \order\left(\frac{n\beta^2}{\Delta_a}+\frac{DB}{n}\right).
\end{align}
Otherwise, when $B < \frac{\mu_a}{2}$, with some manipulation on \pref{eqn:small-loss}, we show that
\begin{align}\label{eqn:small-loss-2}
    2^m\Delta_a\leq \order\left(\frac{n\beta^2}{\Delta_a}+\frac{\sum_{i=1}^{|\calS_m|}D\mu_{a_{m,i}}}{n}\right).
\end{align}
Combining \pref{eqn:small-loss-1} and \pref{eqn:small-loss-2}, we then obtain that within epoch $m$, the regret is bounded by
\begin{align}\label{eqn:small-loss-3}
\order\left(\sum_{a\in\calS_m}\frac{n\beta^2}{\Delta_a}+DB+D\sum_{i=1}^{|\calS_{m-1}|}\mu_{a_{m-1,i}}\right),
\end{align} 
since all active actions in epoch $m$ are not eliminated in epoch $m-1$.
The first term $\sum_{a\in\calS_m}\frac{n\beta^2}{\Delta_a}$ in \pref{eqn:small-loss-3} eventually leads to the $\min\{R_1,R_2\}$ term in the claimed regret bound, by the exact same reasoning as in \textbf{Step 1}.
The second term explains the final $DB\log(T/n)$ term in the regret bound (recall that number of epoch is of order $\order(\log(T/n))$).
Finally, the last term in \pref{eqn:small-loss-3} can be written as
$D\sum_{i=1}^{|\calS_{m-1}|} \Delta_{a_{m-1,i}} + 3n\cdot d^\star$,
and the term $D\sum_{i=1}^{|\calS_{m-1}|} \Delta_{a_{m-1,i}}$ is one half of the regret incurred in epoch $m-1$ as long as $2^{m-1}>2D$ (otherwise, the epoch length is smaller than $D$, and we bound the regret trivially by $D\Delta_{\max}$).
Summing over all epochs and rearranging the terms thus leads to the a term $nd^\star\log(T/n)$ in the regret.
This proves the goal of the second step.

\paragraph{Combining everything} 
Finally, note that the number of different values of $B$ \pref{alg:lossLB} uses is upper bounded by $\lceil\log_2(d^\star)\rceil=\lceil\log_2(D\mu^\star)\rceil$ since the optimal action $a^\star$ will never be eliminated when $B\geq \mu^\star$. Summing up the regret over these different values of $B$ arrives at the the final bound $\order(\min\{V_1,V_2\},\log(d^\star)\min\{W_1,W_2\})$.

\section{Extension to Contextual Linear Bandits}\label{sec: contextual}

In this section, we extend our results to the stochastic contextual setting where the action set at each round is drawn i.i.d. from a distribution $\dist$. 
While the arm elimination procedure is critical in solving our problem in the non-contextual case with a fixed action set, it is not clear (if possible at all) to directly generalize it to the contextual setting due to the dynamic nature of the action set.

Fortunately, a recent work by \citet{hanna2023contexts} proposes a reduction from contextual linear bandits to non-contextual linear bandits (both without delay).
At a high level, this reduction utilizes a subroutine of a non-contextual linear bandits algorithm by constructing a fixed action for each possible parameter $\theta$ of the contextual bandit instance. 
Importantly, the subroutine needs to be able to deal with an $\epsilon$-misspecified model, where the loss of each $a\in \calA$ is almost linear: $\mu_a=\inner{a,\theta}+\epsilon_a\in [0,1]$, with $\epsilon \geq \max_{a\in\calA}|\epsilon_a|$ indicating the misspecification level. 
It turns out that, a simple modification of our \pref{alg:lossLB} can address such misspecification --- it only requires incorporating the misspecification level $\epsilon$ into the criteria of arm elimination;
see \pref{alg:lossLBmis} and specifically its \pref{line:eliminate-mis} for details.

We then plugin this subroutine, denoted as $\Alg_{\nctx}$, into their reduction, as shown in \pref{alg:reduction}.
Specifically, the algorithm first constructs a $\frac{1}{T}$-cover $\Theta'$ of the parameter space $\Theta=\R_+^n\cap\mathbb{B}_2^n(1)$ with size $|\Theta'| = \mathcal{O}(T^n)$. 
It then proceeds in epochs with doubling length. 
At the start of epoch $m$, 
a new \emph{fixed} action set $\mathcal{X}_m = \{g^{(m)}(\theta) : \theta \in \Theta'\}$ is constructed, where $g^{(m)}(\theta)$ is the averaged optimal action over the previous $m-1$ epochs, assuming the model parameter being $\theta$.
Then, a new instance of $\Alg_{\nctx}$ with action set $\mathcal{X}_m$ and some 
misspecification level $\epsilon_m$ is initiated and run for the entire epoch.
At each round $t$ of this epoch, $\Alg_{\nctx}$ outputs an action $g^{(m)}(\theta_t) \in \mathcal{X}_m$, and the algorithm's final decision upon receiving the true action set $\calA_t$ is $a_t=\argmin_{a\in \calA_t}\inner{a,\theta_t}$.
Finally, at the end of this round, all newly observed losses are sent to $\Alg_{\nctx}$.

\begin{figure*}[t]
\centering

\includegraphics[width=0.33\textwidth]{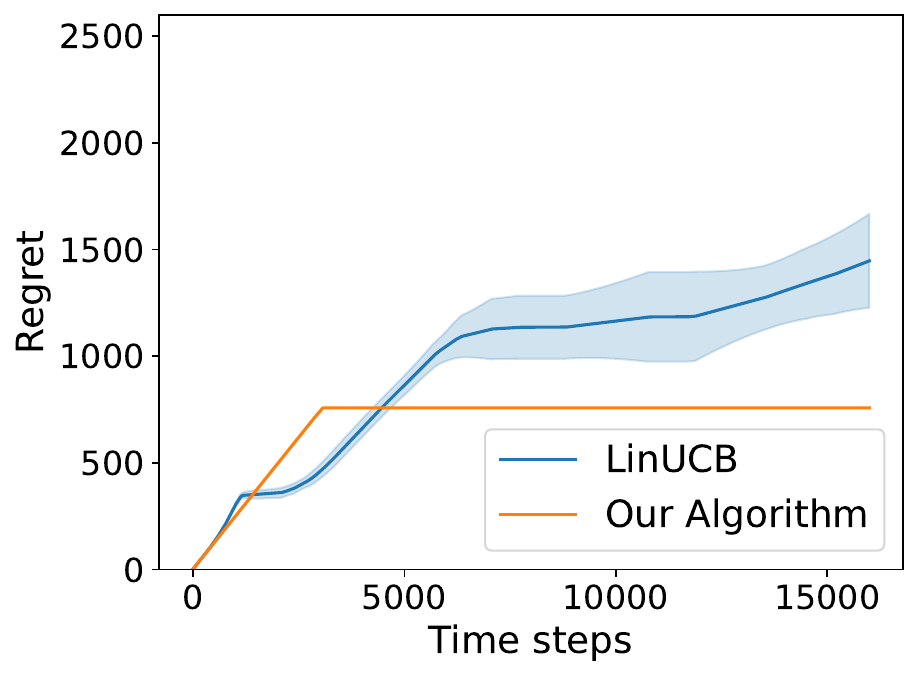}
\includegraphics[width=0.33\textwidth]{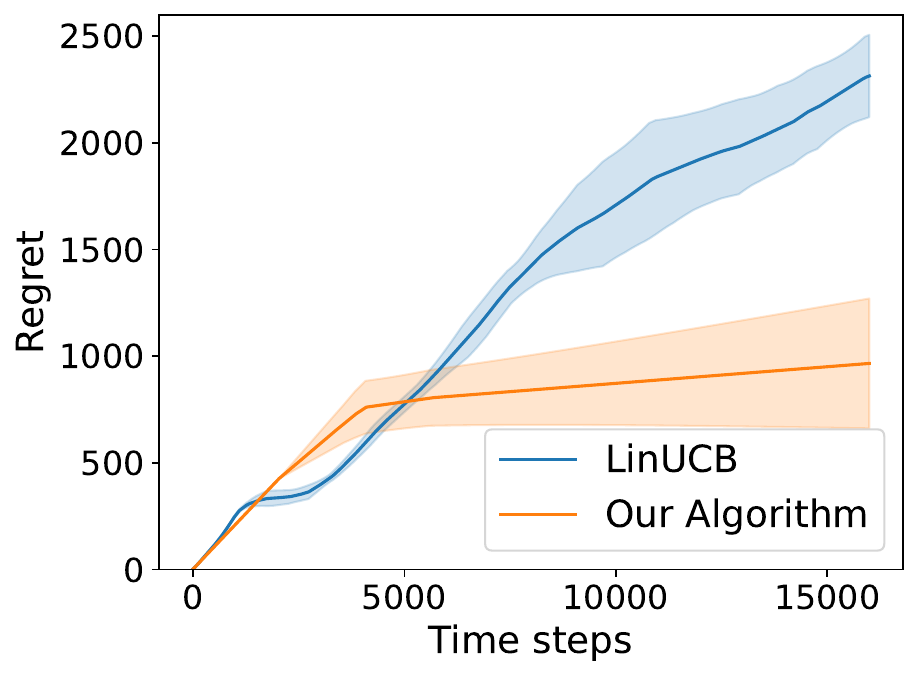}
\includegraphics[width=0.33\textwidth]{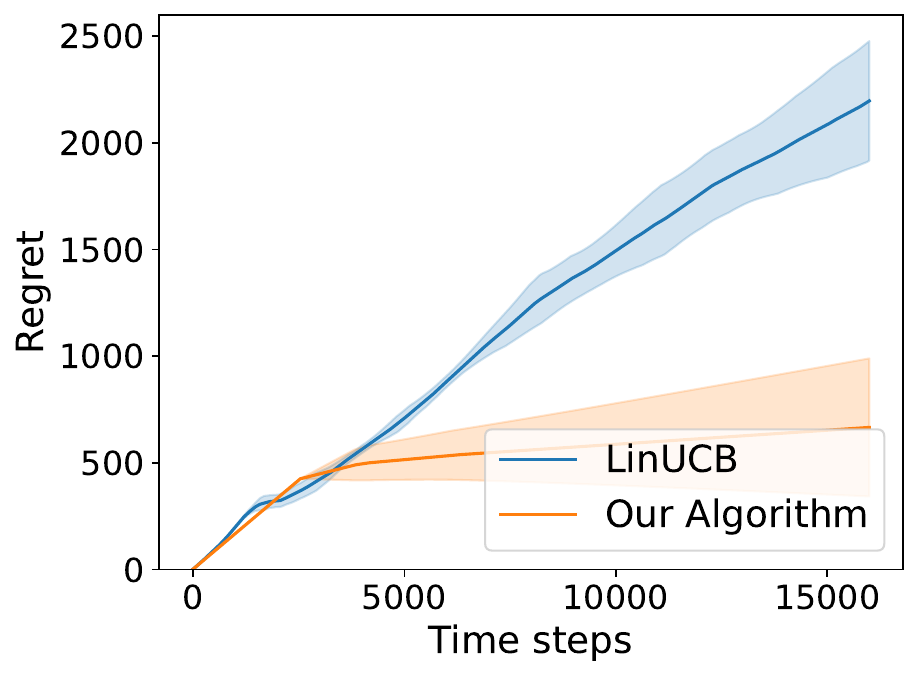}

\includegraphics[width=0.33\textwidth]{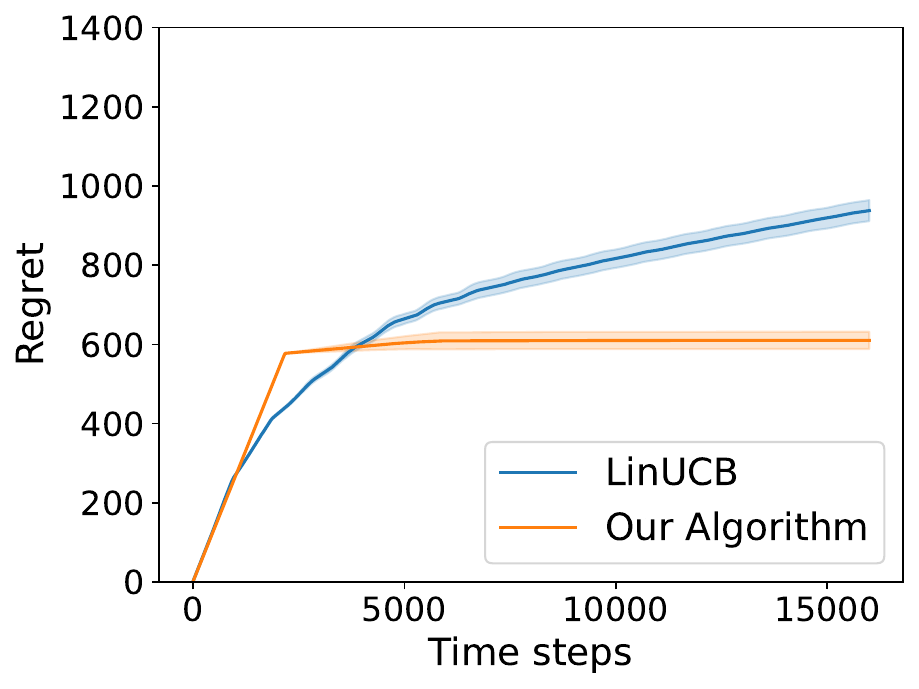}
\includegraphics[width=0.33\textwidth]{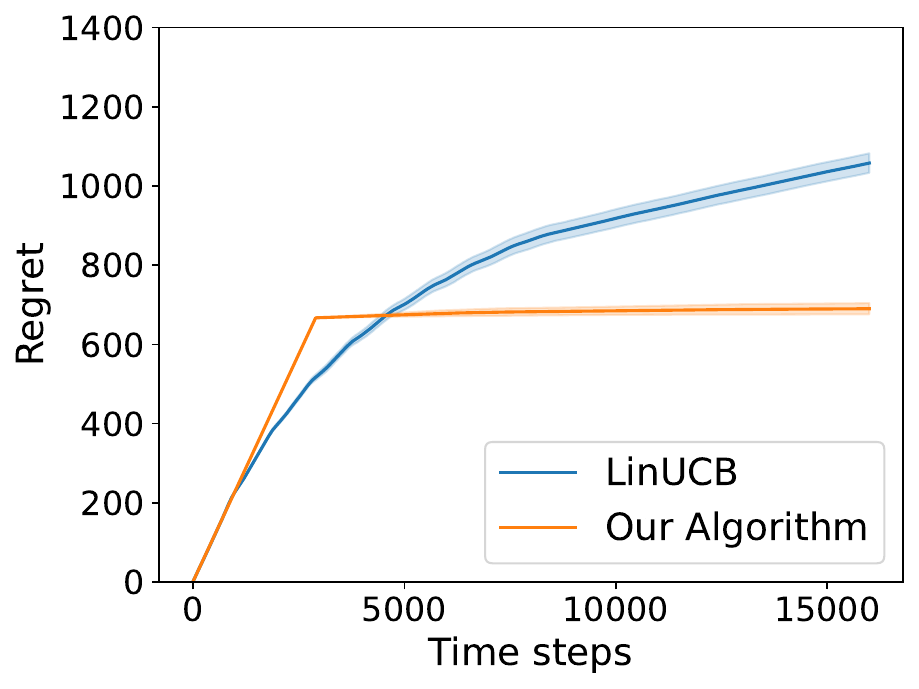}
\includegraphics[width=0.33\textwidth]{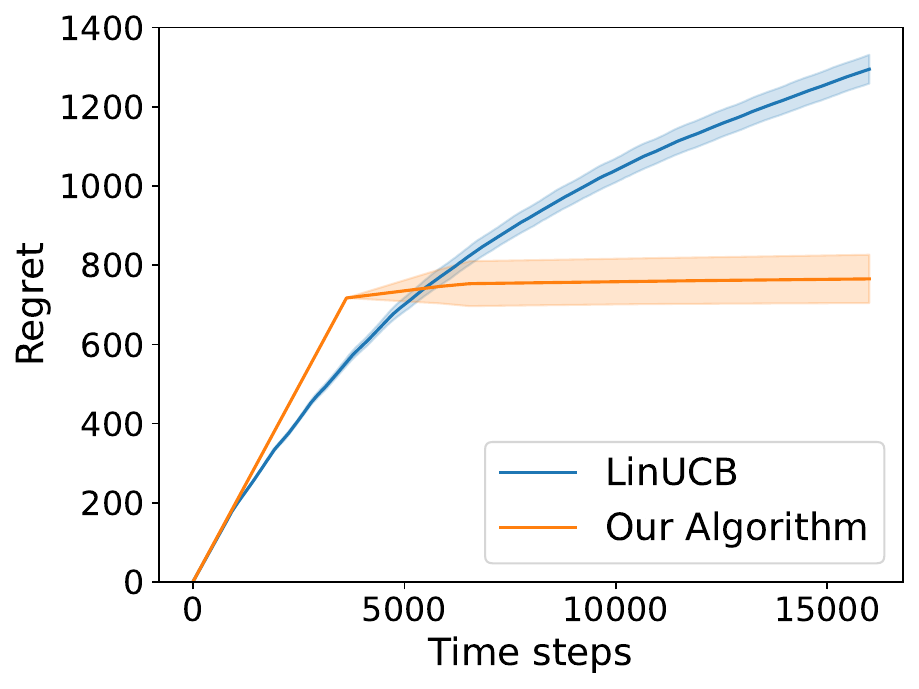}
\caption{Comparison of the empirical results of our algorithm and \texttt{LinUCB}. The top row is the delay-as-loss setting and the bottom row is the delay-as-reward setting. The left, middle, and right column correspond to $n=6,8,10$ respectively.}
\label{fig:synthetic_dataset}
\end{figure*}

\paragraph{Guarantees and Analysis}
Even though our algorithm is a direct application of the reduction of~\citet{hanna2023contexts}, it is a priori unclear whether it enjoys any favorable regret guarantee in the delay-as-loss setting.
By adopting and generalizing their analysis, we show that this is indeed the case.
Before introducing our results, we define the following quantities:
    \begin{align*}
        g(\theta) &\triangleq \E_{\calA\sim \dist}\left[\argmin_{a\in\calA}\inner{a,\theta}\right],\\
        \Delta_{\min}^{\nctx} &\triangleq\min_{\theta'\in \Theta', \inner{g(\theta'),\theta}\neq \inner{ g(\theta),\theta}}\E\left[\inner{g(\theta)-g(\theta'),\theta}\right],\\
        \Delta_{\max}^{\nctx}  &\triangleq\max_{\theta'\in \Theta'}\E\left[\inner{g(\theta)-g(\theta'),\theta}\right],\\
        \overline{d}^{\star} &\triangleq D\cdot \inner{g(\theta),\theta} = D\cdot \E_{\calA\sim \dist}\left[\min_{a\in \calA}\inner{a,\theta}\right],
    \end{align*}
    where $g(\theta)$ denotes the optimal action in expectation, $\Delta_{\min}^{\nctx}$ ($\Delta_{\max}^{\nctx}$) denotes the minimum (maximum) suboptimality gap for the reduced non-contextual linear bandit instance, and $\overline{d}^\star$ denotes the expected delay of the optimal action.

\begin{theorem}\label{thm:reduction}
    \pref{alg:reduction} with %$t^{(m)}=2^{m-1}$ and 
    $\delta = 1/T^2$ guarantees
    \begin{align*}
        &\Reg =\order\big(n\sqrt{T\log T}+\min\{V_1,V_2\} \\
        &\quad\quad\quad\quad +\log(\overline{d}^\star)\min\{W_1,W_2\}\big),
    \end{align*}
     where $V_1=\frac{n^3\log^2(T)\log(T/n)\log(\overline{d}^\star)}{\Delta_{\min}^{\nctx}}$, $V_2=n^{1.5}\sqrt{T\log(\overline{d}^\star)\log(T)}$, $W_1=\log T(n\overline{d}^\star\log(T/n)+D\Delta_{\max}^{\nctx})$, and $W_2=D\Delta_{\max}^{\nctx}\log T\log(T/n)$.
\end{theorem}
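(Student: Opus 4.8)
The plan is to turn the contextual regret into a sum of per-epoch non-contextual regrets and then invoke the misspecification-robust guarantee of $\Alg_{\nctx}$ (\pref{alg:lossLBmis}). Write $a_\theta(\calA)\defeq\argmin_{a\in\calA}\inner{a,\theta}$, so that the algorithm plays $a_t=a_{\theta_t}(\calA_t)$ while the per-round optimum is $a_\theta(\calA_t)$. Since $\theta_t$ is measurable with respect to the past and $\calA_t\sim\dist$ is drawn fresh, taking the conditional expectation over $\calA_t$ collapses the instantaneous regret to $\inner{g(\theta_t)-g(\theta),\theta}$. Moreover $g(\theta)$ minimizes $\theta'\mapsto\inner{g(\theta'),\theta}=\E_\calA[\inner{a_{\theta'}(\calA),\theta}]\ge\E_\calA[\min_{a\in\calA}\inner{a,\theta}]=\inner{g(\theta),\theta}$, so this quantity is exactly the suboptimality gap of the fictitious action $g(\theta_t)$ in a \emph{non-contextual} linear bandit with action set $\{g(\theta'):\theta'\in\Theta'\}$ and parameter $\theta$, whose optimal loss coincides with the true per-round expected optimum. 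The total regret therefore splits, epoch by epoch, into (i) the non-contextual regret of $\Alg_{\nctx}$ on $\calX_m$ against the best action of $\calX_m$, plus (ii) the gap between that best action and $\inner{g(\theta),\theta}$.

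Second, I would set up the high-probability event controlling the two approximations in the reduction. The set $\calX_m$ uses the empirical $\gup{m}(\theta')=\frac{1}{2^{m-1}}\sum_{\tau\le 2^{m-1}}a_{\theta'}(\calA_\tau)$ rather than $g(\theta')$; Hoeffding's inequality and a union bound over $\Theta'$ and epochs give $|\inner{\gup{m}(\theta')-g(\theta'),\theta}|\le\epsilon_m$ for all $\theta'$ simultaneously with probability $\ge1-\delta$, which is exactly why $\calX_m$ acts as an $\epsilon_m$-misspecified copy of $\{g(\theta')\}$. Crucially the delay-as-loss structure is preserved: when $\Alg_{\nctx}$ plays $\gup{m}(\theta_t)$, the realized loss $u_t=\inner{a_{\theta_t}(\calA_t),\theta}+\eta_t\in[0,1]$ satisfies $\E[u_t\mid\theta_t]=\inner{g(\theta_t),\theta}=\inner{\gup{m}(\theta_t),\theta}+\order(\epsilon_m)$ and is still delayed by $Du_t$, so from the subroutine's viewpoint the feedback is a genuine delay-as-loss instance over $\calX_m$ with misspecification $\epsilon_m$; since only boundedness of $u_t$ is needed, all concentration steps behind \pref{thm:main-non-contextual} remain valid. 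The same event, together with the $\tfrac1T$-cover property (which bounds $\inner{g(\theta'')-g(\theta),\theta}\le\order(1/T)$ for the $\theta''\in\Theta'$ nearest $\theta$), bounds term (ii) by $\order(\epsilon_m\, 2^{m-1}+2^{m-1}/T)$ per epoch.

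Third, I would apply the misspecified analogue of \pref{thm:main-non-contextual} on each epoch $m$, substituting $K\mapsto|\Theta'|=\order(T^n)$ (so $\log(KT)\mapsto\order(n\log T)$ and $\beta^2=\order(n\log T)$), horizon $\mapsto 2^{m-1}$, and the instance quantities of $\calX_m$. Summing the instance-independent term $\order(n^{1.5}\sqrt{2^{m-1}\log T\log(\overline{d}^\star)})$ over $m\le\log_2 T$ is geometric and yields $V_2$; the instance-\emph{dependent} term is horizon-free up to logs, so summing it over the $\order(\log T)$ outer epochs contributes the extra $\log T$ that turns $n^3\log T$ into the $n^3\log^2 T$ of $V_1$, and likewise inflates the delay overhead $\log(\overline{d}^\star)\min\{W_1,W_2\}$ by the stated $\log T$. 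The per-epoch misspecification penalty, of order $\sqrt n\,\epsilon_m\,2^{m-1}=\order(n\sqrt{\log T}\,2^{m/2})$, sums geometrically to $\order(n\sqrt{T\log T})$, matching the leading term, while term (ii) sums to a dominated $\order(\sqrt{nT\log T})$. Finally I would bridge empirical and true instance quantities: because $|\inner{\gup{m}(\theta')-g(\theta'),\theta}|\le\epsilon_m$, the gaps and optimal delay of $\calX_m$ lie within $\order(\epsilon_m)$ of $\Delta_{\min}^{\nctx},\Delta_{\max}^{\nctx},\overline{d}^\star$, so for epochs with $\epsilon_m\lesssim\Delta_{\min}^{\nctx}$ the empirical gaps are within a constant factor of the true ones; the early epochs with $\epsilon_m=1$ have total length $\order(n\log T)$ and are charged trivially.

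The hardest part will be making the misspecification accounting tight through the volumetric spanner. Since $\Alg_{\nctx}$ reconstructs each action's loss as $\sum_i\lambda_{m,i}^{(a)}\hat\mu(a_{m,i})$ with $\|\lambda_m^{(a)}\|_2\le1$ over a spanner of size $3n$, a per-action misspecification $\epsilon_m$ is amplified by Cauchy--Schwarz to $\|\lambda_m^{(a)}\|_2\sqrt{3n}\,\epsilon_m=\order(\sqrt n\,\epsilon_m)$. Verifying that the elimination rule of \pref{alg:lossLBmis} (\pref{line:eliminate-mis}) absorbs exactly this $\sqrt n\,\epsilon_m$ slack into the surviving-action gap bound---and that this $\sqrt n$ couples with the $\sqrt n$ from $\log|\Theta'|$ to give precisely $n\sqrt{T\log T}$ rather than a larger power of $n$---is the delicate step. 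A secondary subtlety is ensuring that the nested doubling (the inner epochs and guesses of $B$ inside each outer doubling epoch) does not compound the logarithmic factors beyond the claimed $\log^2 T$.
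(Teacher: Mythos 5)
Your proposal is correct and follows essentially the same route as the paper's proof in \pref{app: contextual}: decompose the regret epoch-by-epoch into a misspecified non-contextual instance over $\calX_m$ plus approximation error, use Hoeffding with a union bound over $\Theta'$ (the paper's Lemma C.1) to certify the misspecification level $\epsilon_m$, invoke \pref{thm:lossLBmis} with $K=|\Theta'|=\order(T^n)$, and sum over the $\order(\log T)$ outer epochs. The only differences are cosmetic or in your favor: you collapse the per-round regret by conditioning on $\calA_t$ and correct for $\theta\notin\Theta'$ in the comparator (the paper instead swaps $\theta\to\theta_0$ in the loss vector and splits into three terms), and you explicitly flag the bridging between the empirical gaps of $\calX_m$ and $\Delta_{\min}^{\nctx},\Delta_{\max}^{\nctx}$ up to $\order(\epsilon_m)$ --- a subtlety the paper's proof substitutes through silently.
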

The proof is deferred to \pref{app: contextual}. 
The regret bound is in the same spirit as the one for the non-contextual case (\pref{thm:main-non-contextual}) and consists of a term for standard regret and a term for delay overhead.
The standard part unfortunately suffers higher dependence on the dimension $n$, while the delay overhead is in a similar problem-dependent form.
We remark that this is the first regret guarantee for contextual linear bandits with delay-as-payoff, resolving an open problem asked by \citep{schlisselberg2024delay}.
\section{Experiment}\label{sec: experiment}

In this section, we implement and evaluate our algorithm for both the delay-as-loss and delay-as-reward settings.
For simplicity, we only consider the non-contextual setting.
Since there are no existing algorithms for this problem (to the best of our knowledge), 
we consider a simple benchmark that applies the standard \texttt{LinUCB} algorithm only using the currently available observations (see \pref{alg:linUCB} in \pref{app: experiment} for details).
We point out that is simple approach to handling delayed feedback is indeed very common in the literature and in fact enjoys favorable guarantees at least for some problems~\citep{thune2019nonstochastic,van2023unified}.

\textbf{Experiment setup} The experiment setup is as follows. We set the dimension $n\in\{6,8,10\}$ and the size of the action set $|\calA|=50$. The model parameter $\theta$ is set to be $\frac{|\nu|}{\|\nu\|_2}$ where $\nu$ is drawn from the $n$-dimensional standard normal distribution and $|\nu|$ denotes the entry-wise absolute value of $\nu$ to make sure that $\theta\in \R_+^n\cap \mathbb{B}_2^n(1)$. Each action $a\in \calA$ is constructed by first sampling $a_i$ uniformly from $[0,1]$ for all $i\in [n]$ and then normalizing it to unit $\ell_2$-norm. When an action $a_t$ is chosen at round $t$, the payoff $u_t$ is defined as follows: with probability $1-\mu_{a_t}$, $u_t$ is drawn from $\calU[0,\mu_{a_t}]$, and with probability $\mu_{a_t}$, $u_t$ is drawn from $\calU[\mu_{a_t},1]$. It is a valid assignment since $\mu_{a_t}\in[0,1]$ and direct calculation shows that $\E[u_t]=(1-\mu_{a_t})\cdot \frac{\mu_{a_t}}{2}+\mu_{a_t}\cdot\frac{1+\mu_{a_t}}{2}=\mu_{a_t}$.
The number of iterations $T$ is $16000$ and the maximal possible delay $D$ is $1000$.
For simplicity, we also ignore the role of $B$ in our algorithms.

\textbf{Results} 
In \pref{fig:synthetic_dataset}, we plot the mean and the standard deviation of the regret over $8$ independent experiments with different random seeds, for each $n\in\{6,8,10\}$ (the columns) and also both delay-as-loss and delay-as-reward (the rows).
We observe that our algorithm consistently outperforms \texttt{LinUCB} in all setups. 
Also, in all runs, after about $9$ to $12$ epochs, our algorithm eliminates a significant number of bad actions, leading to almost constant regret after that point (and explaining the ``phase transition'' in the plots).

\section{Conclusion}
In this work, we initiate the study of the delay-as-payoff model for contextual linear bandits and develop provable algorithms that require novel ideas compared to standard linear bandits.
Interesting future directions include proving matching regret lower bounds and extending our results to general payoff-dependent delays~\citep{lancewicki2021stochastic} and other even more challenging settings, such as those with intermediate observations~\citep{esposito2023delayed} or
evolving observations~\citep{bar2024non}.

\bibliography{ref}

\begin{thebibliography}{25}
\providecommand{\natexlab}[1]{#1}
\providecommand{\url}[1]{\texttt{#1}}
\expandafter\ifx\csname urlstyle\endcsname\relax
  \providecommand{\doi}[1]{doi: #1}\else
  \providecommand{\doi}{doi: \begingroup \urlstyle{rm}\Url}\fi

\bibitem[Abbasi-Yadkori et~al.(2011)Abbasi-Yadkori, P{\'a}l, and
  Szepesv{\'a}ri]{abbasi2011improved}
Abbasi-Yadkori, Y., P{\'a}l, D., and Szepesv{\'a}ri, C.
\newblock Improved algorithms for linear stochastic bandits.
\newblock \emph{Advances in neural information processing systems}, 24, 2011.

\bibitem[Arya \& Yang(2020)Arya and Yang]{arya2020randomized}
Arya, S. and Yang, Y.
\newblock Randomized allocation with nonparametric estimation for contextual
  multi-armed bandits with delayed rewards.
\newblock \emph{Statistics \& Probability Letters}, 164:\penalty0 108818, 2020.

\bibitem[Bar-On \& Mansour(2024)Bar-On and Mansour]{bar2024non}
Bar-On, Y. and Mansour, Y.
\newblock Non-stochastic bandits with evolving observations.
\newblock \emph{arXiv preprint arXiv:2405.16843}, 2024.

\bibitem[Bhaskara et~al.(2023)Bhaskara, Mahabadi, and
  Vakilian]{bhaskara2023tight}
Bhaskara, A., Mahabadi, S., and Vakilian, A.
\newblock Tight bounds for volumetric spanners and applications.
\newblock In \emph{Thirty-seventh Conference on Neural Information Processing
  Systems}, 2023.
\newblock URL \url{https://openreview.net/forum?id=c4Xc0uTLXW}.

\bibitem[Blanchet et~al.(2024)Blanchet, Xu, and
  Zhou]{blanchet2024delay-contextual-linear}
Blanchet, J., Xu, R., and Zhou, Z.
\newblock Delay-adaptive learning in generalized linear contextual bandits.
\newblock \emph{Mathematics of Operations Research}, 49\penalty0 (1):\penalty0
  326--345, 2024.

\bibitem[Dud{\'i}k et~al.(2011)Dud{\'i}k, Hsu, Kale, Karampatziakis, Langford,
  Reyzin, and Zhang]{Dudk2011EfficientOL}
Dud{\'i}k, M., Hsu, D.~J., Kale, S., Karampatziakis, N., Langford, J., Reyzin,
  L., and Zhang, T.
\newblock Efficient optimal learning for contextual bandits.
\newblock In \emph{Conference on Uncertainty in Artificial Intelligence}, 2011.

\bibitem[Esposito et~al.(2023)Esposito, Masoudian, Qiu, Van Der~Hoeven,
  Cesa-Bianchi, and Seldin]{esposito2023delayed}
Esposito, E., Masoudian, S., Qiu, H., Van Der~Hoeven, D., Cesa-Bianchi, N., and
  Seldin, Y.
\newblock Delayed bandits: when do intermediate observations help?
\newblock \emph{International Conference on Machine Learning}, 2023.

\bibitem[Gael et~al.(2020)Gael, Vernade, Carpentier, and
  Valko]{gael2020stochastic}
Gael, M.~A., Vernade, C., Carpentier, A., and Valko, M.
\newblock Stochastic bandits with arm-dependent delays.
\newblock In \emph{International Conference on Machine Learning}, pp.\
  3348--3356. PMLR, 2020.

\bibitem[Gyorgy \& Joulani(2021)Gyorgy and Joulani]{gyorgy2021adapting}
Gyorgy, A. and Joulani, P.
\newblock Adapting to delays and data in adversarial multi-armed bandits.
\newblock In \emph{International Conference on Machine Learning}, pp.\
  3988--3997. PMLR, 2021.

\bibitem[Hanna et~al.(2023)Hanna, Yang, and Fragouli]{hanna2023contexts}
Hanna, O.~A., Yang, L., and Fragouli, C.
\newblock Contexts can be cheap: Solving stochastic contextual bandits with
  linear bandit algorithms.
\newblock In \emph{The Thirty Sixth Annual Conference on Learning Theory}, pp.\
   1791--1821. PMLR, 2023.

\bibitem[Hazan \& Karnin(2016)Hazan and Karnin]{hazan2016volumetric}
Hazan, E. and Karnin, Z.
\newblock Volumetric spanners: an efficient exploration basis for learning.
\newblock \emph{Journal of Machine Learning Research}, 2016.

\bibitem[Joulani et~al.(2013)Joulani, Gyorgy, and
  Szepesv{\'a}ri]{joulani2013online}
Joulani, P., Gyorgy, A., and Szepesv{\'a}ri, C.
\newblock Online learning under delayed feedback.
\newblock In \emph{International Conference on Machine Learning}, pp.\
  1453--1461. PMLR, 2013.

\bibitem[Lancewicki et~al.(2021)Lancewicki, Segal, Koren, and
  Mansour]{lancewicki2021stochastic}
Lancewicki, T., Segal, S., Koren, T., and Mansour, Y.
\newblock Stochastic multi-armed bandits with unrestricted delay distributions.
\newblock In \emph{International Conference on Machine Learning}, pp.\
  5969--5978. PMLR, 2021.

\bibitem[Li et~al.(2010)Li, Chu, Langford, and Schapire]{li2010contextual}
Li, L., Chu, W., Langford, J., and Schapire, R.~E.
\newblock A contextual-bandit approach to personalized news article
  recommendation.
\newblock In \emph{Proceedings of the 19th international conference on World
  wide web}, pp.\  661--670, 2010.

\bibitem[Mandel et~al.(2015)Mandel, Liu, Brunskill, and
  Popovi{\'c}]{mandel2015queue}
Mandel, T., Liu, Y.-E., Brunskill, E., and Popovi{\'c}, Z.
\newblock The queue method: Handling delay, heuristics, prior data, and
  evaluation in bandits.
\newblock In \emph{Proceedings of the AAAI Conference on Artificial
  Intelligence}, volume~29, 2015.

\bibitem[Pike-Burke et~al.(2018)Pike-Burke, Agrawal, Szepesvari, and
  Grunewalder]{pike2018bandits}
Pike-Burke, C., Agrawal, S., Szepesvari, C., and Grunewalder, S.
\newblock Bandits with delayed, aggregated anonymous feedback.
\newblock In \emph{International Conference on Machine Learning}, pp.\
  4105--4113. PMLR, 2018.

\bibitem[Schlisselberg et~al.(2024)Schlisselberg, Cohen, Lancewicki, and
  Mansour]{schlisselberg2024delay}
Schlisselberg, O., Cohen, I., Lancewicki, T., and Mansour, Y.
\newblock Delay as payoff in mab.
\newblock \emph{arXiv preprint arXiv:2408.15158}, 2024.

\bibitem[Tang et~al.(2024)Tang, Wang, and Zheng]{tang2024stochastic}
Tang, Y., Wang, Y., and Zheng, Z.
\newblock Stochastic multi-armed bandits with strongly reward-dependent delays.
\newblock In \emph{International Conference on Artificial Intelligence and
  Statistics}, pp.\  3043--3051. PMLR, 2024.

\bibitem[Thune et~al.(2019)Thune, Cesa-Bianchi, and
  Seldin]{thune2019nonstochastic}
Thune, T.~S., Cesa-Bianchi, N., and Seldin, Y.
\newblock Nonstochastic multiarmed bandits with unrestricted delays.
\newblock \emph{Advances in Neural Information Processing Systems}, 32, 2019.

\bibitem[Van Der~Hoeven \& Cesa-Bianchi(2022)Van Der~Hoeven and
  Cesa-Bianchi]{van2022nonstochastic}
Van Der~Hoeven, D. and Cesa-Bianchi, N.
\newblock Nonstochastic bandits and experts with arm-dependent delays.
\newblock In \emph{International Conference on Artificial Intelligence and
  Statistics}. PMLR, 2022.

\bibitem[van~der Hoeven et~al.(2023)van~der Hoeven, Zierahn, Lancewicki,
  Rosenberg, and Cesa-Bianchi]{van2023unified}
van~der Hoeven, D., Zierahn, L., Lancewicki, T., Rosenberg, A., and
  Cesa-Bianchi, N.
\newblock A unified analysis of nonstochastic delayed feedback for
  combinatorial semi-bandits, linear bandits, and mdps.
\newblock In \emph{The Thirty Sixth Annual Conference on Learning Theory}, pp.\
   1285--1321. PMLR, 2023.

\bibitem[Vernade et~al.(2017)Vernade, Capp{\'e}, and
  Perchet]{vernade:hal-01545667}
Vernade, C., Capp{\'e}, O., and Perchet, V.
\newblock {Stochastic Bandit Models for Delayed Conversions}.
\newblock In \emph{{Conference on Uncertainty in Artificial Intelligence}},
  2017.

\bibitem[Vernade et~al.(2020)Vernade, Carpentier, Lattimore, Zappella, Ermis,
  and Brueckner]{vernade2020linear}
Vernade, C., Carpentier, A., Lattimore, T., Zappella, G., Ermis, B., and
  Brueckner, M.
\newblock Linear bandits with stochastic delayed feedback.
\newblock In \emph{International Conference on Machine Learning}, pp.\
  9712--9721. PMLR, 2020.

\bibitem[Zhou et~al.(2019)Zhou, Xu, and Blanchet]{zhou2019learning}
Zhou, Z., Xu, R., and Blanchet, J.
\newblock Learning in generalized linear contextual bandits with stochastic
  delays.
\newblock \emph{Advances in Neural Information Processing Systems}, 32, 2019.

\bibitem[Zimmert \& Seldin(2020)Zimmert and Seldin]{zimmert2020optimal}
Zimmert, J. and Seldin, Y.
\newblock An optimal algorithm for adversarial bandits with arbitrary delays.
\newblock In \emph{International Conference on Artificial Intelligence and
  Statistics}, pp.\  3285--3294. PMLR, 2020.

\end{thebibliography}
\bibliographystyle{icml2025}

\newpage
\appendix
\onecolumn
\section{Omitted Details in \pref{sec: linear}}\label{app:loss}

\setcounter{AlgoLine}{0}
\begin{algorithm}
\caption{Phased Elimination via Volumetric Spanner for Linear Bandits with Delay-as-Loss with misspecification}\label{alg:lossLBmis}

\nl Input: maximum possible delay $D$, action set $\calA$, $\beta>0$, a misspecification level $\epsilon$. 

\nl Initialization: optimal loss guess $B=1/D$.

\nl Initialization: active action set $\calA_1=\calA$. \label{line: restart-mis}

 \For{$m=1,2,\dots,$}{
    \nl Find $\calS_m=\{a_{m,1},\dots,a_{m,|\calS_m|}\}$, a volumetric spanner of $\calA_m$ with $|\calS_m|= 3n$. \label{line:volume-mis}
    
    \nl Pick each $a\in \calS_m$ $2^m$ times in a round-robin way. \label{line:round-robin-mis}

    \nl Let $\calI_m$ contain all the rounds in this epoch.
    
    \nl For each $a\in \calS_m$, calculate the following quantities: \label{line:spanner-ucb-lcb-mis}
    {\small
    \begin{align}
        &\hat{\mu}_{m}^+(a)=\frac{1}{2^m}\Big(\sum_{\tau\in \obs_m(a)}u_{\tau} + \sum_{\tau\in \unobs_m(a)}1\Big), \label{eqn:mean-up-mis}\\
        &\hat{\mu}_{m}^-(a)=\frac{1}{2^m}\sum_{\tau\in \obs_m(a)}u_{\tau}, \label{eqn:mean-low-mis}\\
        &\hat{\mu}_{m,1}^{+}(a)=\hat{\mu}^{+}_{m}(a)+\frac{\beta}{2^{m/2}}\|a\|_2, \label{eqn:loss-ucb-linear-1-mis}\\
        &\hat{\mu}_{m,1}^{-}(a)=\hat{\mu}^{-}_{m}(a)-\frac{\beta}{2^{m/2}}\|a\|_2,\label{eqn:loss-lcb-linear-1-mis}\\
        &\hat{\mu}_m^{F}(a)=\frac{1}{\unbiasSize_m(a)}\sum_{\tau\in \unbias_m(a)}u_{\tau}, \label{eqn:mean_unbiased-mis}\\
        &\hat{\mu}_{m,2}^{+}(a)=\hat{\mu}_m^F(a)+\frac{\beta}{\sqrt{\unbiasSize_m(a)}}\|a\|_2, \label{eqn:loss-ucb-linear-2-mis}\\
        &\hat{\mu}_{m,2}^{-}(a)=\hat{\mu}_m^F(a)-\frac{\beta}{\sqrt{\unbiasSize_m(a)}}\|a\|_2, \label{eqn:loss-lcb-linear-2-mis}
    \end{align}
    }
    where $\unbiasSize_m(a) = |\unbias_m(a)|$, $\unbias_m(a) = \{\tau\in \calI_m: \tau+D\in\calI_m, a_{\tau}=a\}$, $\obs_m(a) = \{\tau\in \calI_m: \tau+d_{\tau}\in\calI_m, a_{\tau}=a\}$, and
    $\unobs_m(a)= \{\tau\in \calI_m: a_{\tau}=a\}\setminus\obs_m(a)$.

    \For{each $a\in \calA_m$}{
        \nl \label{line: decompose-mis}
        Decompose $a$ as $a=\sum_{i=1}^{|S_m|}\lambda_{m,i}^{(a)}a_{m,i}$ with $\|\lambda_{m}^{(a)}\|_2\leq 1$ and calculate 
        {\small
        \begin{align}
            &\UCB_{m}(a)=\sum_{i=1}^{|\calS_m|}\lambda_{m,i}^{(a)}\cdot\hat{\mu}_{m,2}^{\sgn(\lambda_{m,i}^{(a)})}(a_{m,i}), \label{eqn:loss-ucb-f-all-action-mis} \\
            &\LCB_m(a) = \max_{j\in \{1,2\}}\{\LCB_{m,j}(a)\} \;\;\text{where} \nonumber  \\
            & \LCB_{m,j}(a)=\sum_{i=1}^{|\calS_m|}\lambda_{m,i}^{(a)}\cdot\hat{\mu}_{m,j}^{\sgn(-\lambda_{m,i}^{(a)})}(a_{m,i}),\label{eqn:loss-lcb-all-action-mis}
        \end{align}
        }
    }
    
    \nl Set $\calA_{m+1} = \calA_m$.
    
    \For{$a\in \calA_m$}{
        \nl \label{line:eliminate-mis}  
        \If{$\exists a'\in \calA_m$, s.t. $\LCB_m(a) \geq \min\{\UCB_m(a'),B\} + 4\sqrt{3n}\epsilon$}
        {
          Eliminate $a$ from $\calA_{m+1}$.
        }
    }
    \nl \If{$\calA_{m+1}=\emptyset$}{
        Set $B\leftarrow 2B$ and go to \pref{line: restart-mis}.
    }
}
\end{algorithm}

In this section, we provide the detailed proof for \pref{thm:main-non-contextual}. Specifically, as mentioned in \pref{sec: contextual}, we prove the guarantee of a modified algorithm (\pref{alg:lossLBmis}) for the more general $\epsilon$-misspecified linear bandits. 

Recall that in misspecified linear bandits, $\mu_a = \inner{a,\theta}+\epsilon_a\in[0,1]$ with $|\epsilon_a|\leq\epsilon$ for all $a\in\calA$. Due to this difference, we clarify on the definitions of $\Delta_a$, $a^\star$, $\mu^\star$, $\Delta_{\min}$, $\Delta_{\max}$, and $d^\star$ in misspecified linear bandits as follows. We still define $\Delta_a = \inner{a^\star-a,\theta}$ as the suboptimality gap of action $a$, where $a^\star \in \argmin_{a\in\calA}\inner{a, \theta}$, but $\mu^\star \triangleq \min_{a\in\calA}\mu_a$ as the loss of the optimal action. Note that due to the misspecification, $\mu^\star$ may not necessarily be $\mu_{a^\star}$. Define $\Delta_{\min} = \min_{a\in \calA, \Delta_a>0}\Delta_a$ and $\Delta_{\max} = \max_{a\in \calA}\Delta_a$ to be the minimum non-zero, and maximum sub-optimality gap. The delay at round $t$ is still defined as $d_t=D\cdot u_t$ and $d^\star = D\cdot \mu^\star$ is the expected delay of the optimal action.

As for the algorithm, \pref{alg:lossLBmis} differs from \pref{alg:lossLB} only in \pref{line:eliminate-mis} where we add one misspecification term $4\sqrt{3n}\epsilon$ in the criteria of eliminating an action. 

The following theorem shows the guarantee of our algorithm in the misspecified linear bandits.

\begin{theorem}\label{thm:lossLBmis}
    \pref{alg:lossLBmis} with $\beta = \sqrt{2\log(KT^3)}$ guarantees that
    \begin{align*}
        \Reg &\leq \order\left(\min\left\{\frac{n^2\log(KT)\log(T/n)\log(d^\star)}{\Delta_{\min}},n\sqrt{T\log(d^\star)\log(KT)}\right\}+\epsilon\sqrt{n}T\right) \\
        &\qquad + \log(d^\star)\cdot \order\left( \min\left\{nd^\star\log (T/n)+D\Delta_{\max},D\Delta_{\max}\log (T/n)\right\}\right).
    \end{align*}
\end{theorem}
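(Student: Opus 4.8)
The plan is to prove the more general misspecified bound of \pref{thm:lossLBmis}, from which \pref{thm:main-non-contextual} follows by setting $\epsilon=0$. The overall skeleton mirrors the two-step argument sketched in \pref{sec: alg}, and the whole proof is conditioned on a single high-probability event on which all the confidence bounds are simultaneously valid. So the first task is to establish that, on this event, $\UCB_m(a)$ and $\LCB_m(a)$ are valid upper and lower bounds for $\mu_a$ up to an additive $\order(\sqrt{n}\epsilon)$. For each spanner action $a\in\calS_m$ I would invoke Azuma--Hoeffding's inequality twice: once on the full-epoch empirical mean $\wh{\mu}_m(a)$, whose count $2^m$ is deterministic, which together with the crude sandwich $\wh\mu_m^-(a)\le\wh\mu_m(a)\le\wh\mu_m^+(a)+|\unobs_m(a)|/2^m$ controls $\hat{\mu}_{m,1}^{\pm}$; and once on the unbiased mean $\hat{\mu}_m^{F}(a)$, whose count $\unbiasSize_m(a)$ is again a fixed, delay-independent number by the definition of $\unbias_m(a)$, controlling $\hat{\mu}_{m,2}^{\pm}$. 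Setting $\beta=\sqrt{2\log(KT^3)}$ leaves room for a union bound over all $K$ actions and $\order(T)$ epochs. To pass from the spanner to a general $a\in\calA_m$, I would use $a=\sum_i\lambda_{m,i}^{(a)}a_{m,i}$ and $\mu_a=\inner{a,\theta}+\epsilon_a$: the discrepancy between $\mu_a$ and $\sum_i\lambda_{m,i}^{(a)}\mu_{a_{m,i}}$ is exactly $\epsilon_a-\sum_i\lambda_{m,i}^{(a)}\epsilon_{a_{m,i}}$, which Cauchy--Schwarz together with $\|\lambda_m^{(a)}\|_2\le 1$ and $|\calS_m|=3n$ (\pref{def:volume}) bounds by $(1+\sqrt{3n})\epsilon$. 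This is precisely what justifies the $4\sqrt{3n}\epsilon$ slack in the elimination rule (\pref{line:eliminate-mis}) and, in particular, guarantees the loss-optimal action is never eliminated once $B\ge\mu^\star$.

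Next I would bound the regret under each fixed guess $B$, writing $\calT_B$ for the corresponding rounds. In \textbf{Step 1}, using $\LCB_{m,2}$ and $\UCB_m$, any action $a$ surviving epoch $m$ satisfies $\Delta_a\le 4\max_{a\in\calA_m}\rad_{m,a}^F+\order(\sqrt{n}\epsilon)$; bounding $\rad_{m,a}^F$ via Cauchy--Schwarz, $\|\lambda_m^{(a)}\|_2\le1$, $\|a_{m,i}\|_2\le1$, $|\calS_m|=3n$, and the round-robin lower bound $\unbiasSize_m(a')\ge 2^m-D/(3n)-1$, yields the per-epoch inequality $2^m\Delta_a\le\order(n\beta^2/\Delta_a+D\Delta_a/n+\Delta_a)$. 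Summing over the $\order(\log(|\calT_B|/n))$ epochs and over $a\in\calS_m$, then either lower-bounding $\Delta_a$ by $\Delta_{\min}$ or splitting by the threshold $\Delta_a\gtrless\beta\sqrt{n/2^m}$, gives the $\order(\min\{R_1,R_2\}+D\Delta_{\max}\log(T/n))$ bound. The $\order(\sqrt{n}\epsilon)$ slack, which appears once per gap and is charged over a total of $T$ rounds, accumulates to exactly the $\epsilon\sqrt{n}T$ term.

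In \textbf{Step 2}, I would instead use $\LCB_{m,1}$, bounding the number of unobserved losses $|\unobs_m(a_{m,i})|$ by $\order(D\mu_{a_{m,i}}/(3n)+\log KT)$ via \pref{lem:high-prob-event}, and split into the cases $B\ge\mu_a/2$ (so $\Delta_a\le 2B$, reusing the Step-1 per-epoch inequality) and $B<\mu_a/2$, arriving at the per-epoch regret bound $\order\big(\sum_{a\in\calS_m}n\beta^2/\Delta_a+DB+D\sum_i\mu_{a_{m-1,i}}\big)$. The main obstacle, I expect, is closing the recursion hidden in the last term: rewriting $D\sum_i\mu_{a_{m-1,i}}=D\sum_i\Delta_{a_{m-1,i}}+3n\,d^\star$, recognizing $D\sum_i\Delta_{a_{m-1,i}}$ as one half of the regret already paid in epoch $m-1$ whenever $2^{m-1}>2D$ (so that a telescoping argument absorbs it rather than double-counting it), and bounding the few short epochs with $2^{m-1}\le 2D$ trivially by $D\Delta_{\max}$. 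Getting this bookkeeping right is what converts the crude bound into the clean $nd^\star\log(T/n)$ overhead, with the $DB$ term summing to $DB\log(T/n)$ over the $\order(\log(T/n))$ epochs. Finally, I would sum the per-guess regret over the $\lceil\log_2(d^\star)\rceil$ distinct values of $B$ --- at most this many are reached, since the loss-optimal action survives once $B\ge\mu^\star$ and the active set is then never emptied --- which places the $\log(d^\star)$ factor on the delay-overhead term and yields the stated bound.
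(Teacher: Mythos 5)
Your proposal is correct and takes essentially the same route as the paper's proof: the same high-probability event (Azuma--Hoeffding for both estimators plus the unobserved-count bound, with the $(1+\sqrt{3n})\epsilon$ spanner-decomposition slack justifying the $4\sqrt{3n}\epsilon$ elimination margin), the same two per-epoch inequalities from $\LCB_{m,2}/\UCB_m$ and from $\LCB_{m,1}$ with the $B\gtrless\mu_a/2$ case split, the same telescoping absorption of $D\sum_i\Delta_{a_{m-1,i}}$ into half of epoch $m-1$'s regret (the paper uses the threshold $2^{m-1}>72D$ rather than $2D$, an immaterial constant), and the same $\lceil\log_2 d^\star\rceil$ count of guesses for $B$. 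The only glossed points --- the $B<\mu^\star$ case in Step~1 where elimination is tested against $B$ itself, and the geometric summation $\sum_B DB\log(T/n)=\order(d^\star\log(T/n))$ that absorbs the $DB$ term into $nd^\star\log(T/n)$ --- are routine and handled exactly as you would expect.
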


To prove \pref{thm:lossLBmis}, recall the following quantities
\begin{align}
    \wh{\mu}_{m}(a) &= \frac{1}{2^m}\sum_{\tau\in\obs_m(a)\cup\unobs_m(a)}u_{\tau},~~~\forall a\in \calS_m,\label{eqn:loss-all-mean-app}\\
    \hat{\mu}_{m,1}(a)&=\sum_{i=1}^{|\calS_m|}\lambda_{m,i}^{(a)}\cdot\hat{\mu}_{m}(a_{m,i}),~~~\forall a\in \calA_m,\\
    \hat{\mu}_{m,2}(a)&=\sum_{i=1}^{|\calS_m|}\lambda_{m,i}^{(a)}\cdot\hat{\mu}_{m}^{F}(a_{m,i}),~~~\forall a\in \calA_m.
\end{align}
We then define the following event and show that the event holds with high probability.

\begin{event}\label{event:misLoss}
    For all action $a\in \calA_m$, $m\in[T]$,
    \begin{align}
        \left|\inner{a,\theta}-\hat{\mu}_{m,1}(a)\right|&\leq \sqrt{|\calS_m|}\epsilon + \beta\sum_{i=1}^{|\calS_m|}\left|\lambda_{m,i}^{(a)}\right|\sqrt{\frac{1}{2^{m}}}, \label{eqn:concentr-1}\\
        \left|\inner{a,\theta}-\hat{\mu}_{m,2}(a)\right|&\leq \sqrt{|\calS_m|}\epsilon +\beta \sum_{i=1}^{|\calS_m|}\left|\lambda_{m,i}^{(a)}\right|\sqrt{\frac{1}{\unbiasSize_m(a_{m,i})}}, \label{eqn:concentr-2}\\
        |\unobs_m(a)| &\leq \frac{2D\mu_a}{|\calS_m|}+16\log KT+2,\label{eqn:concentr-3}
    \end{align}
    where $\beta = \sqrt{2\log KT^3}$.
\end{event}
\begin{lemma}\label{lem:high-prob-event}
    \pref{alg:lossLBmis} guarantees that \pref{event:misLoss} holds with probability at least $1-\frac{2}{T^2}$.
\end{lemma}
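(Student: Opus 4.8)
The plan is to establish each of the three inequalities \pref{eqn:concentr-1}, \pref{eqn:concentr-2}, \pref{eqn:concentr-3} separately as high-probability events, and then union-bound over all epochs $m$ and all $3n$ spanner actions. The key structural observation is that the spanner-based estimators $\hat{\mu}_{m,1}(a)$ and $\hat{\mu}_{m,2}(a)$ are linear combinations of per-spanner-action estimators, so it suffices to concentrate the spanner estimators $\hat{\mu}_m(a_{m,i})$ and $\hat{\mu}_m^F(a_{m,i})$ around their means and then propagate the error through the decomposition $a=\sum_i \lambda_{m,i}^{(a)} a_{m,i}$.

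\textbf{Inequality \pref{eqn:concentr-2} (the unbiased estimator).} First I would handle $\hat{\mu}_m^F(a_{m,i})$, which averages losses over the fixed index set $\unbias_m(a_{m,i})$ of size $\unbiasSize_m(a_{m,i})$. Since these are rounds whose delays are guaranteed to have resolved by the epoch's end (by the definition of $\unbias_m$), the set is determined \emph{before} observing the losses, so the summands are genuinely i.i.d.\ bounded in $[0,1]$ with mean $\mu_{a_{m,i}}=\inner{a_{m,i},\theta}+\epsilon_{a_{m,i}}$. A Hoeffding bound with confidence width $\frac{\beta}{\sqrt{\unbiasSize_m(a_{m,i})}}$ (where $\beta=\sqrt{2\log(KT^3)}$) gives a per-action failure probability of at most $2/(KT^3)$. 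Then I would write $\inner{a,\theta}=\sum_i \lambda_{m,i}^{(a)}\inner{a_{m,i},\theta}=\sum_i\lambda_{m,i}^{(a)}(\mu_{a_{m,i}}-\epsilon_{a_{m,i}})$, subtract $\hat{\mu}_{m,2}(a)=\sum_i\lambda_{m,i}^{(a)}\hat{\mu}_m^F(a_{m,i})$, and bound the result by the triangle inequality. The misspecification contributes $\sum_i|\lambda_{m,i}^{(a)}|\,|\epsilon_{a_{m,i}}|\leq \epsilon\|\lambda_m^{(a)}\|_1\leq \epsilon\sqrt{|\calS_m|}\|\lambda_m^{(a)}\|_2\leq\sqrt{|\calS_m|}\epsilon$, using $\|\lambda_m^{(a)}\|_2\leq 1$ from \pref{def:volume}; the concentration part contributes the stated $\beta\sum_i|\lambda_{m,i}^{(a)}|/\sqrt{\unbiasSize_m(a_{m,i})}$.

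\textbf{Inequality \pref{eqn:concentr-1} (the full-epoch empirical mean) and \pref{eqn:concentr-3} (the unobserved count).} The quantity $\hat{\mu}_m(a_{m,i})=\frac{1}{2^m}\sum_{\tau\in\obs_m\cup\unobs_m}u_\tau$ averages \emph{all} $2^m$ pulls of $a_{m,i}$ in the epoch, whether or not their delays have resolved; since the pull schedule is fixed (round-robin), this is again an average of $2^m$ i.i.d.\ samples with mean $\mu_{a_{m,i}}$, so the same Hoeffding argument gives \pref{eqn:concentr-1} with width $\beta/\sqrt{2^m}$ and the same misspecification handling. For \pref{eqn:concentr-3}, I would invoke Lemma~C.2 of \citet{schlisselberg2024delay}, which the text already cites for exactly this purpose: the count of unresolved-delay rounds for action $a$ concentrates because each pull resolves within the epoch unless its delay $d_\tau=D u_\tau$ is large, and the number of such large-delay rounds is controlled by a multiplicative/additive Chernoff-type bound, yielding $\frac{2D\mu_a}{|\calS_m|}+16\log KT+2$.

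\textbf{Union bound.} Finally I would union-bound over the at most $T$ epochs and the $3n\le KT$ spanner actions per epoch (and the three events), so the total failure probability is at most $T\cdot 3n\cdot 3\cdot \frac{2}{KT^3}\le \frac{2}{T^2}$ after absorbing constants into the $\beta=\sqrt{2\log(KT^3)}$ choice, matching the claim. The main obstacle I anticipate is \pref{eqn:concentr-3}: unlike the two Hoeffding bounds, the count of unobserved rounds is not a simple sum of bounded i.i.d.\ variables because ``observed'' depends on the random delay $d_\tau=Du_\tau$, so the event $\{\tau\in\unobs_m\}$ is correlated with the loss magnitude itself. Getting a clean multiplicative dependence on $\mu_a$ here requires the delay-as-payoff structure and is precisely why I would defer to \citet{schlisselberg2024delay}'s Lemma~C.2 rather than reprove it; verifying that their lemma applies verbatim to the round-robin spanner pulls (rather than to single-arm pulls in MAB) is the step deserving the most care.
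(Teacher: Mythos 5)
Your proposal matches the paper's proof essentially step for step: per-spanner-action concentration (the paper uses Azuma's inequality where you use Hoeffding conditioned on the fixed round-robin schedule—the same argument), a union bound over all actions and epochs with $\delta = 1/T^2$, propagation through the spanner decomposition via $\|\lambda_{m}^{(a)}\|_1 \leq \sqrt{|\calS_m|}\,\|\lambda_{m}^{(a)}\|_2 \leq \sqrt{3n}$ to absorb the misspecification into the $\sqrt{|\calS_m|}\epsilon$ term, and a direct citation of \citet{schlisselberg2024delay} for the unobserved-count bound \pref{eqn:concentr-3}. The caveat you flag about verifying that their lemma transfers from MAB to round-robin spanner pulls is legitimate, but the paper treats it the same way you do—by citation without reproving it.
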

\begin{proof}
    Fix an action $a\in \calS_m$ in epoch $m\in[T]$. According to standard Azuma's inequality, we know that with probability at least $1-\delta$,
    \begin{align*}
        \left|\mu_a-\hat{\mu}_{m,1}(a)\right|&\leq \sqrt{\frac{2\log(2/\delta)}{2^m}}\|a\|_2,\\
        \left|\mu_a-\hat{\mu}_{m,2}(a)\right|&\leq \sqrt{\frac{2\log(2/\delta)}{\unbiasSize_m(a)}}\|a\|_2.
    \end{align*}
    Taking union bound over all possible $a\in \calA$ and all $m\in[T]$, we know that with probability at least $1-\delta$, for all $a\in \calS_m$ and all $m\in [T]$,
    \begin{align*}
        \left|\mu_a-\hat{\mu}_{m,1}(a)\right|&\leq \sqrt{\frac{2\log(2TK/\delta)}{n_t(a)}}\|a\|_2,\\
        \left|\mu_a-\hat{\mu}_{m,2}(a)\right|&\leq \sqrt{\frac{2\log(2TK/\delta)}{\unbiasSize_m(a)}}\|a\|_2.
    \end{align*}
    Then, given that the above equation holds, for $a\in \calA_m$, due to the property of volumetric spanners, we have $\mu_a = \inner{a,\theta^\star}+\epsilon_a =  \sum_{i=1}^{|\calS_m|}\lambda_{m,i}^{(a)}\inner{a_{m,i},\theta^\star}+ \epsilon_a$. Therefore, we can obtain that
    \begin{align*}
        \left|\inner{a,\theta}-\hat{\mu}_{m,1}(a)\right|
        &\leq \left|\sum_{i=1}^{|\calS_m|}\lambda_{m,i}^{(a)}(\inner{a_{m,i},\theta^\star}-\mu_{a_{m,i}})\right| + \sum_{i=1}^{|\calS_m|}\left|\lambda_{m,i}^{(a)}\right|\cdot\left|\mu_{a_{m,i}}-\hat{\mu}_{m}(a_{m,i})\right| \\
        &\leq \sum_{i=1}^{|\calS_m|}\left|\lambda_{m,i}^{(a)}\right|\left(\epsilon_{a_{m,i}}+\sqrt{\frac{2\log(2TK/\delta)}{2^{m}}}\right) \\
        &\leq \sqrt{|\calS_m|}\epsilon + \sum_{i=1}^{|\calS_m|}\left|\lambda_{m,i}^{(a)}\right|\sqrt{\frac{2\log(2TK/\delta)}{2^{m}}},
    \end{align*}
    where the last inequality uses $\|\lambda_{m}^{(a)}\|_1\leq \sqrt{|\calS_m|}\cdot \|\lambda_{m}^{(a)}\|_2\leq \sqrt{|\calS_m|}$. A similar analysis proves \pref{eqn:concentr-2}. \pref{eqn:concentr-3} holds with probability at least $1-\frac{1}{T^2}$ according to Lemma 4.1 of \citep{schlisselberg2024delay}. Picking $\delta = \frac{1}{T^2}$ finishes the proof.
\end{proof}

The next lemma shows that if $B\geq \mu^\star$, then \pref{alg:lossLBmis} will not reach an empty active set.
\begin{lemma}\label{lem:end-of-B}
    Suppose that \pref{event:misLoss} holds. If $B\geq \mu^\star$, then $a^\star\in \calA_m$ for all $m$.
\end{lemma}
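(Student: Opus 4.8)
The plan is to induct on the epoch index $m$, showing that under \pref{event:misLoss} and the hypothesis $B\ge\mu^\star$ the optimal action $a^\star$ is never removed. The base case $a^\star\in\calA_1=\calA$ is immediate, so everything reduces to the inductive step: assuming $a^\star\in\calA_m$, I would show that the elimination test on \pref{line:eliminate-mis} never fires for $a^\star$, i.e. that $\LCB_m(a^\star) < \min\{\UCB_m(a'),B\} + 4\sqrt{3n}\epsilon$ for every $a'\in\calA_m$. Since eliminating $a^\star$ requires the reverse inequality for some $a'$, ruling this out keeps $a^\star\in\calA_{m+1}$ and closes the induction (and, since $a^\star$ never leaves, the active set never empties, so $B$ is never doubled while $B\ge\mu^\star$, consistent with the statement).

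The first thing I would establish is the one-sided validity of the constructed bounds. Under \pref{event:misLoss} each spanner estimator sandwiches the true mean, $\hat{\mu}_{m,j}^{-}(a_{m,i})\le\mu_{a_{m,i}}\le\hat{\mu}_{m,j}^{+}(a_{m,i})$, because the $\pm$ bookkeeping over/undercounts the unobserved losses while the width $\tfrac{\beta}{\sqrt{\cdot}}\|a\|_2$ absorbs the Azuma deviation. Plugging this into the sign-aware decompositions \pref{eqn:loss-lcb-all-action-mis} and \pref{eqn:loss-ucb-f-all-action-mis} (each coordinate picks the $-$ estimator exactly when $\lambda_{m,i}^{(a)}>0$, so every summand is a valid one-sided bound), and writing $\mu_a=\inner{a,\theta}+\epsilon_a=\sum_i\lambda_{m,i}^{(a)}\inner{a_{m,i},\theta}+\epsilon_a$ together with $\|\lambda_m^{(a)}\|_1\le\sqrt{|\calS_m|}\,\|\lambda_m^{(a)}\|_2\le\sqrt{3n}$, I would conclude that for all $a\in\calA_m$
\begin{align*}
\LCB_m(a)\le\inner{a,\theta}+\sqrt{3n}\,\epsilon,\qquad \UCB_m(a)\ge\inner{a,\theta}-\sqrt{3n}\,\epsilon.
\end{align*}

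The second step is a two-case contradiction. Suppose some $a'$ triggers elimination of $a^\star$. If $\min\{\UCB_m(a'),B\}=\UCB_m(a')$, then $\LCB_m(a^\star)\ge\UCB_m(a')+4\sqrt{3n}\epsilon\ge\inner{a',\theta}+3\sqrt{3n}\epsilon\ge\inner{a^\star,\theta}+3\sqrt{3n}\epsilon$, using $\inner{a',\theta}\ge\inner{a^\star,\theta}$ by optimality of $a^\star$; this contradicts $\LCB_m(a^\star)\le\inner{a^\star,\theta}+\sqrt{3n}\epsilon$. If instead the minimum equals $B$, I first bridge the two notions of optimality: letting $\tilde a=\argmin_a\mu_a$, I have $\mu^\star=\mu_{\tilde a}=\inner{\tilde a,\theta}+\epsilon_{\tilde a}\ge\inner{a^\star,\theta}-\epsilon$, hence $\inner{a^\star,\theta}\le\mu^\star+\epsilon\le B+\epsilon$, giving $\LCB_m(a^\star)\le B+\epsilon+\sqrt{3n}\epsilon\le B+2\sqrt{3n}\epsilon$, which again contradicts the triggering inequality $\LCB_m(a^\star)\ge B+4\sqrt{3n}\epsilon$.

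The hard part will be the first step: correctly propagating the misspecification through the volumetric-spanner decomposition so that precisely a $\sqrt{3n}\epsilon$ term (comfortably inside the $4\sqrt{3n}\epsilon$ elimination margin) emerges, while keeping the sign-dependent choice of $\pm$ estimators aligned with the direction of each bound. A second delicate point, specific to the misspecified model, is the bookkeeping in the $B$-case that separates $a^\star=\argmin_a\inner{a,\theta}$ from the loss-minimizer $\tilde a$ defining $\mu^\star$; the relation $\inner{a^\star,\theta}\le\mu^\star+\epsilon$ is exactly what makes the guess $B\ge\mu^\star$ usable against $\inner{a^\star,\theta}$. Finally, in the degenerate case $\epsilon=0$ (which recovers \pref{alg:lossLB}) the two cases only yield non-strict inequalities, and I would close the remaining strict gap using the strict positivity of the confidence widths rather than the now-vanishing margin.
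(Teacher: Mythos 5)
Your proposal is correct and takes essentially the same route as the paper's own proof: the same consequences of \pref{event:misLoss} (namely $\LCB_m(a)\le\inner{a,\theta}+\sqrt{3n}\,\epsilon$ and $\UCB_m(a)\ge\inner{a,\theta}-\sqrt{3n}\,\epsilon$ via $\|\lambda_m^{(a)}\|_1\le\sqrt{|\calS_m|}$), the same bridge $\inner{a^\star,\theta}\le\mu^\star+\epsilon\le B+\epsilon$ between the gap-minimizer $a^\star$ and the loss-minimizer defining $\mu^\star$, and the same two comparisons against $B$ and against $\UCB_m(a')$ landing strictly inside the $4\sqrt{3n}\,\epsilon$ elimination margin. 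Your closing remark about non-strict inequalities when $\epsilon=0$ flags a boundary case that the paper's proof also leaves implicit (and your proposed fix via positive confidence widths still admits exact saturation of the Azuma bound), but this is a measure-zero pedantry, not a substantive difference.
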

\begin{proof}
    Since \pref{event:misLoss} holds, we have, we know that for all $a\in\calA_m$, $\LCB_m(a)\leq \inner{a,\theta} + \sqrt{|\calS_m|}\epsilon $ and $\UCB_m(a)\geq \inner{a,\theta} - \sqrt{|\calS_m|}\epsilon$. If $B\geq \mu^\star$, then we have $a^\star$ never eliminated since for any $a\in \calA_m$
    \begin{align*}
        \LCB_{m}(a^\star) &\leq \inner{a^\star,\theta} + \epsilon\sqrt{|\calS_m|} \leq \mu^\star + \epsilon + \epsilon\sqrt{|\calS_m|} \leq \mu^\star + 2\epsilon\sqrt{|\calS_m|},\\
        \LCB_{m}(a^\star) &\leq \inner{a^\star,\theta} + \epsilon\sqrt{|\calS_m|} \leq \inner{a,\theta} + 2\epsilon\sqrt{|\calS_m|} \leq \UCB_m(a) + 4\epsilon\sqrt{|\calS_m|}.
    \end{align*}
    Therefore, $a^\star$ never satisfy the elimination condition.
\end{proof}

The following lemma shows that the regret within epoch $m$ can be well-controlled.

\begin{lemma}\label{lem:delta_1_loss_miss}
    Suppose that \pref{event:misLoss} holds. \pref{alg:lossLBmis} guarantees that if $a\in\calA$ is not eliminated at the end of epoch $m$ (meaning that $a\in \calA_{m+1}$), then 
    \begin{align*}
        2^m\cdot \Delta_a\leq 2^m\cdot 24\sqrt{n}\epsilon+\frac{256n\beta^2}{\Delta_a} + \frac{2D\Delta_a}{|\calS_m|}.
    \end{align*}
\end{lemma}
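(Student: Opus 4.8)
The plan is to reproduce the standard phased-elimination argument that yields \pref{eqn:epoch_bound_1}, while carefully tracking the misspecification terms that \pref{event:misLoss} already bundles into the confidence bounds. Throughout I work on \pref{event:misLoss} and abbreviate the full-information radius appearing in \pref{eqn:concentr-2} as $\rad^F_{m,a}=\beta\sum_{i}|\lambda^{(a)}_{m,i}|/\sqrt{\unbiasSize_m(a_{m,i})}$. Unwinding \pref{eqn:loss-ucb-f-all-action-mis} and \pref{eqn:loss-lcb-all-action-mis} shows $\UCB_m(a)=\hat{\mu}_{m,2}(a)+\rad^F_{m,a}$ and $\LCB_{m,2}(a)=\hat{\mu}_{m,2}(a)-\rad^F_{m,a}$, so \pref{eqn:concentr-2} with $|\calS_m|=3n$ gives, for every $a\in\calA_m$, the sandwich $\LCB_{m,2}(a)\ge\inner{a,\theta}-\sqrt{3n}\epsilon-2\rad^F_{m,a}$ and $\UCB_m(a)\le\inner{a,\theta}+\sqrt{3n}\epsilon+2\rad^F_{m,a}$; in particular $\UCB_m$ and $\LCB_{m,2}$ are valid up to an additive $\sqrt{3n}\epsilon$.

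Next I invoke the non-elimination hypothesis. Since $a\in\calA_{m+1}$, the test in \pref{line:eliminate-mis} fails against every competitor, in particular against $a'=a^\star$ (which remains active by \pref{lem:end-of-B} in the regime $B\ge\mu^\star$ that governs the final bound), so $\LCB_m(a)<\UCB_m(a^\star)+4\sqrt{3n}\epsilon$. Using $\LCB_m(a)\ge\LCB_{m,2}(a)$ together with the two displayed bounds, the $\hat{\mu}_{m,2}$ terms cancel and I obtain $\Delta_a<2\rad^F_{m,a}+2\rad^F_{m,a^\star}+6\sqrt{3n}\epsilon\le 4\max_{a'\in\calA_m}\rad^F_{m,a'}+6\sqrt{3n}\epsilon$.

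I then bound the radius uniformly and convert to the target form. Cauchy--Schwarz with the spanner guarantees $\|\lambda^{(a')}_m\|_2\le1$ and $|\calS_m|=3n$ from \pref{def:volume} give $\rad^F_{m,a'}\le\beta\sqrt{3n}/\sqrt{\min_{b\in\calS_m}\unbiasSize_m(b)}$; and because each spanner action is pulled $2^m$ times in round-robin over an epoch of length $3n\cdot 2^m$ (\pref{line:round-robin-mis}), only pulls in the final $D$ rounds escape $\unbiasSize_m$, so $\unbiasSize_m(b)\ge 2^m-D/(3n)-1$. Hence $\Delta_a-6\sqrt{3n}\epsilon<4\beta\sqrt{3n}/\sqrt{2^m-D/|\calS_m|-1}$. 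I finish with a split: if $6\sqrt{3n}\epsilon\ge\Delta_a/2$ then $\Delta_a\le 12\sqrt{3n}\epsilon\le 24\sqrt{n}\epsilon$, so the first target term alone suffices; otherwise $\Delta_a/2<4\beta\sqrt{3n}/\sqrt{2^m-D/|\calS_m|-1}$, and squaring and isolating $2^m$ gives $2^m<192n\beta^2/\Delta_a^2+D/|\calS_m|+1$, whence multiplying by $\Delta_a$, absorbing the trailing $\Delta_a$ into $256n\beta^2/\Delta_a$ (valid as $\Delta_a^2\le1\le n\beta^2$) and the delay term into $2D\Delta_a/|\calS_m|$ yields the claim; since all three target terms are nonnegative, the bound holds in both cases.

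The step I expect to be the main obstacle is the second one: cleanly peeling the misspecification out of \pref{event:misLoss} and matching it against the extra $4\sqrt{3n}\epsilon$ slack deliberately built into the elimination rule, while ensuring the reference $a^\star$ is genuinely still active (precisely the role of \pref{lem:end-of-B}). The round-robin lower bound on $\unbiasSize_m$ and the final algebraic rearrangement are routine once the constants are tracked.
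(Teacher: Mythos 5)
Your argument reproduces the paper's proof in its main mechanics: the sandwich $\LCB_{m,2}(a)\ge\inner{a,\theta}-\sqrt{3n}\epsilon-2\rad^F_{m,a}$ and $\UCB_m(a)\le\inner{a,\theta}+\sqrt{3n}\epsilon+2\rad^F_{m,a}$, the round-robin count $\unbiasSize_m(b)\ge 2^m-D/(3n)-1$, the split on whether $\order(\sqrt{n})\epsilon$ exceeds $\Delta_a/2$, and the final squaring-and-absorbing algebra are all the paper's steps, with only harmless constant differences. However, there is one genuine gap: you prove the lemma only in the regime $B\ge\mu^\star$, waving off the other case with the parenthetical that this regime ``governs the final bound.'' The lemma as stated is unconditional in $B$, and it must be: in the proof of \pref{thm:lossLBmis} the total regret is the sum of $\Reg_{2^r/D}$ over all guesses $r=0,\dots,\lceil\log_2(d^\star)\rceil$, so most applications of this lemma are precisely to phases with $B<\mu^\star$, during which regret still accrues and is controlled by this very bound. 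In that regime your comparator step collapses: \pref{lem:end-of-B} gives no guarantee that $a^\star\in\calA_m$ (indeed, when $B$ is too small \emph{every} action is eventually eliminated --- that is what triggers the doubling of $B$), so the inequality $\LCB_m(a)<\UCB_m(a^\star)+4\sqrt{3n}\epsilon$ that your entire second step rests on is simply unavailable.

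The repair is short and is exactly what the paper does: the elimination rule in \pref{line:eliminate-mis} involves $\min\{\UCB_m(a'),B\}$, so non-elimination of $a$ yields $\LCB_m(a)< B+4\sqrt{3n}\epsilon$ with no reference to $a^\star$ at all. When $B\le\mu^\star$ this chains as
\begin{align*}
\inner{a,\theta}-2\rad^F_{m,a}-2\sqrt{3n}\epsilon\le\LCB_{m,2}(a)\le\LCB_m(a)\le B+4\sqrt{3n}\epsilon\le\mu^\star+4\sqrt{3n}\epsilon\le\inner{a^\star,\theta}+\epsilon+4\sqrt{3n}\epsilon,
\end{align*}
where the last inequality uses $\mu^\star\le\mu_{a^\star}=\inner{a^\star,\theta}+\epsilon_{a^\star}$ (note that under misspecification $\mu^\star$ need not equal $\mu_{a^\star}$, so this step does require the definition $\mu^\star=\min_{a\in\calA}\mu_a$). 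This recovers $\Delta_a\le 2\rad^F_{m,a}+\order(\sqrt{n})\epsilon$, after which your uniform radius bound and the concluding algebra go through verbatim. With that case added, your proof coincides with the paper's.
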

\begin{proof}
    For notational convenience, define $\rad_{m,a}^{N} = \frac{\beta}{\sqrt{2^m}}\|a\|_2$ and $\rad_{m,a}^{F} = \frac{\beta}{\sqrt{\unbiasSize_m(a)}}\|a\|_2$ for all $a\in \calS_m$. In addition, we also define $\rad_{m,a}^{N}$ and $\rad_{m,a}^{F}$ for $a\notin \calS_m$ as follows:
    \begin{align*}
        \rad_{m,a}^{N} &= \sum_{i=1}^{|\calS_m|}|\lambda_{m,i}^{(a)}|\cdot \rad_{m,a_{m,i}}^{N}, \\
        \rad_{m,a}^{F} &= \sum_{i=1}^{|\calS_m|}|\lambda_{m,i}^{(a)}|\cdot \rad_{m,a_{m,i}}^{F}.
    \end{align*}
    Since \pref{event:misLoss} holds, we know that for all $a\in\calA_m$, $\LCB_m(a)\leq \inner{a,\theta} + \sqrt{|\calS_m|}\epsilon$, $\UCB_m(a)\geq \inner{a,\theta} - \sqrt{|\calS_m|}\epsilon$. Moreover, as $\LCB_m(a)=\max\{\LCB_{m,1}(a),\LCB_{m,2}(a)\}$, we know that for all $a\in \calA_m$
    \begin{align*}
        \LCB_{m,1}(a) + 2\rad_{m,a}^{N} + 2\epsilon\sqrt{|\calS_m|}\geq  \hat{\mu}_{m,1}(a) + \rad_{m,a}^{N} + 2\epsilon\sqrt{|\calS_m|}\geq \inner{a,\theta},\\
        \LCB_{m,2}(a) + 2\rad_{m,a}^{F} + 2\epsilon\sqrt{|\calS_m|}\geq  \hat{\mu}_{m,2}(a) + \rad_{m,a}^{F} + 2\epsilon\sqrt{|\calS_m|}\geq \inner{a,\theta},\\
        \UCB_{m}(a) - 2\rad_{m,a}^{F} - 2\epsilon\sqrt{|\calS_m|} = \hat{\mu}_{m,2}(a) - \rad_{m,a}^{F}-2\epsilon\sqrt{|\calS_m|}\leq \inner{a,\theta}.        
    \end{align*}
    If $B\geq \mu^\star$, then $a^\star\in \calA_m$ according to \pref{lem:end-of-B}.
    Moreover, if $a$ is not eliminated in epoch $m$, we have $\LCB(a)\leq \min\{\UCB_m(a^\star),B\}+4\sqrt{|S_m|}\epsilon$, meaning that
    \begin{align*}
        &\inner{a,\theta} - 2\rad_{m,a}^{F} - 2\epsilon\sqrt{|\calS_m|} \\
        &\leq \wh{\mu}_{m,2}(a) - \rad_{m,a}^{F} \\
        &\leq \LCB_m(a) \\
        &\leq \min\{\UCB_m(a^\star),B\}+4\sqrt{|S_m|}\epsilon \\
        &\leq \UCB_m(a^\star) + 4\sqrt{|S_m|}\epsilon \\
        &= \wh{\mu}_{m,2}(a^\star) + \rad_{m,a^\star}^{F}+ 4\sqrt{|S_m|}\epsilon \\
        &\leq \inner{a^\star,\theta} + 2\rad_{m,a^\star}^{F} + 6\sqrt{|S_m|}\epsilon.
    \end{align*}
    Since $\rad_{m,a}^F = \sum_{i=1}^{|\calS_m|}|\lambda_{m,i}^{(a)}|\cdot \rad_{m,a_{m,i}}^{F}$ with $\|\lambda_{m}^{(a)}\|_2\leq 1$, we have that $\|\lambda_{m}^{(a)}\|_1\leq \sqrt{|\calS_m|}$ and
    \begin{align*}
        &\Delta_a\leq 4\sqrt{|\calS_m|}\left(\max_{a\in \calS_m}\rad_{m,a}^{F}+2\epsilon\right)= 4\sqrt{3n}\max_{a\in \calS_m}\rad_{m,a}^{F}+8\sqrt{3n}\epsilon \leq \frac{8\sqrt{n}\beta}{\min_{a'\in \calS_m}\sqrt{\unbiasSize_m(a')}}+16\sqrt{n}\epsilon.
    \end{align*}
    
    If $B\leq \mu^\star$, then we have
    \begin{align*}
        \inner{a^\star,\theta}+\epsilon\geq \mu^\star\geq B \geq \LCB_{m}(a) - 4\sqrt{|\calS_m|}\epsilon \geq \inner{a,\theta} - 2\rad_{m,a}^{F} - 5\sqrt{|\calS_m|}\epsilon,
    \end{align*}
    where the second inequality is because $a$ is not eliminated in epoch $m$. Therefore, we always have
    \begin{align*}
        \Delta_a &\leq 2\rad_{m,a}^{F} + 6\sqrt{|\calS_m|}\epsilon \leq \frac{8\sqrt{n}\beta}{\min_{a'\in \calS_m}\sqrt{\unbiasSize_m(a')}} + 12\sqrt{n}\epsilon.
    \end{align*}
    In addition, we know that for all $a\in \calS_m$,
    \begin{align*}
        2^m \leq \unbiasSize_m(a) + \frac{D}{|\calS_m|} + 1 \leq \unbiasSize_m(a) + \frac{2D}{|\calS_m|}.
    \end{align*}
    Therefore, if $12\sqrt{n}\epsilon\geq \frac{\Delta_a}{2}$, then we have
    \begin{align*}
        2^m\Delta_a\leq 2^m\cdot 24\sqrt{n}\epsilon;
    \end{align*}
    otherwise, we have $\Delta_a \leq \frac{8\sqrt{n}\beta}{\min_{a\in \calS_m}\sqrt{\unbiasSize_m(a)}} + 12\sqrt{n}\epsilon \leq \frac{8\sqrt{n}\beta}{\min_{a\in \calS_m}\sqrt{\unbiasSize_m(a)}}  + \frac{\Delta_a}{2}$ and
    \begin{align*}
        \Delta_a \leq \frac{16\sqrt{n}\beta}{\min_{a'\in \calS_m}\sqrt{\unbiasSize_m(a')}},
    \end{align*}
    and we can obtain that
    \begin{align*}
        \min_{a'\in \calS_m}{\unbiasSize_m(a')}\cdot \Delta_a\leq \frac{256d\beta^2}{\Delta_a}.
    \end{align*}
    Combining the above two cases, we know that for all $a\in\calA_m$, $$2^m\cdot \Delta_a\leq 2^m\cdot 24\sqrt{n}\epsilon+ \min_{a'\in \calS_m}\unbiasSize_m(a')\cdot \Delta_a + \frac{2D\Delta_a}{|\calS_m|} \leq  2^m\cdot 24\sqrt{n}\epsilon+\frac{256n\beta^2}{\Delta_a} + \frac{2D\Delta_a}{|\calS_m|}.$$
\end{proof}

In fact, the bound above can be obtained by only using $\LCB_{m,1}$. Next, we provide yet-another regret bound within epoch $m$, which utilizes $\LCB_{m,2}$.

\begin{lemma}\label{lem:epoch_B_with_mis}
    \pref{alg:lossLBmis} guarantees that under \pref{event:misLoss}, if action $a$ is not eliminated at the end of epoch $m$ (meaning that $a\in \calA_{m+1}$), then
    \begin{align*}
    \inner{a,\theta}\leq B +\rad_{m,a}^{N}+ \sum_{i=1}^{|\calS_m|}|\lambda_{m,i}^{(a)}|\cdot\left(\frac{2D\mu_{a_{m,i}}}{2^m|\calS_m|}+\frac{16\log T +2}{2^m}\right) + 8\sqrt{|\calS_m|}\epsilon.
\end{align*}
\end{lemma}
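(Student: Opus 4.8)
The plan is to prove the misspecified counterpart of the inequality chain \pref{eqn:small-loss} from the non-contextual proof sketch, now tracking the extra slack introduced by the misspecification $\epsilon$. Concretely, I would bound $\inner{a,\theta}$ from above by $\LCB_{m,1}(a)$ plus a confidence radius, an unobserved-loss correction, and an $\epsilon$-term, and then exploit the fact that $a$ survived epoch $m$ to replace $\LCB_{m,1}(a)$ by $B$ (again up to an $\epsilon$-slack). The argument decomposes into a concentration-plus-misspecification step, a bias step that passes from the two-sided empirical mean to the one-sided estimates actually used inside $\LCB_{m,1}$, and the survival/elimination step. Throughout I would write $a=\sum_{i}\lambda_{m,i}^{(a)}a_{m,i}$ and use $\norm{\lambda_m^{(a)}}_1\le\sqrt{|\calS_m|}\norm{\lambda_m^{(a)}}_2\le\sqrt{|\calS_m|}$ to convert per-coordinate $\epsilon$-errors into the aggregate $\sqrt{|\calS_m|}\,\epsilon$.

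First I would invoke \pref{event:misLoss}. Using $\mu_{a_{m,i}}=\inner{a_{m,i},\theta}+\epsilon_{a_{m,i}}$ with $|\epsilon_{a_{m,i}}|\le\epsilon$, together with the per-action concentration behind \pref{eqn:concentr-1} (see \pref{lem:high-prob-event}) that controls $\hat\mu_{m,1}(a)=\sum_i\lambda_{m,i}^{(a)}\hat\mu_m(a_{m,i})$, the decomposition telescopes to give $\inner{a,\theta}\le \hat\mu_{m,1}(a)+\rad_{m,a}^N+\sqrt{|\calS_m|}\,\epsilon$, where the misspecification collects as $\sum_i|\lambda_{m,i}^{(a)}|\,|\epsilon_{a_{m,i}}|\le\sqrt{|\calS_m|}\,\epsilon$. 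This is the exact analogue of the first line of \pref{eqn:small-loss}, with the single new additive term $\sqrt{|\calS_m|}\,\epsilon$.

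Second — the technical heart — I would compare $\hat\mu_{m,1}(a)$ with the survivor's first LCB $\LCB_{m,1}(a)=\sum_i\lambda_{m,i}^{(a)}\hat\mu_{m,1}^{\sgn(-\lambda_{m,i}^{(a)})}(a_{m,i})$. The three estimates satisfy $0\le\hat\mu_m(a_{m,i})-\hat\mu_m^-(a_{m,i})\le |\unobs_m(a_{m,i})|/2^m$ and $0\le\hat\mu_m^+(a_{m,i})-\hat\mu_m(a_{m,i})\le |\unobs_m(a_{m,i})|/2^m$, since by \pref{eqn:mean-up-mis} and \pref{eqn:mean-low-mis} the only difference among them is whether each unobserved loss in $[0,1]$ is replaced by $u_\tau$, $0$, or $1$. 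Matching the sign pattern (coordinates with $\lambda_{m,i}^{(a)}>0$ use $\hat\mu_{m,1}^-$, those with $\lambda_{m,i}^{(a)}<0$ use $\hat\mu_{m,1}^+$) one checks, uniformly in the sign of $\lambda_{m,i}^{(a)}$, that each term of $\hat\mu_{m,1}(a)$ exceeds its $\LCB_{m,1}$ counterpart by at most $|\lambda_{m,i}^{(a)}|\,|\unobs_m(a_{m,i})|/2^m$ plus the built-in width; the widths sum to $\rad_{m,a}^N$, yielding $\hat\mu_{m,1}(a)\le\LCB_{m,1}(a)+\rad_{m,a}^N+\sum_i|\lambda_{m,i}^{(a)}|\,|\unobs_m(a_{m,i})|/2^m$ (collecting the two confidence radii into the single $\rad_{m,a}^N$ term up to the absolute constant, exactly as in \pref{eqn:small-loss}). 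Substituting the count bound \pref{eqn:concentr-3}, $|\unobs_m(a_{m,i})|\le 2D\mu_{a_{m,i}}/|\calS_m|+16\log T+2$, turns the last sum into the stated delay term. Finally, since $a\in\calA_{m+1}$ the elimination test on \pref{line:eliminate-mis} fails against every $a'\in\calA_m$, so $\LCB_{m,1}(a)\le\LCB_m(a)<\min\{\UCB_m(a'),B\}+4\sqrt{3n}\,\epsilon\le B+4\sqrt{|\calS_m|}\,\epsilon$; chaining the three bounds and adding the $\epsilon$-slacks ($\sqrt{|\calS_m|}\,\epsilon$ from concentration and $4\sqrt{|\calS_m|}\,\epsilon$ from elimination, at most $8\sqrt{|\calS_m|}\,\epsilon$) gives the claim.

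The main obstacle is the sign-careful bookkeeping of the second step: one must verify that replacing the two-sided empirical mean $\hat\mu_m(a_{m,i})$ by the one-sided estimate actually baked into $\LCB_{m,1}$ (namely $\hat\mu_m^-$ when $\lambda_{m,i}^{(a)}>0$ and $\hat\mu_m^+$ when $\lambda_{m,i}^{(a)}<0$) costs only the unobserved-loss gap plus one confidence width, and that this holds coordinatewise regardless of the sign of $\lambda_{m,i}^{(a)}$. Everything else — the concentration bound \pref{eqn:concentr-1}, the count bound \pref{eqn:concentr-3}, and the elimination rule — is either quoted verbatim from \pref{event:misLoss} or is a direct reading of \pref{line:eliminate-mis}, and so is routine.
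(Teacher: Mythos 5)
Your proposal is correct and follows essentially the same route as the paper's own proof: concentration under \pref{event:misLoss} (\pref{eqn:concentr-1}), the sign-matched coordinatewise comparison of $\hat{\mu}_{m,1}(a)$ with $\LCB_{m,1}(a)$ via the bias bounds \pref{eqn:pos-bias}--\pref{eqn:neg-bias}, the count bound \pref{eqn:concentr-3}, and finally the survival condition in \pref{line:eliminate-mis} to get $\LCB_{m,1}(a)\leq B+4\sqrt{3n}\,\epsilon$, with the same $\epsilon$-tally under the $8\sqrt{|\calS_m|}\epsilon$ slack. The factor-of-two on $\rad_{m,a}^{N}$ that you absorb ``up to an absolute constant'' is also silently present in the paper's chain (its ``$=$'' step equates $\sum_i\lambda_{m,i}^{(a)}\hat{\mu}_m^{\sgn(-\lambda_{m,i}^{(a)})}(a_{m,i})$ with $\LCB_{m,1}(a)$ while the widths contribute an extra $\rad_{m,a}^{N}$), so your bookkeeping is no looser than the paper's.
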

\begin{proof}
For all $a\in \calS_m$, since $u_{t}\in[0,1]$, we know that
\begin{align}
    \hat{\mu}_{m}^+(a) &=\frac{1}{2^m}\left(\sum_{\tau\in \obs_m(a)}u_{\tau} + \sum_{\tau\in \unobs_m(a)}1\right) \leq \hat{\mu}_{m,a} + \frac{|\unobs_m(a)|}{2^m}, \label{eqn:pos-bias}\\
    \hat{\mu}_{m}^-(a) &=\frac{1}{2^m}\left(\sum_{\tau\in \obs_m(a)}u_{\tau} \right) \geq \hat{\mu}_{m,a} - \frac{|\unobs_m(a)|}{2^m} \label{eqn:neg-bias}.
\end{align}
Then, under \pref{event:misLoss}, we know that for all $a\in \calA_m$,
\begin{align*}
    \inner{a,\theta} &= \sum_{i=1}^{|\calS_m|}\lambda_{m,i}^{(a)}\inner{a_{m,i},\theta^\star}\\
    &= \sum_{i=1}^{|\calS_m|}\lambda_{m,i}^{(a)}(\mu_{a_{m,i}}-\epsilon_{a_{m,i}}) \tag{since $\mu_a = \inner{a,\theta^\star}+\epsilon_a$}\\
    &\leq \sum_{i=1}^{|\calS_m|}\lambda_{m,i}^{(a)}\cdot \mu_{a_{m,i}} + \sqrt{|\calS_m|}\epsilon \tag{since $\|\lambda_{m}^{(a)}\|_1\leq \sqrt{|\calS_m|}$} \\
    &\leq \sum_{i=1}^{|\calS_m|}\lambda_{m,i}^{(a)}\cdot \hat{\mu}_{m}(a_{m,i}) + \rad_{m,a}^{N} + 3\sqrt{|\calS_m|}\epsilon \tag{since \pref{event:misLoss} holds}\\
    &\leq \sum_{i=1}^{|\calS_m|}\lambda_{m,i}^{(a)}\cdot\hat{\mu}_{m}^{sgn(-\lambda_{m,i}^{(a)})}(a_{m,i}) + \rad_{m,a}^{N}+\sum_{i=1}^{|\calS_m|}|\lambda_{m,i}^{(a)}|\cdot \frac{|\unobs_m(a_{m,i})|}{2^m} + 3\sqrt{|\calS_m|}\epsilon \tag{using \pref{eqn:pos-bias} and \pref{eqn:neg-bias}}\\
    &= \LCB_{m,1}(a) + \rad_{m,a}^{N}+\sum_{i=1}^{|\calS_m|}|\lambda_{m,i}^{(a)}|\cdot \frac{|\unobs_m(a_{m,i})|}{2^m} + 3\sqrt{|\calS_m|}\epsilon\\
    &\leq \LCB_{m,1}(a) +\rad_{m,a}^{N}+ \sum_{i=1}^{|\calS_m|}|\lambda_{m,i}^{(a)}|\cdot\left(\frac{2D\mu_{a_{m,i}}}{2^m|\calS_m|}+\frac{16\log KT +2}{2^m}\right) + 3\sqrt{|\calS_m|}\epsilon. \tag{since \pref{event:misLoss} holds}
\end{align*}
Since $\LCB_{m,1}(a)\leq B+4\sqrt{|\calS_m|}\epsilon$ (as $a$ is not eliminated at the end of epoch $m$), we have
\begin{align*}
    \inner{a,\theta}\leq B +\rad_{m,a}^{N}+ \sum_{i=1}^{|\calS_m|}|\lambda_{m,i}^{(a)}|\cdot\left(\frac{2D\mu_{a_{m,i}}}{2^m|\calS_m|}+\frac{16\log T +2}{2^m}\right) + 8\sqrt{|\calS_m|}\epsilon.
\end{align*}
\end{proof}

\begin{lemma}\label{lem:bound_2_mis}
    If \pref{event:misLoss} holds, \pref{alg:lossLBmis} guarantees that if $a$ is not eliminated at the end of epoch $m$, then we also have
    \begin{align*}
        2^m\Delta_a\leq \frac{256n\beta^2}{\Delta_a} +\frac{4DB + 12\sum_{i=1}^{|\calS_m|}|\lambda_{m,i}^{(a)}|\cdot D\mu_{a_{m,i}}}{|\calS_m|}+(128\log T +16)\sqrt{n}+2^m\cdot 64\sqrt{n}\epsilon.
    \end{align*}
\end{lemma}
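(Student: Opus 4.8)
The plan is to treat \pref{lem:bound_2_mis} as the $\epsilon$-misspecified counterpart of the combination of \pref{eqn:small-loss-1} and \pref{eqn:small-loss-2} from \textbf{Step 2}, and to establish it through a single case split on whether the guess $B$ is large or small relative to $\inner{a,\theta}$. Throughout I work under \pref{event:misLoss} and use $|\calS_m|=3n$ together with $\|\lambda_m^{(a)}\|_1\le\sqrt{|\calS_m|}$ and $\|a_{m,i}\|_2\le 1$, so that $\rad_{m,a}^{N}=\sum_i|\lambda_{m,i}^{(a)}|\tfrac{\beta}{\sqrt{2^m}}\|a_{m,i}\|_2\le \tfrac{\beta\sqrt{3n}}{\sqrt{2^m}}$. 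I also repeatedly use the elementary fact $\Delta_a=\inner{a,\theta}-\inner{a^\star,\theta}\le\inner{a,\theta}$, which holds because $a^\star,\theta\in\R_+^n$ force $\inner{a^\star,\theta}\ge 0$.

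In the first case, $\inner{a,\theta}\le 2B$, I invoke \pref{lem:delta_1_loss_miss} directly: it already gives $2^m\Delta_a\le 2^m\cdot 24\sqrt{n}\epsilon+\tfrac{256n\beta^2}{\Delta_a}+\tfrac{2D\Delta_a}{|\calS_m|}$. Since $\Delta_a\le\inner{a,\theta}\le 2B$, the delay term is at most $\tfrac{4DB}{|\calS_m|}$, and $24\sqrt{n}\epsilon\le 64\sqrt{n}\epsilon$; as the remaining target terms $\tfrac{12\sum_i|\lambda_{m,i}^{(a)}|D\mu_{a_{m,i}}}{|\calS_m|}$ and $(128\log T+16)\sqrt{n}$ are nonnegative, the claimed bound follows at once.

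In the second case, $\inner{a,\theta}>2B$, I instead use \pref{lem:epoch_B_with_mis}. Because $B<\inner{a,\theta}/2$, the leading $B$ on its right-hand side is absorbed into $\inner{a,\theta}/2$, and after rearranging (and using $\Delta_a\le\inner{a,\theta}$) I obtain
\[
\Delta_a\le 2\rad_{m,a}^{N}+2\sum_{i}|\lambda_{m,i}^{(a)}|\Big(\tfrac{2D\mu_{a_{m,i}}}{2^m|\calS_m|}+\tfrac{16\log T+2}{2^m}\Big)+16\sqrt{3n}\,\epsilon.
\]
I then split once more according to whether the confidence-radius contribution $2\rad_{m,a}^{N}$ is at least $\Delta_a/2$. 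If it is, then $\Delta_a\le 4\rad_{m,a}^{N}\le \tfrac{4\beta\sqrt{3n}}{\sqrt{2^m}}$, which rearranges to $2^m\Delta_a\le \tfrac{48n\beta^2}{\Delta_a}\le\tfrac{256n\beta^2}{\Delta_a}$. Otherwise the remaining terms dominate, so $\Delta_a$ is at most twice their sum; multiplying by $2^m$ cancels the $2^m$ in the denominators and yields $\tfrac{8\sum_i|\lambda_{m,i}^{(a)}|D\mu_{a_{m,i}}}{|\calS_m|}$, a logarithmic term bounded by $4\sqrt{3n}(16\log T+2)\le(128\log T+16)\sqrt{n}$, and a misspecification term bounded by $2^m\cdot 32\sqrt{3n}\,\epsilon\le 2^m\cdot 64\sqrt{n}\,\epsilon$. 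Both sub-cases, and hence both main cases, lie below the target right-hand side.

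The conceptually routine parts are the case split and the absorption of $B$ once $B<\inner{a,\theta}/2$; the only delicate point is the constant bookkeeping. In particular, I expect the main thing to get right to be the conversion of the radius term $2\rad_{m,a}^{N}\sim \beta\sqrt{n}/\sqrt{2^m}$ into the $\tfrac{256n\beta^2}{\Delta_a}$ form, and verifying that each factor of $2$ introduced by the two-level ``which term dominates'' argument keeps the logarithmic, delay, and $\sqrt{n}\epsilon$ constants within the prescribed $128\log T+16$, $12$, and $64$. This is arithmetic rather than a conceptual obstacle, but it is where the proof must be careful.
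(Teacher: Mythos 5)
Your proof is correct and follows essentially the same route as the paper's: the same split on $\inner{a,\theta}\lessgtr 2B$, invoking \pref{lem:delta_1_loss_miss} in the first case and \pref{lem:epoch_B_with_mis} (with $B$ absorbed into $\inner{a,\theta}/2$) in the second, followed by the same ``which term dominates'' sub-split; your variant of comparing $2\rad_{m,a}^{N}$ against $\Delta_a/2$ is equivalent to the paper's comparison of its two terms and yields matching (in fact slightly sharper, $48n\beta^2/\Delta_a$ vs.\ $64n\beta^2/\Delta_a$) constants. The constant bookkeeping you flag as the delicate point checks out: $4\sqrt{3n}(16\log T+2)\le(128\log T+16)\sqrt{n}$ and $32\sqrt{3n}\,\epsilon\le 64\sqrt{n}\,\epsilon$ all hold since $\sqrt{3}<2$.
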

\begin{proof}
    If $\inner{a,\theta}\leq 2B$, we know that $\Delta_a = \inner{a-a^\star,\theta} \leq 2B$. Using \pref{lem:delta_1_loss_miss}, we can obtain that
    \begin{align*}
        2^m\cdot \Delta_a &\leq 2^m\cdot 24\sqrt{n}\epsilon+\frac{256n\beta^2}{\Delta_a} + \frac{2D\Delta_a}{|\calS_m|} \\
        &\leq 2^m\cdot 24\sqrt{n}\epsilon+\frac{256n\beta^2}{\Delta_a} + \frac{4DB}{|\calS_m|}
    \end{align*}
    If $\inner{a,\theta}\geq 2B$, we have $B\leq \frac{\inner{a,\theta}}{2}$. Using \pref{lem:epoch_B_with_mis}, we know that
    \begin{align*}
        \Delta_a &\leq \inner{a,\theta} \leq \underbrace{2\cdot \rad_{m,a}^{N}}_{\term{1}}+ \underbrace{2\sum_{i=1}^{|\calS_m|}|\lambda_{m,i}^{(a)}|\cdot\left(\frac{2D\mu_{a_{m,i}}}{2^m|\calS_m|}+\frac{16\log T +2}{2^m}\right) + 16\sqrt{|\calS_m|}\epsilon}_{\term{2}}.
    \end{align*}

    If $\term{1}\geq \term{2}$, we have
    \begin{align*}
        \Delta_a &\leq 4\rad_{m,a}^{N} \epsilon \leq 4\sqrt{|\calS_m|}\max_{a_m\in\calS_m}\rad_{m,a_m}^N \leq \frac{8\beta\sqrt{n}}{2^{m/2}},
    \end{align*}
    meaning that $2^m\Delta_a \leq \frac{64n\beta^2}{\Delta_a}$.
    Otherwise, we have
    \begin{align*}
        \Delta_a\leq 4\sum_{i=1}^{|\calS_m|}|\lambda_{m,i}^{(a)}|\cdot \left(\frac{2D\mu_{a_{m,i}}}{2^m|\calS_m|}+\frac{16\log T +2}{2^m}\right) + 64\sqrt{n}\epsilon,
    \end{align*}
    meaning that
    \begin{align*}
        2^m\Delta_a\leq \frac{8\sum_{i=1}^{|\calS_m|}|\lambda_{m,i}^{(a)}|\cdot D\mu_{a_{m,i}}}{|\calS_m|}+(128\log T +16)\sqrt{n}+2^m\cdot 64\sqrt{n}\epsilon.
    \end{align*}
    Combining both cases, we know that
    \begin{align*}
        2^m\Delta_a\leq \frac{256n\beta^2}{\Delta_a} +\frac{4DB + 12\sum_{i=1}^{|\calS_m|}|\lambda_{m,i}^{(a)}|\cdot D\mu_{a_{m,i}}}{|\calS_m|}+(128\log T +16)\sqrt{n}+2^m\cdot 64\sqrt{n}\epsilon.
    \end{align*}
\end{proof}

Now we are ready to prove our main result \pref{thm:lossLBmis}.
\begin{proof}[Proof of Theorem~\ref{thm:lossLBmis}]
    We analyze the regret when \pref{event:misLoss} holds, which happens with probability at least $1-\frac{2}{T^2}$. When \pref{event:misLoss} does not hold, the expected regret is bounded by $\frac{2}{T}$.
    
    We then bound the regret with a fixed choice of $B$. Combining \pref{lem:delta_1_loss_miss} and \pref{lem:epoch_B_with_mis}, if action $a$ is not eliminated at the end of epoch $m$, we have
    \begin{align*}
        2^{m}\cdot \Delta_a&\leq \frac{256n\beta^2}{\Delta_a} +\frac{4DB + 12\sum_{i=1}^{|\calS_{m}|}|\lambda_{m,i}^{(a)}|\cdot D\mu_{a_{m,i}}}{|\calS_m|}+(128\log T +16)\sqrt{n}+2^m\cdot 64\sqrt{n}\epsilon, \\
        2^m\cdot \Delta_a&\leq 2^m\cdot 24\sqrt{n}\epsilon+\frac{256n\beta^2}{\Delta_a} + \frac{2D\Delta_a}{|\calS_m|}.
    \end{align*}
    Therefore, we have
    \begin{align*}
        \Delta_a \leq \order\left(\frac{n\beta^2}{2^m\cdot \Delta_a} + \sqrt{n}\epsilon + \frac{\sqrt{n}\log T}{2^m}\right) +  \frac{1}{2^m}\min\left\{\frac{4DB+12\sum_{i=1}^{|\calS_m|}|\lambda_{m,i}^{(a)}|\cdot D\mu_{a_{m,i}}}{n}, \frac{D\Delta_a}{n}\right\}.
    \end{align*}
    Denote $\calT_B$ to be the number of rounds \pref{alg:lossLBmis} proceeds with $B$ and define $\Reg_B$ be the expected regret within $\calT_B$ rounds.
    Then, for any $\alpha_m\geq 0$,  the overall regret is then upper bounded as follows:
    \begin{align*}
        \Reg_B &\triangleq \sum_{m= 1}^{\lceil\log_2(|\calT_B|/3n\rceil}\sum_{a\in \calS_m}2^{m}\cdot\Delta_a \\
        &\leq \sum_{m=1}^{\lceil\log_2(|\calT_B|/3n\rceil}\sum_{a\in \calS_m}\mathbbm{1}\{\Delta_a> \alpha_m\}\left(\order\left(\frac{n\beta^2}{\Delta_a} + 2^m\sqrt{n}\epsilon + \sqrt{n}\log T\right) \right.\\
        &\qquad +\left.\min\left\{\frac{4DB+12\sum_{i=1}^{|\calS_{m-1}|}|\lambda_{m-1,i}^{(a)}|\cdot d(a_{m-1,i})}{n}, \frac{2D\Delta_a}{n}\right\}\right) \tag{since $a$ is not eliminated in epoch $m-1$ for all $a\in\calS_m$} \\
        &\qquad + \sum_{m\geq 1}\sum_{a\in \calS_m}\mathbbm{1}\{\Delta_a\leq \alpha_m\}2^m\Delta_a.
    \end{align*}
    Picking $\alpha_m = \beta\sqrt{\frac{n}{2^m}}$, we can obtain that
    \begin{align*}
        \Reg_B &=\sum_{m= 1}^{\lceil\log_2(|\calT_B|/3n\rceil}\sum_{a\in \calS_m}\left(\order\left(\beta\sqrt{n\cdot 2^m} +2^m\sqrt{n}\epsilon+ \sqrt{n}\log T \right) \right.\\
        &\qquad +\left.\min\left\{\frac{4DB+12\sum_{i=1}^{|\calS_{m-1}|}|\lambda_{m-1,i}^{(a)}|\cdot d(a_{m-1,i})}{n}, \frac{2D\Delta_a}{n}\right\}\right) \\
        &\leq \order\left(|\calT_B|\sqrt{n}\epsilon + \beta n\sqrt{|\calT_B|}+\sqrt{n}\log T\log(T/n)\right) \\
        &\qquad + \sum_{m= 1}^{\lceil\log_2(|\calT_B|/3n)\rceil}\sum_{a\in \calS_m}\min\left\{\frac{4DB+12\sum_{i=1}^{|\calS_{m-1}|}|\lambda_{m-1,i}^{(a)}|\cdot d(a_{m-1,i})}{n}, \frac{2D\Delta_a}{n}\right\}.
    \end{align*}
    
    On the other hand, picking $\alpha_m=0$, we have
    \begin{align*}
        \Reg_B &\leq \sum_{m=1}^{\lceil\log_2(|\calT_B|/3n\rceil}\sum_{a\in \calS_m}\left(\order\left(\frac{n\beta^2}{\Delta_{\min}} + 2^m\sqrt{n}\epsilon + \sqrt{n}\log T\right) \right.\\
        &\qquad +\left.\min\left\{\frac{4DB+12\sum_{i=1}^{|\calS_{m-1}|}|\lambda_{m-1,i}^{(a)}|\cdot d(a_{m-1,i})}{n}, \frac{2D\Delta_a}{n}\right\}\right) \\
        &\leq \order\left(\frac{n^2\beta^2\log(T/n)}{\Delta_{\min}}+\epsilon\sqrt{n}|\calT_B|+\sqrt{n}\log T\log(T/n)\right)\\
        &\qquad + \sum_{m= 1}^{\lceil\log_2(|\calT_B|/3n\rceil}\sum_{a\in \calS_m} \min\left\{\frac{4DB+12\sum_{i=1}^{|\calS_{m-1}|}|\lambda_{m-1,i}^{(a)}|\cdot d(a_{m-1,i})}{n}, \frac{2D\Delta_a}{n}\right\}.
    \end{align*}

    Using the fact that $\beta=\sqrt{2\log(KT^3)}$ and combining both bounds, we can obtain that
    \begin{align}
        \Reg_B &\leq \order\left(\min\left\{\frac{n^2\log(KT)\log(T/n)}{\Delta_{\min}}, n\sqrt{|\calT_B|\log(KT)}\right\}+\epsilon\sqrt{n}|\calT_B|\right)\nonumber\\
        &\qquad + \sum_{m= 1}^{\lceil\log_2(|\calT_B|/3n\rceil}\sum_{a\in \calS_m}\min\left\{\frac{4DB+12\sum_{i=1}^{|\calS_{m-1}|}|\lambda_{m-1,i}^{(a)}|\cdot d(a_{m-1,i})}{n}, \frac{2D\Delta_a}{n}\right\}. \label{eqn:reg_b}%
    \end{align}
    For notational convenience, let $R_B=\order\left(\min\left\{\frac{n^2\log(KT)\log(T/n)}{\Delta_{\min}},n\sqrt{|\calT_B|\log(KT)}\right\}+\epsilon\sqrt{n}|\calT_B|\right)$. To further analyze this bound, we first upper bound $\min\left\{\frac{4DB+12\sum_{i=1}^{|\calS_{m-1}|}|\lambda_{m-1,i}^{(a)}|\cdot d(a_{m-1,i})}{n}, \frac{2D\Delta_a}{n}\right\}$ by $\frac{2D\Delta_a}{n}$ and obtain that
    \begin{align}\label{eqn:reg_B_1}
        \Reg_B \leq R_B + \order\left(D\Delta_{\max}\log(T/n)\right).
    \end{align}

    On the other hand, we can also upper bound $\min\left\{\frac{4DB+12\sum_{i=1}^{|\calS_{m-1}|}|\lambda_{m-1,i}^{(a)}|\cdot d(a_{m-1,i})}{n}, \frac{2D\Delta_a}{n}\right\}$ by $\frac{12\sum_{i=1}^{|\calS_{m-1}|}|\lambda_{m-1,i}^{(a)}|\cdot d(a_{m-1,i})}{n}$ and obtain that
    \begin{align*}
        \Reg_B \leq R_B + \left(\sum_{m=1}^{\lceil\log_2(|\calT_B|/3n\rceil}\sum_{a\in\calS_m}\frac{4DB+12\sum_{i=1}^{|\calS_{m-1}|}|\lambda_{m-1,i}^{(a)}|\cdot d(a_{m-1,i})}{n}\right).
    \end{align*} 

    Let $L_{\Alg}^m=\sum_{a\in \calS_m}2^m\mu_a$ be the total expected loss within epoch $m$ and $L_{\star}^m=|\calS_m|\cdot 2^m\cdot\mu^\star$ be the total expected loss for the optimal action. Define $\Reg_m=L_{\Alg}^m-L_{\star}^m$. Direct calculation shows that
    \begin{align*}
        &\sum_{a\in \calS_m}\frac{\sum_{i=1}^{|\calS_{m-1}|}|\lambda_{m-1,i}^{(a)}|\cdot d(a_{m-1,i})}{n} \\
        &\leq \frac{3D}{2^{m-1}}\cdot 2^{m-1}\sum_{i=1}^{|\calS_{m-1}|}\mu_{a_{m-1,i}}\tag{since $|\lambda_{m-1,i}^{(a)}|\leq 1$ and $|\calS_m|=3n$}\\
        &= \frac{3D}{2^{m-1}}L_{\Alg}^{m-1}.
    \end{align*}
    Using the fact that $\Reg_B = \sum_{m=1}^{\lceil\log(|\calT_B|/3n)\rceil}\Reg_m$, we know that
    \begin{align*}
        &\sum_{m=1}^{\lceil\log(|\calT_B|/3n)\rceil} (L_{\Alg}^m-L^m_{\star})\\
        &\leq \sum_{m=1}^{\lceil\log(|\calT_B|/3n)\rceil}\Reg_m + 2\epsilon\cdot |\calT_B|\\
        &\leq R_B + \sum_{m=\lceil\log_2(72D)\rceil+1}^{\lceil\log(|\calT_B|/3n)\rceil}\frac{36D}{2^{m-1}}\cdot L_{\Alg}^{m-1} + \sum_{m=1}^{\lceil\log_2(72D)\rceil}2^m\Delta_{\max} + 12DB\log(T/n) \tag{$2\epsilon\cdot |\calT_B|$ is subsumed in $R_B$}\\
        &\leq R_B + \sum_{m=\lceil\log_2(72D)\rceil+1}^{\lceil\log(|\calT_B|/3n)\rceil}\frac{36D}{2^{m-1}}\cdot \left(L_{\Alg}^{m-1}-L_{\star}^{m-1}\right) + \sum_{m=\lceil\log_2(72D)\rceil+1}^{\lceil\log(|\calT_B|/3n)\rceil}\frac{36D}{2^{m-1}}\cdot L_{\star}^{m-1} + \sum_{m=1}^{\lceil\log_2(72D)\rceil}2^m\Delta_{\max} +12DB\log(T/n)\\
        &\leq R_B + \frac{1}{2}\sum_{m=\lceil\log_2(72D)\rceil+1}^{\lceil\log(|\calT_B|/3n)\rceil} \left(L_{\Alg}^{m-1}-L_{\star}^{m-1}\right) + 36nD\mu^\star\log(T/(216nD))+144D\Delta_{\max} +12DB\log(T/n)\\
        &= R_B + \frac{1}{2}\sum_{m=\lceil\log_2(72D)\rceil+1}^{\lceil\log(|\calT_B|/3n)\rceil} \left(L_{\Alg}^{m-1}-L_{\star}^{m-1}\right) + 36nd^\star\log(T/(216nD))+144D\Delta_{\max}+12DB\log(T/n).
    \end{align*}
    Rearranging the terms, we can obtain that
    \begin{align}\label{eqn:reg_B_2}
        \Reg_B \leq R_B + 72nd^\star\log(T/(216nD))+288D\Delta_{\max}+12DB\log(T/n).
    \end{align}
    Combining \pref{eqn:reg_B_1} and \pref{eqn:reg_B_2}, we know that
    \begin{align}
        \Reg_B &\leq \order\left(\min\left\{\frac{n^2\log(KT)\log(T/n)}{\Delta_{\min}},n\sqrt{|\calT_B|\log(KT)}\right\}+\epsilon\sqrt{n}|\calT_B|\right) \nonumber \\
        &\qquad +\order\left( \min\left\{nd^\star\log (T/nD)+D\Delta_{\max}+DB\log(T/n),D\Delta_{\max}\log (T/n)\right\}\right).\label{eqn:reg_B_final}
    \end{align}
    Finally, according to \pref{lem:end-of-B}, \pref{alg:lossLBmis} fails at most $\lceil\log_2(D\mu^\star))\rceil = \lceil\log_2(d^\star))\rceil$ times. Summing up the regret over all rounds, we know that the overall regret is bounded as follows
    \begin{align*}
        \Reg \leq \sum_{r=0}^{\lceil\log_2(d^\star))\rceil}\Reg_{2^r/D} &\leq \order\left(\min\left\{\frac{n^2\log(KT)\log(T/n)\log(d^\star)}{\Delta_{\min}},n\sqrt{T\log(d^\star)\log(KT)}\right\}+\epsilon\sqrt{n}T\right) \\
        &\qquad + \log(d^\star)\cdot \order\left( \min\left\{nd^\star\log (T/n)+D\Delta_{\max},D\Delta_{\max}\log (T/n)\right\}\right),
    \end{align*}
	which finishes the proof.
\end{proof}
\section{Omitted Details for Delay-as-Reward}\label{app: reward}
In this section, we show our results for the delay-as-reward setting. The difference compared with the delay-as-loss setting is that now, $\mu_a=\inner{a,\theta}+\epsilon_a\in[0,1]$ represents the expected reward of picking action $a$, where $|\epsilon_a|\leq \epsilon$ for all $a\in\calA$. The learner's goal is to minimize the pseudo regret defined as follows:
\begin{align}\label{eqn:reward-regret}
    \Reg\triangleq T\max_{a\in\calA}\inner{a,\theta} - \E\left[\sum_{t=1}^T\inner{a_t,\theta}\right].
\end{align}
 Define $\Delta_a = \inner{a^\star-a,\theta}$ as the suboptimality gap of action $a$, where $a^\star \in \argmax_{a\in\calA}\inner{a, \theta}$, and $\mu^\star \triangleq \max_{a\in\calA}\mu_a$ as the reward of the optimal action. Again, note that due to the misspecification, $\mu^\star$ may not necessarily be $\mu_{a^\star}$. Define $\Delta_{\min} = \min_{a\in \calA, \Delta_a>0}\Delta_a$ to be the minimum non-zero sub-optimality gap. The delay at round $t$ is still defined as $d_t=D\cdot u_t$, and $d^\star = D\cdot \mu^\star$ is  the expected delay of the optimal action. We also define $d(a)=D\mu_a$ to be the expected delay for action $a$.

\newpage
\subsection{Algorithm for Linear Bandits with Delay-as-Reward}
We list our algorithm for the reward case in \pref{alg:rewardLBmis} for completeness. The algorithm shares the same idea as \pref{alg:lossLBmis}.

\setcounter{AlgoLine}{0}
\begin{algorithm}[H]
\caption{Phased Elimination for Linear Bandits with Delay-as-Reward}\label{alg:rewardLBmis}

\nl Input: maximum possible delay $D$, action set $\calA$, $\beta>0$, a misspecification level $\epsilon$. 

\nl Initialize optimal reward guess $B=1$.

\nl Initialize active action set $\calA_1=\calA$.  \label{line:reward-restart} 

\nl \For{$m=1,2,\dots,$}{
    \nl Find $\calS_m=\{a_{m,1},\dots,a_{m,|\calS_m|}\}$ to be the volumetric spanner of $\calA_m$, where $|\calS_m|= 3n$. \label{line:volume-reward}
    
    \nl Pick each $a\in \calS_m$ $2^m$ times in a round-robin way. \label{line:round-robin-reward}

    \nl Let $\calI_m$ contain all the rounds in this epoch.
    
    \nl For all $a\in \calS_m$, calculate the following quantities
    \begin{align}
        &\hat{\mu}_{m}^+(a)=\frac{1}{2^m}\Big(\sum_{\tau\in \obs_m(a)}u_{\tau} + \sum_{\tau\in \unobs_m(a)}1\Big), \\
        &\hat{\mu}_{m}^-(a)=\frac{1}{2^m}\sum_{\tau\in \obs_m(a)}u_{\tau}, \\
        &\hat{\mu}_{m,1}^{+}(a)=\hat{\mu}^{+}_{m}(a)+\frac{\beta}{2^{m/2}}\|a\|_2, \label{eqn:reward-ucb-linear-1-mis}\\
        &\hat{\mu}_{m,1}^{-}(a)=\hat{\mu}^{-}_{m}(a)-\frac{\beta}{2^{m/2}}\|a\|_2,\label{eqn:reward-lcb-linear-1-mis}\\
        &\hat{\mu}_m^{F}(a)=\frac{1}{\unbiasSize_m(a)}\sum_{\tau\in \unbias_m(a)}u_{\tau},\\
        &\hat{\mu}_{m,2}^{+}(a)=\hat{\mu}_m^F(a)+\frac{\beta}{\sqrt{\unbiasSize_m(a)}}\|a\|_2, \label{eqn:reward-ucb-linear-2-mis}\\
        &\hat{\mu}_{m,2}^{-}(a)=\hat{\mu}_m^F(a)-\frac{\beta}{\sqrt{\unbiasSize_m(a)}}\|a\|_2, \label{eqn:reward-lcb-linear-2-mis}
    \end{align}
    where $\unbiasSize_m(a) = |\unbias_m(a)|$, $\unbias_m(a) = \{\tau\in \calI_m: \tau+D\in\calI_m, a_{\tau}=a\}$, $\obs_m(a) = \{\tau\in \calI_m: \tau+d_{\tau}\in\calI_m, a_{\tau}=a\}$, and
    $\unobs_m(a)= \{\tau\in \calI_m: a_{\tau}=a\}\setminus\obs_m(a)$.

    \nl \For{each $a\in \calA_m$}{
        \nl \label{line: decompose-rewward}
        Decompose $a$ as $a=\sum_{i=1}^{|S_m|}\lambda_{m,i}^{(a)}a_{m,i}$ with $\|\lambda_{m}^{(a)}\|_2\leq 1$ and calculate 
        {\small
        \begin{align}
            &\LCB_{m}(a)=\sum_{i=1}^{|\calS_m|}\lambda_{m,i}^{(a)}\cdot\hat{\mu}_{m,2}^{-\sgn(\lambda_{m,i}^{(a)})}(a_{m,i}), \label{eqn:reward-ucb-f-all-action-mis} \\
            &\UCB_m(a) = \max_{j\in \{1,2\}}\{\UCB_{m,j}(a)\} \;\;\text{where} \nonumber  \\
            & \UCB_{m,j}(a)=\sum_{i=1}^{|\calS_m|}\lambda_{m,i}^{(a)}\cdot\hat{\mu}_{m,j}^{\sgn(\lambda_{m,i}^{(a)})}(a_{m,i}),\label{eqn:reward-lcb-f-all-action-mis}
        \end{align}
        }
    }
    
    \nl Set $\calA_{m+1} = \calA_m$.
    
    \nl \For{$a_1\in \calA_m$}{
        \nl \label{line:reward_eliminate_miss}\If{$\exists a_2\in \calA_m$, such that $\max\{\LCB_m(a_2),B\} \geq \UCB_m(a_1)+4\sqrt{3n}\epsilon $}
        {
         \nl Eliminate $a_1$ from $\calA_{m+1}$.
        }
    }
    \nl \If{$\calA_{m+1}$ is empty}{
        Set $B\leftarrow B/2$ and go to \pref{line:reward-restart}.
    }
}
\end{algorithm}

\subsection{Regret Guarantees}\label{app: reward-regret}
In this section, we show the theoretical guarantees for our algorithm in the delay-as-reward setting.

\begin{theorem}\label{thm:reward-main}
    \pref{alg:rewardLBmis} with $\beta = \sqrt{2\log(KT^3)}$ guarantees that
    \begin{align*}
        \Reg &\leq \order\left(\min\left\{\frac{n^2\log(KT)\log(T/n)\log(1/\mu^\star)}{\Delta_{\min}}, n\sqrt{T\log(KT)\log(1/\mu^\star)}\right\}+\epsilon\sqrt{n}T\right)\\
        &\qquad+\order\left(\min\left\{\sum_{j=0}^{\lceil\log_2(1/\mu^\star)\rceil}\sum_{m=1}^{\lceil\log_2(|\calT_{2^{-j}}|/3n\rceil}\sum_{i=1}^{3n} d(a_{m-1,i}^{(2^{-j})}), D\Delta_{\max}\log(1/\mu^\star)\log(T/n)\right\}\right),
    \end{align*}
    where $\{a_{m,i}^{(B)}\}_{i=1}^{3n}$ represents the set of volumetric spanner at epoch $m$ with the optimal reward guess $B$. 
\end{theorem}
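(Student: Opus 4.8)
The plan is to run the same machinery as in the delay-as-loss analysis (\pref{thm:lossLBmis}) with the roles of \UCB and \LCB interchanged, and with the guess $B$ \emph{halved} rather than doubled (here $B$ guesses an \emph{upper} bound on the optimal reward $\mu^\star$, starting from $B=1$). First I would establish the reward analog of \pref{event:misLoss}: Azuma's inequality for $\hat{\mu}_{m,1}(a)$ and $\hat{\mu}_{m,2}(a)$, combined with the spanner decomposition $a=\sum_i\lambda_{m,i}^{(a)}a_{m,i}$ (using $\|\lambda_m^{(a)}\|_1\le\sqrt{3n}$) and the $|\unobs_m(a)|\le\frac{2D\mu_a}{|\calS_m|}+\order(\log KT)$ bound from \citet{schlisselberg2024delay}, gives all three statements with probability $1-\order(1/T^2)$; when the event fails its contribution to the expected regret is $\order(1/T)$. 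Next, the counterpart of \pref{lem:end-of-B} states that once $B\le\mu^\star$ the optimal action $a^\star$ survives forever: on the good event $\UCB_m(a^\star)\ge\inner{a^\star,\theta}-\sqrt{3n}\epsilon\ge\mu^\star-2\sqrt{3n}\epsilon$ dominates both the guess $B\le\mu^\star$ and every competitor $\LCB_m(a')\le\inner{a',\theta}+\sqrt{3n}\epsilon\le\mu^\star+2\sqrt{3n}\epsilon$ up to the $4\sqrt{3n}\epsilon$ slack of \pref{line:reward_eliminate_miss}, so $a^\star$ never triggers the elimination test; this caps the number of distinct guesses at $\lceil\log_2(1/\mu^\star)\rceil$.

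With validity in hand I would prove two per-epoch bounds, mirroring \pref{lem:delta_1_loss_miss}--\pref{lem:bound_2_mis}. The first, from the unbiased estimate $\hat{\mu}_m^F$ and its radius $\rad_{m,a}^F$, applies the test with $a_2=a^\star$ to get $\Delta_a\le 4\max_{a'\in\calS_m}\rad_{m,a'}^F+\order(\sqrt n\epsilon)$ and hence (using $2^m\le\unbiasSize_m(a)+2D/|\calS_m|$) $2^m\Delta_a\le\order(\tfrac{n\beta^2}{\Delta_a})+\tfrac{2D\Delta_a}{|\calS_m|}+2^m\order(\sqrt n\epsilon)$. The second is the guess-dependent bound obtained exactly as in \pref{lem:epoch_B_with_mis}, with the overestimate $\hat{\mu}_m^+$ (satisfying $\hat{\mu}_m^+(a_{m,i})\le\hat{\mu}_m(a_{m,i})+|\unobs_m(a_{m,i})|/2^m$) now playing the role of keeping the heavily delayed optimum's \UCB high enough to survive; combined with the $|\unobs_m|$ bound and a split on whether $\inner{a,\theta}\le 2B$, it yields $2^m\Delta_a\le\order(\tfrac{n\beta^2}{\Delta_a})+\tfrac{\order(DB)+\order(\sum_i|\lambda_{m,i}^{(a)}|\,D\mu_{a_{m,i}})}{|\calS_m|}+\order(\sqrt n\log T)+2^m\order(\sqrt n\epsilon)$, whose delay term now involves the \emph{rewards} $\mu_{a_{m,i}}$ of the spanner actions rather than their gaps.

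I would then sum over the $\le\lceil\log_2(|\calT_B|/3n)\rceil$ epochs of a fixed guess and over the $\le\lceil\log_2(1/\mu^\star)\rceil$ guesses $B=2^{-j}$. The standard-regret part follows from the same two-regime argument (thresholding $\Delta_a$ at $\beta\sqrt{n/2^m}$, or lower-bounding it by $\Delta_{\min}$): the $\tfrac1{\Delta_{\min}}$ branch accumulates a factor $\log(1/\mu^\star)$ across guesses, while the $\sqrt{|\calT_B|}$ branch is combined by Cauchy--Schwarz (with $\sum_B|\calT_B|=T$ over $\order(\log(1/\mu^\star))$ guesses) into a $\sqrt{\log(1/\mu^\star)}$ factor under the root, giving the first term $\order(\min\{\tfrac{n^2\log(KT)\log(T/n)\log(1/\mu^\star)}{\Delta_{\min}},n\sqrt{T\log(KT)\log(1/\mu^\star)}\}+\epsilon\sqrt n T)$. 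The delay overhead is where I expect the real obstacle. Taking each $\min\{\cdot,\tfrac{2D\Delta_a}{|\calS_m|}\}$ by its gap branch and summing the $3n$ spanner terms over epochs and guesses gives the clean branch $D\Delta_{\max}\log(1/\mu^\star)\log(T/n)$. The other branch is the crux: the spanner-delay term aggregates to $\sum_{a\in\calS_m}\tfrac{\sum_i|\lambda_{m-1,i}^{(a)}|D\mu_{a_{m-1,i}}}{n}\le 3\sum_i d(a_{m-1,i})$, and in the loss case one rewrites this as $\tfrac{3D}{2^{m-1}}L^{m-1}_{\Alg}$ with $L^{m-1}_{\Alg}=\Reg_{m-1}+L^{m-1}_\star$, absorbs $\tfrac{36D}{2^{m-1}}\Reg_{m-1}\le\tfrac12\Reg_{m-1}$ by a contraction valid once $2^{m-1}>72D$, and is left only with the \emph{small} term $L^{m-1}_\star\propto\mu^\star$, collapsing the overhead to $nd^\star$. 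In the reward case $L^{m-1}_{\Alg}=L^{m-1}_\star-\Reg_{m-1}$, so the regret term enters with the opposite sign and cannot be fed back into a contraction, while $L^{m-1}_\star\propto\mu^\star$ is now \emph{large} since the optimum carries the \emph{largest} delay $d^\star=D\mu^\star$. Hence no telescoping collapses the overhead, and the honest bound is the explicit double sum $\sum_j\sum_m\sum_i d(a_{m-1,i}^{(2^{-j})})$ (with the $DB$ contributions, which sum geometrically in $B$, absorbed into the same order). Taking the minimum of the two branches gives the delay term of the theorem, and this inability to telescope is precisely the reward--loss separation the statement records.
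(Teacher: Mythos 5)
Your proposal tracks the paper's actual proof almost step for step: the same high-probability event (\pref{lem:high-prob-event}), the same survival lemma capping the number of guesses at $\lceil\log_2(1/\mu^\star)\rceil$ (\pref{lem:end-of-B-reward}), the same pair of per-epoch bounds (\pref{lem:delta_1_reward_miss} and a guess-dependent bound via \pref{lem:epoch_B_with_mis-reward}), the same two-regime summation (threshold $\beta\sqrt{n/2^m}$ vs.\ $\Delta_{\min}$, Cauchy--Schwarz across guesses) for the standard part, and---importantly---you correctly identify why the loss-case telescoping fails: in the reward case $L_{\Alg}^{m-1}=L_{\star}^{m-1}-\Reg_{m-1}$ enters with the wrong sign and $L_{\star}^{m-1}\propto\mu^\star$ is now the \emph{large} quantity, which is exactly why the theorem keeps the explicit double sum (the paper makes the same remark right after its proof).

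The one genuine gap is your handling of the $DB$ term. You import the loss-case split (on whether $\inner{a,\theta}\le 2B$) and retain an $\order(DB)/|\calS_m|$ contribution per action, claiming the $DB$ pieces ``sum geometrically in $B$'' and are absorbed into the stated order. They are not: summed over the $\order(\log(T/n))$ epochs of each guess and then geometrically over the guesses $B=2^{-j}$, this leaves an additive $\Theta(D\log(T/n))$ residual, while in the reward setting \emph{both} branches of the theorem's delay minimum can be far smaller---note $\Delta_{\max}\le\inner{a^\star,\theta}\le\mu^\star+\epsilon$ and every spanner delay satisfies $d(a_{m,i})=D\mu_{a_{m,i}}\le D\mu^\star=d^\star$, so for, say, $\mu^\star=\order(1/T)$ the claimed delay term is $\otil(D/T)$ whereas your residual is $\otil(D)$. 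Your split is also degenerate here: since every guess actually used satisfies $B\ge\mu^\star/2$ (halving stops once $B\le\mu^\star$ by the survival lemma) and all rewards are at most $\mu^\star\le 2B$, essentially every action lands in the $DB$ branch. The paper's \pref{lem:bound_2_mis_reward} avoids $DB$ altogether by exploiting $B\ge\mu^\star/2$ and splitting at $\inner{a,\theta}\ge B/2$: in that case $\Delta_a\le 3\inner{a,\theta}$ and $\inner{a,\theta}$ is decomposed into spanner rewards, converting the $D\Delta_a$ term of \pref{lem:delta_1_reward_miss} into $\sum_{i=1}^{|\calS_m|}|\lambda_{m,i}^{(a)}|\,d(a_{m,i})$; in the complementary case $\inner{a,\theta}\le B/2$ one has $\Delta_a\le\order(B-\inner{a,\theta})$, which \pref{lem:epoch_B_with_mis-reward} again bounds purely by the radius and spanner delays. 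With that substitution---so that no $DB$ term appears anywhere in the reward analysis---the rest of your aggregation goes through as you wrote it.
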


Similar to the analysis in \pref{app:loss}, our analysis is based on the condition that \pref{event:misLoss} holds, which happens with probability $1-\frac{2}{T^2}$ according to \pref{lem:high-prob-event}. The following lemma is a counterpart of \pref{lem:end-of-B}, providing an upper bound of the number of guesses on the optimal reward $B$.

\begin{lemma}\label{lem:end-of-B-reward}
    Suppose that \pref{event:misLoss} holds. If $B\leq \mu^\star$, then $a^\star\in \calA_m$ for all $m$.
\end{lemma}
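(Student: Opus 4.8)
The plan is to mirror the proof of \pref{lem:end-of-B} for the loss case, with the roles of the upper and lower confidence bounds swapped to reflect the reward objective. The first step is to record the one-sided validity of the bounds produced by \pref{alg:rewardLBmis} under \pref{event:misLoss}: for every $a\in\calA_m$,
\begin{align*}
    \UCB_m(a)\geq \inner{a,\theta}-\sqrt{|\calS_m|}\epsilon,
    \qquad
    \LCB_m(a)\leq \inner{a,\theta}+\sqrt{|\calS_m|}\epsilon.
\end{align*}
This is a direct consequence of the sign-aware construction in \pref{eqn:reward-ucb-f-all-action-mis} and \pref{eqn:reward-lcb-f-all-action-mis}: pairing $\sgn(\lambda_{m,i}^{(a)})$ with $\hat\mu^{\pm}_{m,2}$ turns the linear combination $\hat\mu_{m,2}(a)=\sum_i\lambda_{m,i}^{(a)}\hat\mu_m^F(a_{m,i})$ into $\hat\mu_{m,2}(a)\pm\rad_{m,a}^{F}$, and then \pref{eqn:concentr-2} (which bounds $|\inner{a,\theta}-\hat\mu_{m,2}(a)|$ by $\sqrt{|\calS_m|}\epsilon+\rad_{m,a}^{F}$) exactly absorbs the confidence radius, leaving only the misspecification slack $\sqrt{|\calS_m|}\epsilon$. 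Since $\UCB_m(a)=\max_j\UCB_{m,j}(a)\geq\UCB_{m,2}(a)$, the $\UCB$ inequality follows, and the $\LCB$ inequality is symmetric.

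With these two facts in hand, the second step is to verify that $a^\star$ never meets the elimination test of \pref{line:reward_eliminate_miss}, i.e. that $\max\{\LCB_m(a_2),B\}<\UCB_m(a^\star)+4\sqrt{3n}\epsilon$ for every competitor $a_2\in\calA_m$. I would bound the two arguments of the maximum separately. For the $\LCB$ term, the validity bound together with the optimality $\inner{a_2,\theta}\leq\inner{a^\star,\theta}$ gives $\LCB_m(a_2)\leq\inner{a^\star,\theta}+\sqrt{|\calS_m|}\epsilon\leq\UCB_m(a^\star)+2\sqrt{|\calS_m|}\epsilon$, where the last step rewrites $\inner{a^\star,\theta}$ via the $\UCB$ validity bound. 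For the $B$ term, I would use $B\leq\mu^\star$ and the key inequality $\mu^\star\leq\inner{a^\star,\theta}+\epsilon$ — which holds because $a^\star$ maximizes the linear part $\inner{\cdot,\theta}$ while $\mu^\star$ is the maximum of the $\epsilon$-perturbed rewards — to get $B\leq\inner{a^\star,\theta}+\epsilon\leq\UCB_m(a^\star)+2\sqrt{|\calS_m|}\epsilon$. Since $|\calS_m|=3n$, both arguments are at most $\UCB_m(a^\star)+2\sqrt{3n}\epsilon$, which stays below the elimination threshold $\UCB_m(a^\star)+4\sqrt{3n}\epsilon$; hence $a^\star$ survives every epoch, which is the claim.

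The content is essentially bookkeeping, so the main thing to get right is the accounting of the misspecification slack: one must check that the slack accumulated across the two hops (from $\LCB_m(a_2)$ up to $\inner{a^\star,\theta}$, and from $\inner{a^\star,\theta}$ down to $\UCB_m(a^\star)$, plus the extra $\epsilon$ separating $\mu^\star$ from $\inner{a^\star,\theta}$) totals only $2\sqrt{3n}\epsilon$ and is therefore dominated by the deliberately enlarged buffer $4\sqrt{3n}\epsilon$. The only reward-specific subtlety — and the place most likely to trip up a verbatim copy of the loss-case argument — is the \emph{direction} of the guess: for rewards $B$ is a \emph{lower} guess on the optimal value and is halved on failure, so the favorable regime is $B\leq\mu^\star$ (rather than $B\geq\mu^\star$ as in \pref{lem:end-of-B}), and correspondingly $B$ enters through the $\max$ on the left of the elimination test rather than through a $\min$ on the right.
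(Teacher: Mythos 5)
Your proposal is correct and matches the paper's own proof in all essentials: both establish the one-sided validity $\UCB_m(a)\geq\inner{a,\theta}-\sqrt{|\calS_m|}\epsilon$ and $\LCB_m(a)\leq\inner{a,\theta}+\sqrt{|\calS_m|}\epsilon$ under \pref{event:misLoss}, then check that neither $B$ (via $B\leq\mu^\star\leq\inner{a^\star,\theta}+\epsilon$) nor any $\LCB_m(a_2)$ can reach $\UCB_m(a^\star)+4\sqrt{3n}\epsilon$, so $a^\star$ never triggers \pref{line:reward_eliminate_miss}. The only (immaterial) difference is that you bound $\LCB_m(a_2)$ directly through $\inner{a_2,\theta}\leq\inner{a^\star,\theta}$, obtaining slack $2\sqrt{3n}\epsilon$, whereas the paper routes through $\mu^\star\geq\inner{a_2,\theta}-\epsilon$ and uses the full $4\sqrt{3n}\epsilon$ buffer.
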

\begin{proof}
    Since \pref{event:misLoss} holds, we have, we know that for all $a\in\calA_m$, $\UCB_m(a)+\sqrt{|\calS_m|}\epsilon\geq \inner{a,\theta}$, $\LCB_m(a)+\sqrt{|\calS_m|}\epsilon\leq \inner{a,\theta}$
    If $B\leq \mu^\star$, then we have $a^\star$ never eliminated since for any $a\in\calA_m$,
    \begin{align*}
         \UCB_{m}(a^\star) +2\epsilon\sqrt{|\calS_m|} &\geq \max_{a\in\calA}\{\inner{a,\theta}+\epsilon_a\} \geq \mu^\star \geq B,\\
         \UCB_{m}(a^\star) +4\epsilon\sqrt{|\calS_m|} &\geq \mu^\star + 2\epsilon\sqrt{|\calS_m|} \geq \inner{a,\theta}  + \epsilon\sqrt{|\calS_m|}\geq \LCB_m(a).
    \end{align*}
    Therefore, $a^\star$ never satisfy the elimination condition.
\end{proof}

The following lemma is a counterpart of \pref{lem:delta_1_loss_miss}.

\begin{lemma}\label{lem:delta_1_reward_miss}
    Suppose that \pref{event:misLoss} holds. \pref{alg:rewardLBmis} guarantees that if $a\in\calA$ is not eliminated at the end of epoch $m$ (meaning that $a\in \calA_{m+1}$), then 
    \begin{align*}
        2^m\cdot \Delta_a\leq 2^m\cdot 24\sqrt{n}\epsilon+\frac{256n\beta^2}{\Delta_a} + \frac{2D\Delta_a}{|\calS_m|}.
    \end{align*}
\end{lemma}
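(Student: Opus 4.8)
The plan is to mirror the proof of \pref{lem:delta_1_loss_miss} essentially verbatim, swapping the roles of the upper and lower confidence bounds and reversing the direction of the guess $B$ (which now starts at $1$ and is \emph{halved} toward $\mu^\star$). First I would reuse the per-action radii $\rad_{m,a}^{N} = \sum_{i}|\lambda_{m,i}^{(a)}|\,\frac{\beta}{\sqrt{2^m}}\|a_{m,i}\|_2$ and $\rad_{m,a}^{F} = \sum_{i}|\lambda_{m,i}^{(a)}|\,\frac{\beta}{\sqrt{\unbiasSize_m(a_{m,i})}}\|a_{m,i}\|_2$ exactly as in the loss analysis. Conditioning on \pref{event:misLoss}, the decomposition $a=\sum_i \lambda_{m,i}^{(a)} a_{m,i}$ together with \pref{eqn:concentr-1} and \pref{eqn:concentr-2} yields the validity bounds $\LCB_m(a)\ge \inner{a,\theta}-\sqrt{3n}\,\epsilon-2\rad_{m,a}^{F}$ and $\UCB_m(a)\ge \inner{a,\theta}-\sqrt{3n}\,\epsilon$, which play the symmetric role of the three displayed inequalities opening the proof of \pref{lem:delta_1_loss_miss}.

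Next I would split on the guess $B$. If $B\le \mu^\star$, then \pref{lem:end-of-B-reward} gives $a^\star\in\calA_m$, so the survival of $a$ means the elimination test fails for $a_2=a^\star$, i.e. $\LCB_m(a^\star) < \UCB_m(a)+4\sqrt{3n}\,\epsilon$. Lower bounding $\LCB_m(a^\star)$ by $\inner{a^\star,\theta}-\sqrt{3n}\,\epsilon-2\rad_{m,a^\star}^{F}$ and (the delicate step, see below) upper bounding $\UCB_m(a)$ by $\inner{a,\theta}+\sqrt{3n}\,\epsilon+2\rad_{m,a}^{F}$ would give $\Delta_a \le 2\rad_{m,a}^{F}+2\rad_{m,a^\star}^{F}+\order(\sqrt{n}\,\epsilon)$. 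If instead $B>\mu^\star$, survival forces $B<\UCB_m(a)+4\sqrt{3n}\,\epsilon$, and combining $B\ge \mu^\star \ge \inner{a^\star,\theta}-\epsilon$ with the same upper bound on $\UCB_m(a)$ again yields $\Delta_a \le 2\rad_{m,a}^{F}+\order(\sqrt{n}\,\epsilon)$. In both cases, Cauchy-Schwarz with $\|\lambda_{m}^{(a)}\|_2\le 1$ and $|\calS_m|=3n$ collapses the radii into $\Delta_a \le \frac{8\sqrt{n}\beta}{\min_{a'\in\calS_m}\sqrt{\unbiasSize_m(a')}}+\order(\sqrt{n}\,\epsilon)$.

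To finish, I would invoke the round-robin bound $2^m \le \unbiasSize_m(a')+\frac{D}{3n}+1 \le \unbiasSize_m(a')+\frac{2D}{3n}$, valid for every $a'\in\calS_m$, and split on whether the misspecification dominates. If $12\sqrt{n}\,\epsilon \ge \Delta_a/2$ then trivially $2^m\Delta_a \le 2^m\cdot 24\sqrt{n}\,\epsilon$; otherwise the $\epsilon$-term is absorbed, so $\Delta_a \le \frac{16\sqrt{n}\beta}{\min_{a'}\sqrt{\unbiasSize_m(a')}}$, giving $\min_{a'}\unbiasSize_m(a')\cdot\Delta_a \le \frac{256 n\beta^2}{\Delta_a}$, and multiplying the round-robin inequality by $\Delta_a$ produces $2^m\Delta_a \le \frac{256 n\beta^2}{\Delta_a}+\frac{2D\Delta_a}{3n}$. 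Since $|\calS_m|=3n$, combining the two subcases is exactly the claimed inequality.

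The step I expect to be the main obstacle is the upper bound on $\UCB_m(a)$ used above, because here $\UCB_m(a)=\max\{\UCB_{m,1}(a),\UCB_{m,2}(a)\}$ is a \emph{maximum}, whereas in \pref{lem:delta_1_loss_miss} the analogous object that needed an upper bound was a single quantity. The $\UCB_{m,2}$ branch is harmless: it is built from the fully observed estimator $\hat{\mu}_m^{F}$, and since $\unbiasSize_m(\cdot)\le 2^m$ forces $\rad_{m,a}^{N}\le \rad_{m,a}^{F}$, it satisfies $\UCB_{m,2}(a)\le \inner{a,\theta}+\sqrt{3n}\,\epsilon+2\rad_{m,a}^{F}$. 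The $\UCB_{m,1}$ branch is the issue: it is built from $\hat{\mu}_m^{+}$, which inflates every unobserved round to the maximal value $1$, so it carries an extra positive bias $\sum_i|\lambda_{m,i}^{(a)}|\,|\unobs_m(a_{m,i})|/2^m$ that \pref{eqn:concentr-3} controls only by $\frac{2D\mu_{a_{m,i}}}{3n\,2^m}+\frac{\order(\log KT)}{2^m}$ and that does not in general reduce to $\rad_{m,a}^{F}$. Resolving this cleanly --- either by arguing that the tighter $\UCB_{m,2}$ effectively governs the elimination test, or by exploiting that a dominant $\UCB_{m,1}$ bias signals a large $\mu_a$ and hence a small gap $\Delta_a$ --- is precisely where the reward analysis must depart from a literal transcription of the loss proof, and is where I would concentrate the effort.
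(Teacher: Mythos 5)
Your proposal reproduces the paper's proof of \pref{lem:delta_1_reward_miss} essentially step for step: the same validity bounds under \pref{event:misLoss}, the same case split on $B\le\mu^\star$ versus $B>\mu^\star$ (with \pref{lem:end-of-B-reward} supplying $a^\star\in\calA_m$ in the first case), the same Cauchy--Schwarz collapse to $\Delta_a\le \frac{8\sqrt{n}\beta}{\min_{a'\in\calS_m}\sqrt{\unbiasSize_m(a')}}+\order(\sqrt{n}\epsilon)$, and the same round-robin bound and $12\sqrt{n}\epsilon$-versus-$\Delta_a/2$ split at the end. The one step you flag as the main obstacle is handled in the paper by exactly your first suggested route: its proof opens by asserting $\UCB_m(a)=\min\{\UCB_{m,1}(a),\UCB_{m,2}(a)\}$ and then uses only $\UCB_m(a)\le \UCB_{m,2}(a)=\wh{\mu}_{m,2}(a)+\rad_{m,a}^{F}$, so the tighter, fully-observed branch governs the elimination test; the ``max'' displayed in \pref{eqn:reward-lcb-f-all-action-mis} of \pref{alg:rewardLBmis} is evidently a typo, since the correct mirror of the loss case's $\LCB_m=\max\{\LCB_{m,1},\LCB_{m,2}\}$ (a max of valid lower bounds) is a \emph{min} of valid upper bounds, and the proofs of \pref{lem:epoch_B_with_mis-reward} and \pref{lem:bound_2_mis_reward} likewise need $\UCB_m(a)\le \UCB_{m,j}(a)$. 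Your diagnosis of what would go wrong otherwise is also accurate: under the max, the survival condition can be driven by the $\UCB_{m,1}$ branch, whose positive bias is controlled by \pref{eqn:concentr-3} only in terms of $D\mu_{a_{m,i}}$ of the spanner actions, which would degrade this lemma's $\frac{2D\Delta_a}{|\calS_m|}$ overhead to precisely the spanner-delay bound that the paper derives separately (via your second suggested route, the large-$\mu_a$ case) in \pref{lem:bound_2_mis_reward}.
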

\begin{proof}
    Since \pref{event:misLoss} holds, we know that for all $a\in\calA_m$, $\LCB_m(a)\leq \mu_a + \sqrt{|\calS_m|}\epsilon$, $\UCB_m(a)\geq \mu_a - \sqrt{|\calS_m|}\epsilon$. Moreover, as $\UCB_m(a)=\min\{\UCB_{m,1}(a),\UCB_{m,2}(a)\}$, we know that for all $a\in \calA_m$
    \begin{align*}
        \UCB_{m,1}(a) - 2\rad_{m,a}^{N} - 2\epsilon\sqrt{|\calS_m|}=  \hat{\mu}_{m,1}(a) - \rad_{m,a}^{N} - 2\epsilon\sqrt{|\calS_m|}\leq \inner{a,\theta},\\
        \UCB_{m,2}(a) - 2\rad_{m,a}^{F} - 2\epsilon\sqrt{|\calS_m|}=  \hat{\mu}_{m,2}(a) - \rad_{m,a}^{F} - 2\epsilon\sqrt{|\calS_m|}\leq \inner{a,\theta},\\
        \LCB_{m}(a) + 2\rad_{m,a}^{F} + 2\epsilon\sqrt{|\calS_m|} = \hat{\mu}_{m,2}(a) + \rad_{m,a}^{F}+2\epsilon\sqrt{|\calS_m|}\geq \inner{a,\theta}.        
    \end{align*}
    If $B\leq \mu^\star$, then $a^\star\in \calA_m$ according to \pref{lem:end-of-B-reward}.
    Moreover, if $a$ is not eliminated in epoch $m$, we have $\UCB_m(a)+4\sqrt{|S_m|}\epsilon\geq \max\{\LCB_m(a^\star),B\}$, meaning that
    \begin{align*}
        &\inner{a,\theta} + 2\rad_{m,a}^{F} + 2\epsilon\sqrt{|\calS_m|} \\
        &\geq \wh{\mu}_{m,2}(a) + \rad_{m,a}^{F} \\
        &\geq \UCB_m(a) \\
        &\geq \max\{\LCB_m(a^\star),B\}-4\sqrt{|S_m|}\epsilon \\
        &\geq \LCB_m(a^\star) - 4\sqrt{|S_m|}\epsilon \\
        &= \wh{\mu}_{m,2}(a^\star) - \rad_{m,a^\star}^{F} - 4\sqrt{|S_m|}\epsilon \\
        &\geq \inner{a^\star,\theta}  - 2\rad_{m,a^\star}^{F} - 6\sqrt{|S_m|}\epsilon.
    \end{align*}
    Since $\rad_{m,a}^F = \sum_{i=1}^{|\calS_m|}|\lambda_{m,i}^{(a)}|\cdot \rad_{m,a_{m,i}}^{F}$ with $\|\lambda_{m}^{(a)}\|_2\leq 1$, we have that $\|\lambda_{m}^{(a)}\|_1\leq \sqrt{|\calS_m|}$ and
    \begin{align*}
        &\Delta_a\leq 4\sqrt{|\calS_m|}\left(\max_{a\in S_m}\rad_{m,a}^{F}+2\epsilon\right)= 4\sqrt{3n}\max_{a\in S_m}\rad_{m,a}^{F}+8\sqrt{3n}\epsilon \leq \frac{8\sqrt{n}\beta}{\min_{a'\in \calS_m}\sqrt{\unbiasSize_m(a')}}+16\sqrt{n}\epsilon.
    \end{align*}
    
    If $B\geq \mu^\star$, then we have
    \begin{align*}
        \mu^\star\leq B \leq \UCB_{m}(a) + 4\sqrt{|\calS_m|}\epsilon \leq \mu_a + 2\rad_{m,a}^{F} + 6\sqrt{|\calS_m|}\epsilon,
    \end{align*}
    where the second inequality is because $a$ is not eliminated in epoch $m$. Therefore, we always have
    \begin{align*}
        \Delta_a &\leq 2\rad_{m,a}^{F} + 6\sqrt{|\calS_m|}\epsilon \leq \frac{8\sqrt{n}\beta}{\min_{a'\in \calS_m}\sqrt{\unbiasSize_m(a')}} + 12\sqrt{n}\epsilon.
    \end{align*} 
    In addition, we know that for all $a\in \calS_m$,
    \begin{align*}
        2^m &= |\calS_m| \leq \unbiasSize_m(a) + \frac{D}{|S_m|} + 1 \leq \unbiasSize_m(a) + \frac{2D}{|S_m|}.
    \end{align*}
    Therefore, if $12\sqrt{n}\epsilon\geq \frac{\Delta_a}{2}$, then we have
    \begin{align*}
        2^m\Delta_a\leq 2^m\cdot 24\sqrt{n}\epsilon;
    \end{align*}
    otherwise, we have $\Delta_a \leq \frac{8\sqrt{n}\beta}{\min_{a\in \calS_m}\sqrt{\unbiasSize_m(a)}} + 12\sqrt{n}\epsilon \leq \frac{8\sqrt{n}\beta}{\min_{a\in S_m}\sqrt{\unbiasSize_m(a)}}  + \frac{\Delta_a}{2}$ and
    \begin{align*}
        \Delta_a \leq \frac{16\sqrt{n}\beta}{\min_{a'\in \calS_m}\sqrt{\unbiasSize_m(a')}},
    \end{align*}
    and we can obtain that
    \begin{align*}
        \min_{a'\in S_m}{\unbiasSize_m(a')}\cdot \Delta_a\leq \frac{256d\beta^2}{\Delta_a}.
    \end{align*}
    Combining the above two cases, we know that for all $a\in\calA_m$, $$2^m\cdot \Delta_a\leq 2^m\cdot 24\sqrt{n}\epsilon+ \min_{a'\in \calS_m}\unbiasSize_m(a')\cdot \Delta_a + \frac{2D\Delta_a}{|\calS_m|} \leq  2^m\cdot 24\sqrt{n}\epsilon+\frac{256n\beta^2}{\Delta_a} + \frac{2D\Delta_a}{|\calS_m|}.$$
\end{proof}

The following lemma is a counterpart of \pref{lem:epoch_B_with_mis}. 

\begin{lemma}\label{lem:epoch_B_with_mis-reward}
    \pref{alg:rewardLBmis} guarantees that under \pref{event:misLoss}, if an action $a$ is eliminated at the end of epoch $m$ (meaning that $a\in \calA_m$), then
    \begin{align*}
    B\leq \inner{a,\theta} +\rad_{m,a}^{N}+ \sum_{i=1}^{|\calS_m|}|\lambda_{m,i}^{(a)}|\cdot\left(\frac{2d(a_{m,i})}{2^m|\calS_m|}+\frac{16\log T +2}{2^m}\right) + 8\sqrt{|\calS_m|}\epsilon,
\end{align*}
where $d(a)=D\mu_a$.
\end{lemma}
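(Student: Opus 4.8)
The plan is to run the argument of the loss lemma (\pref{lem:epoch_B_with_mis}) in its reward-dual form, where $\UCB$ and $\LCB$ exchange roles: for rewards it is $\UCB_m$ that is defined as a maximum over the two estimators and it is $B$ that enters the elimination test of \pref{line:reward_eliminate_miss} through the term $\max\{\LCB_m(a_2),B\}$. All estimates below are controlled on \pref{event:misLoss}, which holds with probability at least $1-2/T^2$ by \pref{lem:high-prob-event}.

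The first ingredient is a one-sided control of the optimistic type-1 bound. Because every $u_\tau\in[0,1]$, the ``treat-unobserved-as-$1$'' estimator can only overestimate the within-epoch mean, so exactly as in \pref{eqn:pos-bias} we have $\hat\mu_m^+(a)\le\hat\mu_m(a)+|\unobs_m(a)|/2^m$ for each $a\in\calS_m$. Expanding $\UCB_{m,1}(a)=\sum_i\lambda_{m,i}^{(a)}\hat\mu_{m,1}^{\sgn(\lambda_{m,i}^{(a)})}(a_{m,i})$, splitting coordinates by the sign of $\lambda_{m,i}^{(a)}$ and feeding in this bias bound together with \pref{eqn:concentr-1} (which turns $\sum_i\lambda_{m,i}^{(a)}\hat\mu_m(a_{m,i})$ into $\inner{a,\theta}$ up to $\rad_{m,a}^N+\sqrt{|\calS_m|}\epsilon$, using $\|\lambda_m^{(a)}\|_1\le\sqrt{|\calS_m|}$) and the counting bound \pref{eqn:concentr-3} (which replaces $|\unobs_m(a_{m,i})|/2^m$ by $\tfrac{2d(a_{m,i})}{2^m|\calS_m|}+\tfrac{16\log T+2}{2^m}$, recalling $d(a)=D\mu_a$), I would obtain
\[
\UCB_{m,1}(a)\le \inner{a,\theta}+\rad_{m,a}^N+\sum_{i=1}^{|\calS_m|}\bigl|\lambda_{m,i}^{(a)}\bigr|\Bigl(\tfrac{2d(a_{m,i})}{2^m|\calS_m|}+\tfrac{16\log T+2}{2^m}\Bigr)+\order\bigl(\sqrt{|\calS_m|}\epsilon\bigr).
\]

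The second ingredient is the elimination test itself. Since $a$ is eliminated at the end of epoch $m$, \pref{line:reward_eliminate_miss} fires: there is $a_2\in\calA_m$ with $\max\{\LCB_m(a_2),B\}\ge \UCB_m(a)+4\sqrt{3n}\epsilon$. The piece of the lemma I actually need is the resulting comparison between $B$ and $\UCB_m(a)$, namely $B\le \UCB_m(a)+4\sqrt{3n}\epsilon$, after which Step 1 finishes the job. Because $\UCB_m(a)=\max_{j\in\{1,2\}}\UCB_{m,j}(a)$, it is not enough to bound the type-1 branch: I must also dominate $\UCB_{m,2}(a)$ by the very same right-hand side. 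For that I would invoke the round-robin guarantee $\unbiasSize_m(a_{m,i})\ge 2^m-2D/|\calS_m|$ from the reward analysis, which makes $\rad_{m,a}^F=\order(\rad_{m,a}^N)$ up to the delay correction, so that $\UCB_{m,2}(a)$ admits the same bound as $\UCB_{m,1}(a)$. Combining the two ingredients then yields $B\le \UCB_m(a)+4\sqrt{3n}\epsilon\le \inner{a,\theta}+\rad_{m,a}^N+\sum_i|\lambda_{m,i}^{(a)}|(\cdots)+8\sqrt{|\calS_m|}\epsilon$, which is the claim.

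The main and genuinely delicate obstacle is the directional asymmetry with the loss case. For losses, $\LCB_m$ is the maximum and the elimination rule \emph{upper}-bounds it, so each component satisfies $\LCB_{m,1}\le\LCB_m\le B+\cdots$ automatically and one may work with the single optimistic bound. For rewards the maximum sits on the $\UCB_m$ side, the test reads $\max\{\LCB_m(a_2),B\}\ge\UCB_m(a)+\cdots$, and one cannot isolate the type-1 component from below; the correct route is therefore to make the target right-hand side dominate \emph{both} $\UCB_{m,1}(a)$ and $\UCB_{m,2}(a)$ simultaneously, the quantitatively tight step being the absorption of $\rad_{m,a}^F$ into $\rad_{m,a}^N$ plus the delay terms via $\unbiasSize_m(a_{m,i})\ge 2^m-2D/|\calS_m|$. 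Getting this comparison and the accompanying bookkeeping of the $\sqrt{|\calS_m|}\epsilon$ misspecification constants aligned with the stated coefficients is where the care is required.
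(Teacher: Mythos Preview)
There is a genuine gap in your second ingredient. The lemma statement contains a typo: the hypothesis should read ``$a$ is \emph{not} eliminated at the end of epoch $m$'' (the paper's own proof explicitly says ``as $a$ is not eliminated at the end of epoch $m$,'' and the companion loss lemma \pref{lem:epoch_B_with_mis} as well as the downstream \pref{lem:bound_2_mis_reward} are all stated for non-eliminated actions). You took the word ``eliminated'' at face value, and this sent your argument in the wrong direction. If the elimination test fires for $a$, you correctly write $\max\{\LCB_m(a_2),B\}\ge \UCB_m(a)+4\sqrt{3n}\epsilon$, but this gives an \emph{upper} bound on $\UCB_m(a)$, not the inequality $B\le \UCB_m(a)+4\sqrt{3n}\epsilon$ that you then assert and use. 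That implication is simply false under the ``eliminated'' hypothesis. Under the intended hypothesis ``not eliminated,'' the negation of the test says $\max\{\LCB_m(a_2),B\}< \UCB_m(a)+4\sqrt{3n}\epsilon$ for every $a_2$, and since $B\le\max\{\LCB_m(a_2),B\}$ trivially, the desired $B\le \UCB_m(a)+4\sqrt{3n}\epsilon$ is immediate.

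Your first ingredient (the one-sided control of $\UCB_{m,1}$ via \pref{eqn:pos-bias}, \pref{eqn:concentr-1}, and \pref{eqn:concentr-3}) is exactly what the paper does. Your third-paragraph concern, that non-elimination only lower bounds $\UCB_m(a)=\max_j\UCB_{m,j}(a)$ and hence one cannot isolate $\UCB_{m,1}$ from below, is a real subtlety; the paper's proof simply writes ``$\UCB_{m,1}(a)\ge B-4\sqrt{|\calS_m|}\epsilon$'' and moves on, effectively treating the type-$1$ branch as if it were the max. Your proposal to also dominate $\UCB_{m,2}(a)$ by the same right-hand side is a reasonable patch in spirit, but absorbing $\rad_{m,a}^F$ into $\rad_{m,a}^N$ via $\unbiasSize_m(a_{m,i})\ge 2^m-2D/|\calS_m|$ does not reproduce the stated constants (and degenerates when $2^m\lesssim D/|\calS_m|$). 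So on this secondary point you are more careful than the paper, but the route you sketch would at best yield a bound with different constants, not the one in the lemma as written.
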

\begin{proof}
Under \pref{event:misLoss}, we know that for all $a\in \calA_m$,
\begin{align*}
    \inner{a,\theta} &= \sum_{i=1}^{|\calS_m|}\lambda_{m,i}^{(a)}\inner{a_{m,i},\theta^\star} \\
    &= \sum_{i=1}^{|\calS_m|}\lambda_{m,i}^{(a)}(\mu_{a_{m,i}}-\epsilon_{a_{m,i}}) \tag{since $\mu_a = \inner{a,\theta^\star}+\epsilon_a$}\\
    &\geq \sum_{i=1}^{|\calS_m|}\lambda_{m,i}^{(a)}\cdot \mu_{a_{m,i}} - \sqrt{|\calS_m|}\epsilon \tag{since $\|\lambda_{m}^{(a)}\|_1\leq \sqrt{|\calS_m|}$} \\
    &\geq \sum_{i=1}^{|\calS_m|}\lambda_{m,i}^{(a)}\cdot \hat{\mu}_{m}(a_{m,i}) - \rad_{m,a}^{N} - 3\sqrt{|\calS_m|}\epsilon \tag{since \pref{event:misLoss} holds}\\
    &\geq \sum_{i=1}^{|\calS_m|}\lambda_{m,i}^{(a)}\cdot\hat{\mu}_{m}^{sgn(\lambda_{m,i}^{(a)})}(a_{m,i}) - \rad_{m,a}^{N} -\sum_{i=1}^{|\calS_m|}|\lambda_{m,i}^{(a)}|\cdot \frac{|\unobs_m(a_{m,i})|}{2^m} - 3\sqrt{|\calS_m|}\epsilon \tag{using \pref{eqn:pos-bias} and \pref{eqn:neg-bias}}\\
    &= \UCB_{m,1}(a) - \rad_{m,a}^{N} -\sum_{i=1}^{|\calS_m|}|\lambda_{m,i}^{(a)}|\cdot \frac{|\unobs_m(a_{m,i})|}{2^m} - 3\sqrt{|\calS_m|}\epsilon\\
    &\geq \UCB_{m,1}(a) - \rad_{m,a}^{N} - \sum_{i=1}^{|\calS_m|}|\lambda_{m,i}^{(a)}|\cdot\left(\frac{2d(a_{m,i})}{2^m|\calS_m|}+\frac{16\log KT +2}{2^m}\right) - 4\sqrt{|\calS_m|}\epsilon. \tag{since \pref{event:misLoss} holds}
\end{align*}
Since $\UCB_{m,1}(a)\geq B - 4\sqrt{|\calS_m|}\epsilon$ (as $a$ is not eliminated at the end of epoch $m$), we have
\begin{align*}
    B\leq \inner{a,\theta} +\rad_{m,a}^{N}+ \sum_{i=1}^{|\calS_m|}|\lambda_{m,i}^{(a)}|\cdot\left(\frac{2d(a_{m,i})}{2^m|\calS_m|}+\frac{16\log T +2}{2^m}\right) + 8\sqrt{|\calS_m|}\epsilon.
\end{align*}
\end{proof}

The following lemma is a counterpart of \pref{lem:bound_2_mis}.

\begin{lemma}\label{lem:bound_2_mis_reward}
    If $B\geq \frac{\mu^\star}{2}$ and \pref{event:misLoss} holds, \pref{alg:rewardLBmis} guarantees that if $a$ is not eliminated at the end of epoch $m$, then we also have
    \begin{align*}
        2^m\Delta_a\leq \frac{256n\beta^2}{\Delta_a} +\frac{12\sum_{i=1}^{|\calS_m|}|\lambda_{m,i}^{(a)}|\cdot d(a_{m,i})}{|\calS_m|}+(128\log T +16)\sqrt{n}+2^m\cdot 64\sqrt{n}\epsilon,
    \end{align*}
    where $d(a)=D\mu_a$.
\end{lemma}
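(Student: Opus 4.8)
The plan is to prove Lemma~\ref{lem:bound_2_mis_reward} as the reward-side mirror of Lemma~\ref{lem:bound_2_mis}, using the two per-action inequalities already established: the unbiased bound of Lemma~\ref{lem:delta_1_reward_miss}, namely $2^m\Delta_a\le 2^m\cdot 24\sqrt n\,\epsilon+\frac{256n\beta^2}{\Delta_a}+\frac{2D\Delta_a}{|\calS_m|}$, and the guess-based bound of Lemma~\ref{lem:epoch_B_with_mis-reward}, which relates the guess $B$ to a surviving action's reward and the spanner delays $d(a_{m,i})$. The hypothesis $B\ge\mu^\star/2$ enters in exactly one place: since $\mu^\star=\max_a\mu_a$ and $\mu_a\le\inner{a,\theta}+\epsilon$, it gives $\inner{a^\star,\theta}\le\mu^\star+\epsilon\le 2B+\epsilon$, the only appearance of the optimal reward.

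First, because every action lies in the positive orthant we have $\inner{a,\theta}\ge 0$, so $\Delta_a=\inner{a^\star,\theta}-\inner{a,\theta}\le\inner{a^\star,\theta}\le 2B+\epsilon$ unconditionally. Substituting this into the delay penalty $\frac{2D\Delta_a}{|\calS_m|}$ of Lemma~\ref{lem:delta_1_reward_miss} produces a $\frac{4DB}{|\calS_m|}$ term (plus a harmless $\frac{2D\epsilon}{|\calS_m|}$), exactly the term that appears in the loss analysis.

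The crux, and the sole genuine departure from Lemma~\ref{lem:bound_2_mis}, is that the target reward bound contains no $DB$ term, so this $\frac{4DB}{|\calS_m|}$ must be absorbed into $\frac{12\sum_i|\lambda_{m,i}^{(a)}|\,d(a_{m,i})}{|\calS_m|}$ rather than merely retained (as it is on the loss side, where it is only discharged later in Theorem~\ref{thm:lossLBmis}). Here I would use that $a$ survives epoch $m$: Lemma~\ref{lem:epoch_B_with_mis-reward} gives $B\le\inner{a,\theta}+\rad_{m,a}^N+\sum_i|\lambda_{m,i}^{(a)}|\big(\tfrac{2d(a_{m,i})}{2^m|\calS_m|}+\tfrac{16\log T+2}{2^m}\big)+8\sqrt{|\calS_m|}\epsilon$, so a surviving reward is essentially at least $B$. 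Coupling this with the spanner/misspecification identity $D\inner{a,\theta}\le\sum_i|\lambda_{m,i}^{(a)}|\,d(a_{m,i})+D\sqrt{|\calS_m|}\epsilon$ (from $d(a)=D\mu_a$ and $\mu_a\le\sum_i|\lambda_{m,i}^{(a)}|\mu_{a_{m,i}}+\sqrt{|\calS_m|}\epsilon$) shows that $DB$ is paid for by the realized spanner delays up to the confidence radius, converting $\frac{4DB}{|\calS_m|}$ into $\frac{O(\sum_i|\lambda_{m,i}^{(a)}|d(a_{m,i}))}{|\calS_m|}$ plus cross terms.

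The hard part will be controlling those cross terms --- chiefly $D\,\rad_{m,a}^N$ and the $\tfrac{D}{2^m}$-weighted delay terms --- which are benign only once the epoch is long, $2^m\gtrsim D/|\calS_m|=D/(3n)$. In that regime I would peel off the confidence contribution with the same Term~1 versus Term~2 dichotomy used in Lemma~\ref{lem:bound_2_mis}: the $\rad_{m,a}^N$-dominated case feeds $\frac{256n\beta^2}{\Delta_a}$, the complementary case feeds $\frac{12\sum_i|\lambda_{m,i}^{(a)}|d(a_{m,i})}{|\calS_m|}+(128\log T+16)\sqrt n$, and the $\epsilon$-cross-terms fold into $2^m\cdot 64\sqrt n\,\epsilon$. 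The short epochs $2^m\lesssim D/(3n)$, where the conversion is lossy, contribute only $O(D\Delta_{\max})$ in total and are absorbed at the level of Theorem~\ref{thm:reward-main}, just as the $\lceil\log_2(72D)\rceil$ cutoff is used in the proof of Theorem~\ref{thm:lossLBmis}.
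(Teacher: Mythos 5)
Your overall scaffolding (combine \pref{lem:delta_1_reward_miss} with the survival bound of \pref{lem:epoch_B_with_mis-reward}, then a Term-1/Term-2 dichotomy) is in the right family, but the route you take through $DB$ has a genuine gap, and it is exactly the step the paper avoids. The paper's proof splits on $\inner{a,\theta}\gtrless B/2$. When $\inner{a,\theta}\ge B/2$, the hypothesis $B\ge\mu^\star/2$ gives $\inner{a^\star,\theta}\le 2B+\epsilon\le 4\inner{a,\theta}+\epsilon$, hence $\Delta_a\le 3\inner{a,\theta}$ up to $\epsilon$, so the delay penalty of \pref{lem:delta_1_reward_miss} converts directly: $\frac{2D\Delta_a}{|\calS_m|}\lesssim \frac{D\inner{a,\theta}}{|\calS_m|}\lesssim \frac{\sum_i|\lambda_{m,i}^{(a)}|\,d(a_{m,i})}{|\calS_m|}$ --- no $DB$ term is ever created. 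When $\inner{a,\theta}\le B/2$, the survival inequality is applied to $\Delta_a\le 3(B-\inner{a,\theta})$ itself and then multiplied by $2^m$, so the delay terms enter as $2^m\cdot\frac{2d(a_{m,i})}{2^m|\calS_m|}=\frac{2d(a_{m,i})}{|\calS_m|}$ exactly, and the radius enters as $2^m\rad_{m,a}^N\sim\beta\sqrt{n}\,2^{m/2}$, which the dichotomy kills via $\Delta_a\le O(\beta\sqrt{n}/2^{m/2})\Rightarrow 2^m\Delta_a\le O(n\beta^2/\Delta_a)$. Crucially, nothing in this argument requires $2^m$ to be large relative to $D$.

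Your route instead multiplies the survival inequality by $\frac{4D}{|\calS_m|}$, which creates $D$-scaled cross terms, chiefly $\frac{4D}{|\calS_m|}\rad_{m,a}^N\approx\frac{4D\beta}{\sqrt{3n}\,2^{m/2}}$. This term is not dominated by anything on the lemma's right-hand side: even at $2^m=D$ it is of order $\beta\sqrt{D/n}$, which for large $D$ exceeds both $(128\log T+16)\sqrt{n}$ and, when $\Delta_a$ is not small, $\frac{256n\beta^2}{\Delta_a}$; absorbing it by AM-GM reintroduces a constant fraction of $\frac{D\Delta_a}{n}$, which can be moved to the left-hand side only when $2^m\gtrsim D/n$ --- the long-epoch condition again. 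Your proposed fallback --- prove the bound only for $2^m\gtrsim D/(3n)$ and defer short epochs to \pref{thm:reward-main} --- proves a strictly weaker statement than the lemma, which is unconditional in $m$, and the deferral does not match the paper: the $\lceil\log_2(72D)\rceil$ cutoff in the proof of \pref{thm:lossLBmis} serves the recursive absorption of $L_{\Alg}^{m-1}$, a step the reward theorem does not perform (there the per-epoch delay terms $\sum_i d(a_{m-1,i})$ are summed as-is, with the lemma invoked for every epoch). The missing idea, concretely, is the case split on $\inner{a,\theta}$ versus $B/2$: it lets the hypothesis $B\ge\mu^\star/2$ convert $D\Delta_a$ into spanner delays directly when $\inner{a,\theta}$ is large, and lets the survival bound be used at scale $2^m$ rather than scale $D$ when $\inner{a,\theta}$ is small.
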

\begin{proof}
    If $\inner{a,\theta}\geq \frac{B}{2}$, we know that $\Delta_a = \inner{a^\star-a,\theta} \leq 3\inner{a,\theta}$. Using \pref{lem:delta_1_reward_miss}, we can obtain that
    \begin{align*}
        2^m\cdot \Delta_a &\leq 2^m\cdot 24\sqrt{n}\epsilon+\frac{256n\beta^2}{\Delta_a} + \frac{2D\inner{a,\theta}}{|\calS_m|} \\
        &\leq 2^m\cdot 24\sqrt{n}\epsilon+\frac{256n\beta^2}{\Delta_a} + \frac{2\sum_{i=1}^{|\calS_m|}|\lambda_{m,i}^{(a)}|\cdot d(a_{m,i})}{|\calS_m|}.
    \end{align*}
    If $\inner{a,\theta} \leq \frac{B}{2}$, we have $3(B-\inner{a,\theta} ) \geq \frac{3B}{2} \geq \inner{a^\star-a,\theta}$. Using \pref{lem:epoch_B_with_mis}, we know that
    \begin{align*}
        \Delta_a &\leq \mu_a \leq \underbrace{3\cdot \rad_{m,a}^{N}}_{\term{1}}+ \underbrace{3\sum_{i=1}^{|\calS_m|}|\lambda_{m,i}^{(a)}|\cdot\left(\frac{2d(a_{m,i})}{2^m|\calS_m|}+\frac{16\log T +2}{2^m}\right) + 24\sqrt{|\calS_m|}\epsilon}_{\term{2}}.
    \end{align*}

    If $\term{1}\geq \term{2}$, we have
    \begin{align*}
        \Delta_a &\leq \mu_a \leq 6\rad_{m,a}^{N} \epsilon \leq 6\sqrt{|\calS_m|}\max_{a_m\in\calS_m}\rad_{m,a_m}^N \leq \frac{12\beta\sqrt{n}}{2^{m/2}},
    \end{align*}
    meaning that $2^m\Delta_a \leq \frac{144n\beta^2}{\Delta_a}$.
    Otherwise, we have
    \begin{align*}
        \Delta_a\leq 6\sum_{i=1}^{|\calS_m|}|\lambda_{m,i}^{(a)}|\cdot \left(\frac{2d(a_{m,i})}{2^m|\calS_m|}+\frac{16\log T +2}{2^m}\right) + 96\sqrt{n}\epsilon,
    \end{align*}
    meaning that
    \begin{align*}
        2^m\Delta_a\leq \frac{12\sum_{i=1}^{|\calS_m|}|\lambda_{m,i}^{(a)}|\cdot d(a_{m,i})}{|\calS_m|}+(96\log T +12)\sqrt{n}+2^m\cdot 96\sqrt{n}\epsilon.
    \end{align*}
    Combining both cases, we know that
    \begin{align*}
        2^m\Delta_a\leq \frac{256n\beta^2}{\Delta_a} +\frac{12\sum_{i=1}^{|\calS_m|}|\lambda_{m,i}^{(a)}|\cdot d(a_{m,i})}{|\calS_m|}+(96\log T +12)\sqrt{n}+2^m\cdot 96\sqrt{n}\epsilon.
    \end{align*}
\end{proof}

Now we are ready to prove our main result \pref{thm:reward-main}.
\begin{proof}[Proof of Theorem~\ref{thm:reward-main}]
Combining \pref{lem:delta_1_reward_miss} and \pref{lem:bound_2_mis_reward} and following the exact same process of obtaining \pref{eqn:reg_b} in \pref{thm:lossLBmis}, we can obtain that for a fixed value of $B$, \pref{alg:rewardLBmis} guarantees that
    \begin{align*}
        \Reg_B &\leq \order\left(\min\left\{\frac{n^2\log(KT)\log(T/n)}{\Delta_{\min}}, n\sqrt{|\calT_B|\log(KT)}\right\}+\epsilon\sqrt{n}|\calT_B|\right)\\
        &\qquad + \sum_{m= 1}^{\lceil\log_2(|\calT_B|/3n\rceil}\sum_{a\in \calS_m}\order\left(\min\left\{\frac{\sum_{i=1}^{|\calS_{m-1}|}|\lambda_{m-1,i}^{(a)}|\cdot d(a_{m-1,i})}{n}, \frac{D\Delta_a}{n}\right\}\right) \\
        &\leq \order\Bigg(\min\left\{\frac{n^2\log(KT)\log(T/n)}{\Delta_{\min}}, n\sqrt{|\calT_B|\log(KT)}\right\}+\epsilon\sqrt{n}|\calT_B|\\
        &\qquad \left.+\min\left\{\sum_{m= 1}^{\lceil\log_2(|\calT_B|/3n\rceil}\sum_{i=1}^{|\calS_{m-1}|} d(a_{m-1,i}), D\Delta_{\max}\log(T/n)\right\}\right).
    \end{align*}
    According to \pref{lem:end-of-B-reward}, there are at most $\lceil\log_2(1/\mu^\star)\rceil$ different values of $B$. With an abuse of notation, we define $\calS_{m}^{(B)}=\{a_{m,i}^{(B)}\}_{i\in [3n]}$ to be the volumetric spanner at epoch $m$ with the reward guess $B$.
    Taking summation over all these values, we can obtain that
    \begin{align*}
        \Reg &\leq \order\left(\min\left\{\frac{n^2\log(KT)\log(T/n)\log(1/\mu^\star)}{\Delta_{\min}}, n\sqrt{T\log(KT)\log(1/\mu^\star)}\right\}+\epsilon\sqrt{n}T\right)\\
        &\qquad+\order\left(\min\left\{\sum_{j=0}^{\lceil\log_2(1/\mu^\star)\rceil}\sum_{m=1}^{\lceil\log_2(|\calT_{2^{-j}}|/3n\rceil}\sum_{i=1}^{3n} d(a_{m-1,i}^{(2^{-j})}), D\Delta_{\max}\log(1/\mu^\star)\log(T/n)\right\}\right),
    \end{align*}
    completing the proof.
\end{proof}    

    While we can further apply a similar analysis to the one in \pref{thm:lossLBmis} to bound the term $\sum_{j=0}^{\lceil\log_2(1/\mu^\star)\rceil}\sum_{m=1}^{\lceil\log_2(|\calT_{2^{-j}}|/3n\rceil}\sum_{i=1}^{3n} d(a_{m-1,i}^{(2^{-j})})$ and obtain a bound with respect to $d^\star$, since $d^\star\geq D\Delta_{\max}+\epsilon$, this $d^\star$ dependent bound does not provide a significantly better regret guarantee in the worst case. This  difference in loss versus reward is also pointed out in \citep{schlisselberg2024delay} in the MAB setting. We keep this term in the upper bound since this quantity can still be potentially smaller than $D\Delta_{\max}\log(1/\mu^\star)\log(T/n)$.

\section{Omitted Details in \pref{sec: contextual}}\label{app: contextual}
In this section, we provide the omitted details in \pref{sec: contextual}.
We start with the following lemma that is a standard application of the Azuma-Hoeffding's inequality.
\begin{lemma}[Proposition 2 in \citep{hanna2023contexts}]\label{lem:prop_two}
    For each epoch $m$, \pref{alg:reduction} guarantees that with probability at least $1-\frac{\delta}{T}$, the following holds:
    \begin{align*}
        \left|\inner{g(\theta),\theta'} - \inner{g^{(m)}(\theta),\theta'}\right| \leq 2\sqrt{\frac{\log(2T|\Theta'|/\delta)}{2^{m-1}}},~~\forall \theta,\theta'\in\Theta'.
    \end{align*}
\end{lemma}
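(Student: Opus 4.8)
The plan is to reduce the claim to a standard scalar concentration inequality applied pointwise over the fixed cover $\Theta'$, exploiting that the per-round action sets are i.i.d.\ draws from $\dist$. First I would fix the epoch $m$ and an arbitrary pair $(\theta,\theta')\in\Theta'\times\Theta'$, and introduce the scalar random variables $X_\tau \triangleq \inner{\argmin_{a\in\calA_\tau}\inner{a,\theta},\theta'}$ for $\tau=1,\dots,2^{m-1}$. Since $\calA_1,\dots,\calA_{2^{m-1}}$ are i.i.d.\ from $\dist$ and both $\theta$ and $\theta'$ are fixed (non-random) points of the cover, the $X_\tau$ are i.i.d. Because the minimizing action lies in $\R_+^n\cap\mathbb{B}_2^n(1)$ and $\theta'\in\R_+^n\cap\mathbb{B}_2^n(1)$, non-negativity together with Cauchy--Schwarz gives $X_\tau\in[0,1]$, so the summands are bounded.

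Next I would identify the two quantities in the statement as a population mean and an empirical mean of these variables. By linearity of expectation and the definition $g(\theta)=\E_{\calA\sim\dist}[\argmin_{a\in\calA}\inner{a,\theta}]$ we have $\E[X_\tau]=\inner{g(\theta),\theta'}$, while the definition of $\gup{m}$ in \pref{alg:reduction} gives exactly $\tfrac{1}{2^{m-1}}\sum_{\tau=1}^{2^{m-1}}X_\tau=\inner{\gup{m}(\theta),\theta'}$. Thus $\inner{g(\theta),\theta'}-\inner{\gup{m}(\theta),\theta'}$ is precisely the deviation of the empirical mean of $2^{m-1}$ i.i.d.\ $[0,1]$-valued variables from its expectation, and Hoeffding's inequality (the i.i.d.\ special case of Azuma--Hoeffding) yields, for any $t>0$,
\[
\Prob\!\left(\left|\inner{g(\theta),\theta'}-\inner{\gup{m}(\theta),\theta'}\right|\ge t\right)\le 2\exp\!\left(-2\cdot 2^{m-1} t^2\right).
\]

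Finally I would take a union bound over all $|\Theta'|^2$ pairs $(\theta,\theta')\in\Theta'\times\Theta'$, allocating failure probability $\delta/(T|\Theta'|^2)$ to each. Solving $2\exp(-2\cdot 2^{m-1}t^2)=\delta/(T|\Theta'|^2)$ gives $t=\sqrt{\log(2T|\Theta'|^2/\delta)/(2\cdot 2^{m-1})}$, and since $2T/\delta\ge 1$ implies $\log(2T|\Theta'|^2/\delta)\le 2\log(2T|\Theta'|/\delta)$, this $t$ is at most $\sqrt{\log(2T|\Theta'|/\delta)/2^{m-1}}\le 2\sqrt{\log(2T|\Theta'|/\delta)/2^{m-1}}$, matching the claimed bound with room to spare. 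I do not expect any serious obstacle: the only points requiring care are confirming the boundedness $X_\tau\in[0,1]$ from the sign and norm constraints on the actions and on $\theta'$, and the union-bound bookkeeping, where the pairing over $\Theta'\times\Theta'$ formally produces $|\Theta'|^2$ events that are absorbed into the stated $|\Theta'|$ by the leading factor of $2$.
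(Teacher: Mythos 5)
Your proof is correct and follows essentially the same route the paper intends: the statement is imported from \citet{hanna2023contexts} (Proposition 2) and described in the paper as a standard Azuma--Hoeffding application, which is exactly your argument of viewing $\inner{\gup{m}(\theta),\theta'}$ as an empirical mean of i.i.d.\ $[0,1]$-valued variables $X_\tau=\inner{\argmin_{a\in\calA_\tau}\inner{a,\theta},\theta'}$ (bounded via the positive-orthant and unit-ball constraints) and taking a union bound over $\Theta'\times\Theta'$. Your bookkeeping is also sound: the per-pair budget $\delta/(T|\Theta'|^2)$ and the absorption of $|\Theta'|^2$ into the stated $\log(2T|\Theta'|/\delta)$ via $\log(2T|\Theta'|^2/\delta)\leq 2\log(2T|\Theta'|/\delta)$ land comfortably within the claimed constant.
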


Next, we provide the proof for \pref{thm:reduction}.
\begin{proof}[Proof of Theorem~\ref{thm:reduction}]
    Define $\theta_0 = \argmin_{\theta'\in \Theta'}\|\theta'-\theta\|_2$. 
    Following the analysis of \citet{hanna2023contexts}, we decompose the regret $\Reg_m$ within epoch $m$ as follows:
    \begin{align*}
        \Reg_m &=\E\left[\sum_{\tau=2^{m-1}+1}^{2^m}\left(\inner{\argmin_{a\in\calA_t}\inner{a,\theta_t},\theta} - \min_{a_\tau^\star\in\calA_\tau}\inner{a_{\tau}^\star,\theta}\right)\right] \\
        &\leq \E\left[\sum_{\tau=2^{m-1}+1}^{2^m}\left(\inner{\argmin_{a\in\calA_t}\inner{a,\theta_t},\theta_0} - \min_{a_\tau^\star\in\calA_\tau}\inner{a_{\tau}^\star,\theta_0}\right)\right] + \order\left(\frac{2^{m-1}}{T}\right)\\
        &= \E\left[\sum_{\tau=2^{m-1}+1}^{2^m}\inner{g(\theta_t)-g(\theta_0),\theta_0}\right] + \order\left(\frac{2^{m-1}}{T}\right)\\
        &= \E\underbrace{\left[\sum_{\tau=2^{m-1}+1}^{2^{m}}\inner{g(\theta_t)-g^{(m)}(\theta_t),\theta_0}\right]}_{\Err{1}}  + \E\underbrace{\left[\sum_{\tau=2^{m-1}+1}^{2^m}\inner{g^{(m)}(\theta_t)-g^{(m)}(\theta_0),\theta_0}\right]}_{\regnctx} \\
        &\qquad + \underbrace{\E\left[\sum_{\tau=2^{m-1}+1}^{2^m}\inner{g^{(m)}(\theta_0)-g(\theta_0),\theta_0}\right]}_{\Err{2}} +\order\left(\frac{2^{m-1}}{T}\right),
    \end{align*}
    where the second equality is because
    $\E\left[\min_{a\in\calA_t}\inner{a,\theta_0}\right] = \E\left[\inner{\argmin_{a\in\calA_t}\inner{a,\theta_0},\theta_0}\right] = \inner{g(\theta_0), \theta
    _0}$.
    
    For $\Err{1}$ and $\Err{2}$, we apply \pref{lem:prop_two} to bound both terms  by $\order\left(\sqrt{2^m\log(T|\Theta'|)}\right)$.
    As for $\regnctx$, this is in fact the regret of misspecified non-contextual linear bandits with action set $\calX_m$ and misspecification level $\max_{\theta'\in \Theta'}\left|\inner{g^{(m)}(\theta')-g(\theta'),\theta}\right|$, since $\E[u_t]=\inner{g(\theta_t),\theta}$ for all $t$. Applying \pref{lem:prop_two} again, we know that the misspecification is of order $\epsilon_m=\order(\sqrt{\log(T|\Theta'|)/2^m})$ with probability at least $1-\frac{1}{T^2}$. 
    Then, applying the regret guarantee of \pref{alg:lossLBmis} proven in \pref{thm:lossLBmis}, we know that
    \begin{align*}
        &\regnctx 
        \leq \order\left(\sqrt{n2^m\log(T|\Theta'|)}\right)
        +\order\left(\min\{V_{m,1},V_{m,2}\},\log(\overline{d}^\star)\min\{W_{m,1},W_{m,2}\}\right),
    \end{align*}
    where $V_{m,1}=\frac{n^2\log(T|\Theta'|)\log(T/n)\log(\overline{d}^\star)}{\Delta_{\min}^{\nctx}}$, $V_{m,2}=n\sqrt{2^m\log(\overline{d}^\star)\log(T|\Theta'|)}$, 
    $W_{m,1}=n\overline{d}^\star\log(T/n)+D\Delta_{\max}^{\nctx}$, and $W_{m,2}=D\Delta_{\max}^{\nctx}\log(T/n)$.
    Taking a summation over all $m\in[\lceil\log_2(T)\rceil]$ epochs and using the fact that $|\Theta'|\leq \order(T^n)$ finishes the proof.
\end{proof}
\newpage
\section{Omitted Details in \pref{sec: experiment}}\label{app: experiment}
For completeness, we include the pseudo code for the benchmark used in our experiment, that is, \texttt{LinUCB} using only the observed feedback;
see \pref{alg:linUCB}.

\begin{algorithm}[h]
\caption{LinUCB with Delayed Feedback}\label{alg:linUCB}
Input: action set $\calA$, a parameter $\lambda>0$.

Initialize: $\wh{\theta}_1$ arbitrarily, $\beta_t = \sqrt{\lambda} + \sqrt{2\log T+n\log(1+\frac{t}{n\lambda})}$ for all $t\in[T]$, $H_1 = \lambda I$.

\For{$t=1,2,\dots,T$}{
     Pick 
     \begin{align*}
         a_t=
         \begin{cases}
            \argmin_{a\in\calA} \inner{a,\wh{\theta}_t} - \beta \|a\|_{H_t}^{-1}, &\mbox{in the loss case,} \\
            \argmax_{a\in\calA} \inner{a,\wh{\theta}_t} + \beta \|a\|_{H_t}^{-1}, &\mbox{in the reward case.}
        \end{cases}
     \end{align*}
     
     Observe the payoff $u_\tau$ for all $\tau$ such that $\tau+d_{\tau}\in (t-1,t]$.

     Update $H_{t+1} = H_t + \sum_{\tau:\tau+d_{\tau}\in(t-1,t]}a_{\tau}a_{\tau}^\top$ and $\wh{\theta}_{t+1}=H_{t+1}^{-1}\sum_{\tau:\tau+d_{\tau}\leq t}a_{\tau}u_{\tau}$.
     
}
\end{algorithm}

\end{document}